\def\ARXIVVERSION{1}
  \newcommand{\arxivonly}[1]{#1}
  \newcommand{\arxivonly}[1]{}
\newcommand{\person}[2]{%
  \definecolor{person@col@#1}{named}{#2}%
  \expandafter\newcommand\csname #1\endcsname[1]{%
    \textcolor{person@col@#1}{\textbf{[#1: ##1]}}}%
}
\begin{document}
\maketitle
\section{Introduction}\label{sec:intro}

Monte Carlo (MC) is a fundamental tool for evaluating complex stochastic models where closed-form solutions are unavailable. Its main strengths, namely unbiasedness and rigorous uncertainty quantification through variance estimation and confidence intervals, make it essential in high-stakes, high-complexity applications such as healthcare resource allocation and financial risk management. 
For example, transparent and bias-aware methods in healthcare are vital to ensure equitable outcomes and prevent disparities in care access \citep{rajkomar2018ensuringfairness,kamiran2012fairness,henrylamwsc_ambul}. Similarly, financial risk assessments rely on MC estimation to support  hedging, re-balancing decisions, and valid error bounds, as minor biases in financial simulations can accumulate \citep{broadie1996estimating}.

Despite these advantages, classical MC faces significant challenges in nested, multilevel, or path-dependent simulations that are difficult to parallelize. Complex stochastic dynamics can make each sample path extremely costly to generate, sometimes taking hours for one sample (e.g., to estimate credit valuation adjustments in over-the-counter contracts, see \cite {green2015xva,andersen2017xva}), and effective variance reduction methods are often unavailable \citep{asmussen2007stochastic}. Consequently, MC's inherent $\mathcal{O}(1/\sqrt{n})$ convergence rate often prohibits real-time decision-making with adequate precision. This is the case, for example, in the context of   pricing of exotic options, where sample paths need to be generated sequentially under complex dynamics, making parallelization not easily achievable \citep{glasserman2013monte}. 
These computational challenges are critical in financial derivative pricing, as  highlighted in the literature, for example, in \citet[Chapter 21]{hull2018options} and \citet{broadie1996estimating}. Similar computational bottlenecks arise in healthcare simulation, where modeling complex patient flows and hospital operations can require extensive computation time but transparent and bias-aware methods are crucial to ensure equitable outcomes \citep{rajkomar2018ensuringfairness,kamiran2012fairness,henrylamwsc_ambul}.

Conversely, machine learning (ML) based predictive modeling has emerged as a fast alternative for approximating MC evaluations, including pricing financial derivatives  \citep{hutchinson1994nonparametric,ferguson2018deep, bayer2019deep}. Neural networks and other ML techniques can learn complex mappings and, once trained, can generate predictions at negligible marginal cost - what practitioners often call ``fast evaluation" \citep{horvath2021deep}. However, their black-box nature and lack of inherent statistical control hinder the quantification of errors, rendering them unsuitable for tasks requiring rigorous reliability assessments, such as high-stakes applications in risk management \citep{hutchinson1994nonparametric,buehler2019deep}.  Thus, MC and ML represent opposite ends of the spectrum: MC is reliable and uncertainty quantifiable but intrinsically slow, while ML is computationally fast but falls short in statistical guarantees.

 Given this trade-off between the speed of ML and the reliability of MC, a natural question arises: can we design a method that combines both advantages while avoiding their respective pitfalls? In this paper, we provide a positive answer by introducing the \emph{Prediction-Enhanced Monte Carlo (PEMC)} framework. PEMC leverages predictive ML modeling to replace a large portion of costly simulations with cheap parallelizable samples from ML models, reducing variance at a cost-aware scheme-wide level. Moreover, despite relying on black-box predictive models, PEMC maintains unbiasedness, balancing computational savings with rigorous statistical guarantees.

\subsection{Our Contributions}

PEMC innovates in two aspects. First, PEMC integrates ML predictive models into any MC baseline while preserving {unbiasedness} and {error quantification} of MC. Specifically, we show how the predictive modeling of ML combined with cheap, parallelizable samples will result in a reduction of variance and costly samples, enhancing real-time efficiency for complex MC simulations. Second, PEMC can also be viewed as a modernized variant of the control variate (CV) technique, a classical approach that reduces MC variance via auxiliary outputs whose known mean offsets random fluctuations \citep{glasserman2013monte, asmussen2007stochastic,nelson1990control}. Unfortunately, the classical CV approach requires a closed-form expression for the mean of a correlated auxiliary output, which is often unavailable in many modern applications or complex stochastic systems, effectively barring the use of CV. 

To explain concretely the second point above, our PEMC framework bridges ML with MC by substantially enlarging the scope of CV to many ML candidates. The key idea is the following: In the traditional CV framework, the known mean of CV is used to achieve a per-replication variance reduction while maintaining unbiasedness, i.e., the variance is reduced even if we run just one simulation replication. While this is a powerful feature, it comes at the cost of significantly limiting CV’s applicability, due to the requirement of a closed-form expression for the mean. However, if the CV itself is computationally cheap in the sense of being estimatable with fast simulation, then we can afford to run plenty of additional MC \emph{separately} from the original MC to estimate its mean. This would preserve unbiasedness and reduce overall estimation variance relative to the total computation cost, as long as we consider efficiency improvement at the \emph{scheme-wide} level. That is, in the PEMC framework, we consider total cost-aware instead of per-replication variance reduction, through which we relax the crucial requirement of known CV mean, and this in turn allows us to substantially expand the CV methodology to cover modern ML models. The alignment of our framework with the use of ML predictions prompts us to regard PEMC as a modernized view of CV.



\subsection{Literature Review}

The literature on enhancing MC simulations with ML approximators is extensive, yet many existing ML-based methods for CV and variance reduction impose restrictive conditions that appear to limit their practical applicability. Approaches such as reproducing Stein kernels \citep{oates2017}, neural CV derived from Stein's identity \citep{wan2020neural,lam2023doubly}, regularized least squares for CV construction \citep{portier2018,south2022}, adaptive CV schemes \citep{henderson2002,henderson2004,kim2007}, and $\mathscr{L}^2$ function approximation frameworks \citep{maire2003} have demonstrated theoretical promise with potentially rapid convergence rates. Recent theoretical work by \cite{blanchet2024can} provides important insights into when regression-adjusted CVs can achieve optimal performance, showing that their effectiveness depends critically on function smoothness. However, these methods typically require knowledge of the CV mean, in turn restricting the classes of usable CV, or impose specific structural assumptions that can be difficult to check. For example, in applying neural CVs mentioned above, one faces the dilemma of either imposing strong assumptions such as Stein's identity or score functions to guarantee zero mean (\cite{belomestny2024theoretical,oh2025control,wan2020neural}) or through normalization-based decomposition (\cite{muller2020neural}), which in turn constrains the CV to restrictive functional forms, or else forfeiting the unbiasedness guarantee. 
As a result, while methods like neural CVs offer strong theoretical foundations, their applicability in complex modeling scenarios can be undermined. PEMC avoids this trade-off through the scheme-wide, cost-aware view introduced in Section 1.1, which relaxes the mean-knowledge requirement without restricting the functional form of the predictor.



In a different line of literature, the works of \cite{emsermann2002improving} on quasi CVs and \cite{pasupathy2012control}  on CVs using Estimated Means (CVEMs) is related to our approach, because they rely on an estimated mean (rather than a priori known, and thus not a traditional CV) of a quantity that is expected to exhibit a strong correlation with the random variable of interest. There are noticeable differences between our methodology and theirs.
To begin with, our primary objective is to integrate modern ML capabilities into MC, so that our estimates can retain the error-quantifiability of the latter approach while leveraging the strengths of ML. To this end, a core component of our methodology regards what is needed, and how to train a suitable CV, rather than relying on a prefixed CV candidate, which may not always be obvious or even available. Moreover, our investigation includes the use of a pre-training approach in place of the adaptive algorithms employed in previous works, and provides a framework applicable to both finite-sample and asymptotic regimes, unlike the exclusively asymptotic focus of prior studies. 


Within the literature of MC variance reduction, PEMC also relates to Multilevel Monte Carlo (MLMC) methods, originally introduced and popularized by \cite{giles2015multilevel}, and the idea of multi-fidelity modeling popular in scientific computation \citep{peherstorfer2018survey}. MLMC reduces variance by leveraging a hierarchy of discretizations (e.g., coarser and finer time steps in SDE simulations) and coupling them to achieve computational efficiency over naive Monte Carlo approaches \citep{giles2008multilevel}. The key idea is to couple simulations across resolution levels so that their differences exhibit reduced variance—or even enable unbiased estimators \citep{rhee2015unbiased, rychlik1990unbiased}. Along a similar vein, in multi-fidelity modeling, computational simulation models of different levels of resolution, which in turn bring in different accuracies and computational demands, are jointly utilized and concatenated, with input model parameters treated as random variables that allow coupling between the model levels \citep{peherstorfer2018survey}.
Note that, despite the strong theoretical foundations, MLMC can be difficult to implement in practice, since constructing effective couplings, ensuring numerical stability, and tuning parameters to realize the theoretical complexity benefits are nontrivial tasks \citep{prescott2020multifidelity}. PEMC adopts the coupling paradigm of MLMC and multi-fidelity modeling but innovates by using ML predictors as CV.

Outside the MC literature, our PEMC framework aligns most closely with the recently proposed Prediction-Powered Inference (PPI) approach \citep{angelopoulos2023prediction}. PPI is a statistical framework that enables researchers to construct valid confidence intervals and p-values when combining a small dataset with gold-standard labels and a large dataset with ML predictions. The key challenge PPI addresses is how to leverage abundant but potentially biased ML predictions alongside scarce but trustworthy ground-truth data for statistical inference. Such PPI-type hybrid strategy is becoming increasingly important in evaluating generative models such as large language models (LLMs) \citep{zrnic2024cross,boyeau2024autoeval,eyre2024auto}, where sparse human annotations is combined with abundant ML-generated labels to facilitate accurate model performance evaluation even in low-label scenarios. 


Finally, our work is also related to a branch of literature in computational finance, which has focused on  harnessing ML to enhance the speed and accuracy of derivatives pricing. By using techniques such as neural network surrogates, these methods can approximate pricing functions or the Greeks at a fraction of the computational cost of traditional numerical schemes \citep{hutchinson1994nonparametric,buehler2019deep,sirignano2018dgm,rufwang2021review}. Early work by \citet{hutchinson1994nonparametric}, for instance, uses neural networks to relax stringent modeling assumptions, while more recent approaches \citep{buehler2019deep,sirignano2018dgm,rufwang2021review} demonstrate that deep architectures can effectively handle high-dimensional and path-dependent pricing problems. However, these purely data-driven approximations typically lack the rigorous error controls afforded by more classical methods. In the context of  option pricing, even though both MC and partial differential equation (PDE)-based methods are widely used, it is known that PDE methods generally cannot be used to price path-dependent
 options (i.e., payoff depends on the price path). While special cases allow for PDE formulations through state-space augmentation or boundary condition adjustments, these remain as exceptions rather than the rule \citep{sabate2021unbiased}.


The rest of the paper is organized as follows. In Section \ref{sec:framework}, we motivate and introduce PEMC. In Section \ref{intopemc}, we provide more details on our framework via a running example of Asian option pricing. In Section \ref{theoryresult},
we present theoretical results and their practical implications. In Section \ref{apps}, we apply PEMC to production-grade exotic option pricing, including variance swaps pricing under stochastic (local) volatility models and the pricing of swaptions under the HJM framework, as well as an ambulance diversion problem. We conclude the paper in Section \ref{conclude} with discussions on broader implications and potential directions for future research.

\section{The Prediction-Enhanced Monte Carlo Framework}\label{sec:framework}
We provide an overview and motivational explanation of our PEMC framework.
Suppose we want to produce an unbiased evaluation of the quantity
\begin{equation}\label{icmlmain}
    \mu(\boldsymbol{\theta})=\mathbb E_{\boldsymbol{\theta}}[f_{\boldsymbol{\theta}}(\boldsymbol{Y})]
\end{equation}
for arbitrary, potentially high-dimensional parameter $\boldsymbol{\theta}\in\boldsymbol\Theta\subseteq\mathbb R^{d_\theta}$, where $\Theta$ is the parameter space. Here, $\boldsymbol{Y}\in\mathbb R^{d_y}$ represents a high-dimensional random vector whose probability distribution is parameterized by $\mathbb E_{\boldsymbol{\theta}}$ and $f_{\boldsymbol{\theta}}$ is the evaluation function $\mathbb R^{d_y}\to\mathbb R$. In general, both $f_{\boldsymbol{\theta}}$ and $\mathbb E_{\boldsymbol{\theta}}$ can depend on $\boldsymbol{\theta}$. The goal is to produce an unbiased estimate of $\mu(\boldsymbol{\theta})$ for any $\boldsymbol{\theta}\in\boldsymbol\Theta$, and the primary bottleneck stems from costly simulation of $\boldsymbol{Y}$.

\subsection{Existing Challenges of MC, ML, and CV}

 When $\boldsymbol Y$ can be simulated and $f_{\boldsymbol{\theta}}$ evaluated, then standard MC amounts to generating many $f_{\boldsymbol{\theta}}(\boldsymbol Y_i)$, $i=1,\ldots,n$ and outputting their average. This approach is unbiased and its statistical error can be quantified straightforwardly via standard confidence intervals. However, it is demanding when evaluating $f_{\boldsymbol{\theta}}$ and simulating $\boldsymbol Y$ is expensive. 
 
 To reduce computation, one can approximate $\mu(\boldsymbol{\theta})$ with a ML regression model, say $\hat\mu(\boldsymbol{\theta})$, which can be evaluated more cheaply than simulating many $f_{\boldsymbol{\theta}}(\boldsymbol Y_i)$'s. This natural approach, however, would face challenges at multiple levels when reliability and error quantification is important. From a theoretical standpoint, ML generalization bounds typically assess training errors only within the model class, i.e., $\hat\mu(\boldsymbol{\theta})-\mu^*(\boldsymbol{\theta})$ where $\mu^*(\cdot)$ is the best in-class model that can be different from $\mu(\cdot)$. Moreover, even assuming $\mu^*(\cdot)$ is a ``rich" class, these bounds still only assess errors in an aggregate, worst-case fashion, i.e., high-probability bounds treating $\boldsymbol{\theta}$ as a random covariate. On the practical front, training-testing split or cross-validation succumbs to similar issues, in that it cannot measure bias against $\mu(\cdot)$, nor at the level of each $\boldsymbol{\theta}$. That is, in terms of reliability and uncertainty handling, MC appears superior by providing accurate, straightforward and instance-specific error assessment on each estimate of $\mu(\boldsymbol{\theta})$. None of these advantages are available for ML despite its computational edge.

 To leverage the respective advantages of MC and ML, one can use CV which, as mentioned in Section~\ref{sec:intro}, is a classical variance reduction method. CV aims to reduce the error of each MC run, consequently the required replication size, by utilizing information from ``auxiliary'' simulation outputs. Specifically, in generating $f_{\boldsymbol{\theta}}(\boldsymbol{Y}_i)$, suppose we can also generate auxiliary output $\boldsymbol{X}_i$ and evaluate $g(\boldsymbol{\theta},\boldsymbol{X}_i)$ for some function $g$. Then we can use $f_{\boldsymbol{\theta}}(\boldsymbol{Y}_i)-g(\boldsymbol{\theta},\boldsymbol{X}_i)+\mathbb E[g(\boldsymbol{\theta},\boldsymbol{X})]$ as an unbiased replicate for $\mathbb E_{\boldsymbol{\theta}}[f_{\boldsymbol{\theta}}(\boldsymbol{Y})]$. In particular, if $f_{\boldsymbol{\theta}}(\boldsymbol{Y})$ and $g(\boldsymbol{\theta},\boldsymbol{X})$ are highly correlated, this replicate will have a smaller variance than $f_{\boldsymbol{\theta}}(\boldsymbol{Y}_i)$. 

Naturally, one can think of using ML to train a good predictor $g(\boldsymbol{\theta},\boldsymbol{X})$ for $f_{\boldsymbol{\theta}}(\boldsymbol{Y})$, which would be highly correlated with $f_{\boldsymbol{\theta}}(\boldsymbol{Y})$ and serve as an effective CV. However, taking aside the question of how to construct $\boldsymbol{X}$ and $g$, the CV method requires knowledge of $\mathbb E_{\boldsymbol{\theta}}[g(\boldsymbol{\theta},\boldsymbol{X})]$, the closed-from mean on the auxiliary output. This knowledge is a key requirement for CV to achieve variance reduction. Yet, it is seldom available, which has severely limited the applicability of CV in practice \citep{glasserman2013monte}.

\subsection{Our Remedy and Requirements}\label{pemc overview}
Our PEMC aims to leverage the CV idea, with a twist to circumvent the need on the closed-form mean knowledge in order to allow for effective integration of MC with modern ML. This requires crucially a scheme-wide cost-aware view on variance reduction. Concretely, PEMC aims to estimate \eqref{icmlmain} by
\begin{equation}\label{overviewpemc}
    \hat{\mu}_{PEMC}(\boldsymbol{\theta})
    :=\frac{1}{n}\sum_{i=1}^n\big(f_{\boldsymbol{\theta}}(\boldsymbol{Y}_i)- g(\boldsymbol{\theta},\boldsymbol{X}_i)\big)+\frac{1}{N}\sum_{j=1}^N g(\boldsymbol{\theta},\boldsymbol{\tilde X}_j).
\end{equation}

Here, $\boldsymbol{X}$ is an auxiliary variable that is dependent on  $\boldsymbol{Y}$. Suppose, for now,  that we are given the $g$ function and the $\boldsymbol{X}$ variable. In \eqref{overviewpemc}, each pair of $(\boldsymbol{X}_i, \boldsymbol{Y}_i)$ is coupled and simulated together. On the other hand, each $\tilde{\boldsymbol{X}}_j$ is simulated independently from all $(\boldsymbol{X}_i, \boldsymbol{Y}_i)$'s, and has the same marginal distribution as $\boldsymbol{X}$. In this case, it is straightforward to see that \eqref{overviewpemc} is an unbiased estimator of \eqref{icmlmain}. On the other hand, in terms of cost efficiency, the total simulation cost consists of
\begin{equation}
[\text{cost of generating each\ }f_{\boldsymbol{\theta}}(\boldsymbol{Y}_i)-g(\boldsymbol{\theta},\boldsymbol{X}_i)]\times n+ [\text{cost of generating each\ }g(\boldsymbol{\theta},\tilde{\boldsymbol{X}}_j)]\times N\label{cost rough}
\end{equation}
while the total variance is
\begin{equation}
\frac{Var(f_{\boldsymbol{\theta}}(\boldsymbol{Y}_i)-g(\boldsymbol{\theta},\boldsymbol{X}_i))}{n}+\frac{Var(g(\boldsymbol{\theta},\tilde{\boldsymbol{X}}_j))}{N}\label{var rough}
\end{equation}

Suppose that $g$ and $\boldsymbol{X}$ are designed such that the cost of generating each $g(\boldsymbol{\theta},\tilde{\boldsymbol{X}}_j)$ is relatively negligible compared to each original MC target replicate $f_{\boldsymbol{\theta}}(\boldsymbol{Y}_i)$. Then, the total cost \eqref{cost rough} will become approximately $[\text{cost of evaluating each\ }f_{\boldsymbol{\theta}}(\boldsymbol{Y}_i)-g(\boldsymbol{\theta},\boldsymbol{X}_i)]\times n$, while the total variance \eqref{var rough} will become approximately $Var(f_{\boldsymbol{\theta}}(\boldsymbol{Y}_i)-g(\boldsymbol{\theta},\boldsymbol{X}_i))/n$. In this case, the cost-variance structure is the same as the conventional CV method: If $g(\boldsymbol{\theta},\boldsymbol{X})$ is a good predictor for $f_{\boldsymbol{\theta}}(\boldsymbol{Y})$, then $Var(f_{\boldsymbol{\theta}}(\boldsymbol{Y}_i)-g(\boldsymbol{\theta},\boldsymbol{X}_i))$ is smaller than $Var(f_{\boldsymbol{\theta}}(\boldsymbol{Y}_i))$ and thus we attain a variance reduction to the overall estimator. Like in the conventional CV method, to achieve a total reduction from $Var(f_{\boldsymbol{\theta}}(\boldsymbol{Y}_i))/n$ to $Var(f_{\boldsymbol{\theta}}(\boldsymbol{Y}_i)-g(\boldsymbol{\theta},\boldsymbol{X}_i))/n$, we generate $n$ samples of $f_{\boldsymbol{\theta}}(\boldsymbol{Y}_i)-g(\boldsymbol{\theta},\boldsymbol{X}_i)$, i.e., variance reduction is achieved per replication. 




While the above depicts the idealized situation where we assume away the cost of generating $g(\boldsymbol{\theta},\tilde{\boldsymbol{X}}_j)$, it provides guidance on what we need practically to elicit a cost-variance structure in \eqref{overviewpemc}, as revealed by \eqref{cost rough} and \eqref{var rough}, that is more favorable than naive MC. First, we need $g(\boldsymbol{\theta},\boldsymbol{X})$ to serve as a good predictor for $f_{\boldsymbol{\theta}}(\boldsymbol{Y})$, but also that $\boldsymbol{X}$ has a marginal distribution that is both efficiently simulatable and highly parallelizable, allowing a large number $N$ of independent samples of $\tilde{\boldsymbol{X}}_j$ with the same marginal as $\boldsymbol{X}$ to be generated at low cost. To this end, the selection of an appropriate $\boldsymbol{X}$ is a pivotal component of PEMC, analogous to feature selection in common ML tasks but with the additional need of being an efficiently simulatable, parallelizable random variable. Finding a good $\boldsymbol{X}$ often requires domain knowledge. In particular, given the whole path $\boldsymbol{Y}$, one can choose  $\boldsymbol{X}:=\phi(\boldsymbol{Y})$ to be some low-dimensional transformation $\phi$ of $\boldsymbol{Y}$ (e.g., ingredients for constructing $\boldsymbol{Y}$). For instance, for stochastic diferential equations (SDEs), one can define $\boldsymbol{X}$ as the sum of driving Brownian increments that generate $\boldsymbol{Y}$. This choice could capture a substantial portion of the SDE's variance, making $\boldsymbol{X}$ a good feature for predicting $f(\boldsymbol{Y})$, while its Gaussianity allows it to be generated cheaply and in parallel. 

Second, we require a function $g$ to convert $\boldsymbol{\theta}$ and $\boldsymbol{X}$ into a good predictor for $f_{\boldsymbol{\theta}}(\boldsymbol{Y})$. In this work, $g$ is pre-trained using a squared-error loss, with MC sample drawn under $(\boldsymbol{X}, \boldsymbol{Y})\sim\mathbb P_{\boldsymbol{\theta}}$ for various $\boldsymbol{\theta}\sim\boldsymbol{\Theta}$, which aims to obtain a function
$g\bigl(\boldsymbol{\theta}, \boldsymbol{X}\bigr)\approx\mathbb{E}_{\boldsymbol{\theta}}\bigl[
f_{\boldsymbol{\theta}}\bigl(\boldsymbol{Y}\bigr)\big|
\boldsymbol{\theta}, \boldsymbol{X}
\bigr]$. Note that this pre-training can be resource-consuming and takes significant amount of efforts. However, once this pre-training is done, the function $g$ and our approach can be used to improve the efficiency in evaluating $\mu(\boldsymbol{\theta})$ for any $\boldsymbol{\theta}$. For example, in the context of option pricing, the pre-trained $g$ can be stored as part of a model library tailored to a specific class of exotic options, enabling PEMC to be applied directly whenever new parameter configurations arise.





\section{An Illustrative Case Study: Pricing Asian Options with PEMC}\label{intopemc}

In this section, we employ Asian option pricing as a running example to explain the detailed steps and their implications in PEMC. Asian options are path-dependent derivatives whose payoffs depend on the average price of the underlying asset over a specified period. As mentioned in \cite{boyle1997monte} and \cite{dingec2015variance}, closed-form solutions for pricing arithmetic average Asian options are generally not available. However, when the asset price dynamic follows a geometric Brownian motion (GBM), there exists an excellent CV candidate, using the geometric average instead of the arithmetic average, whose mean is precisely known via the Black-Scholes formula. This latter case often serves as the ``textbook" example to demonstrate the power of CV \citep{glasserman2013monte}. As an important message, our case study here illustrates how PEMC can achieve a comparable level of improvements as the ``textbook" CV, for evaluating arithmetic average Asian option price when the dynamics does \emph{not} follow GBM. In other words, PEMC is capable to extend the scope of classical CV to potentially a much wider range of problems.




\subsection{Problem Setup}

The quantity of interest is an evaluation that can be expressed via:\begin{equation}\label{main}
\text{Option Price}(\boldsymbol{\theta}) = \mathbb E_{\text{risk neutral measure}(\boldsymbol{\theta})} [f_{\text{payoff}(\boldsymbol{\theta})}(\boldsymbol{Y})],
\end{equation}
where the risk neutral measure is a probability measure where all financial securities are valued as if investors are indifferent to risk, and thus all assets are assumed to earn the risk-free rate \citep{cvitanic2004introduction}.\footnote{Under this measure, prices of financial securities are the discounted expected values of their future payoffs.}  For Asian option, we can specify the framework as:\begin{enumerate}
    \item {Model Parameters} $\boldsymbol{\theta}_{\text{model}}$.  These are the parameters that specify the simulation or the stochastic model of the underlying process $\boldsymbol{Y}$ (e.g., asset value time series). For instance, in a Heston model \citep{heston1993closed}, the price process $S_t$ and the instantaneous volatility process $\nu_t$ are jointly modeled:
\begin{align}\label{heston}  
dS_t =& r S_tdt + \sqrt{\nu}_tS_tdW_t^S\nonumber\\
    d{\nu}_t = & \kappa(\eta-\nu_t)dt+\delta \sqrt{\nu_t}dW_t^\nu
\end{align}
The model parameter $\boldsymbol{\theta}_{\text{model}} := (r,\eta,\delta,\rho,\kappa)\in\mathbb R^5$ is thus 5 dimensional.   It specifies the risk neutral drift $r$, the long-run average variance $\eta$, the volatility of volatility $\delta$, the correlation $\rho$ between $W^S$ and $W^\nu$, and the mean reversion rate $\kappa$. In option pricing, these parameters are typically calibrated to fit market data and then used to generate sample paths $\boldsymbol{Y}:=\{S_t\}_{t}$. Thus $\boldsymbol{\theta}_{\text{model}}$ governs and describes the risk neutral measure that generates $\boldsymbol{Y}$.
    
    \item {Simulation Parameters} $\boldsymbol{\theta}_{\text{simulation}}$. These parameters also specify the risk neutral measure. The distinction between $\boldsymbol{\theta}_{\text{simulation}}$ and $\boldsymbol{\theta}_{\text{model}}$  primarily stems from the convention that not all simulation parameters are calibrated externally, but rather serve as hyperparameters of the MC simulation itself (either customized or directly observed). For example, when one simulates the Heston model above using Euler discretization scheme,\footnote{As mentioned in \cite{dingec2015variance}, Euler scheme was the primary method for simulation of Heston model until the exact simulation of \cite{broadie2006exact} is developed. However, due to its significant computational demands, the exact method is less commonly used in practice and simpler, biased simulation method such as Quadratic Exponential \citep{andersen2008simple} or Euler scheme remains prevalent (see, e.g., \cite{lord2010comparison}).} we use $\boldsymbol{\theta}_{\text{simulation}}$ to specify the initial stock price $S_0$, the initial volatility $\nu_0$, the time horizon $T$, and discretization time step $\Delta t$. The choice of these parameters can impact both the accuracy and efficiency of the simulation. While not typically part of the model calibration process, these parameters also play a crucial role in describing the risk-neutral measure.
    
    \item {Payoff Function Parameters} $\boldsymbol{\theta_{\text{payoff}}}$: This parameter $\boldsymbol{\theta_{\text{payoff}}}$ parametrizes the \textit{payoff function} $f_{\boldsymbol\theta_{\text{payoff}}}$. In option pricing, this specifies the details of the contract. For example, given the full price path $\boldsymbol{Y}:=\{S_t\}_t$, the payoff function for an arithmetic Asian options takes the form   
    \begin{equation}
        f_{\boldsymbol\theta_{\text{payoff}}}(\{S_t\}_t) = \max\left(\frac{1}{n_D}\sum_{i=1}^{n_D} S_{t_i} - K, 0\right)\label{asian payoff}
    \end{equation}
    where $\boldsymbol{\theta_{\text{payoff}}}$ specifies the strike level $K$, the sampling frequency $n_D$ and the observation dates $\{t_i\}_{i\in[n_D]}$.\footnote{Notice that we have  omitted the discount factor $e^{-rT}$.} 
   The discrete sampling defined by the payoff does not necessarily determine the Euler discretization in time and, in fact, the latter is usually much finer for accuracy.

\end{enumerate}

Having specified the distinct roles of  $\boldsymbol{\theta}_{\text{model}}$, $\boldsymbol{\theta}_{\text{simulation}}$ and $\boldsymbol{\theta}_{\text{payoff}}$, we will henceforth refer to them collectively as $\boldsymbol{\theta}$ when no ambiguity arises.

\subsection{Feature Engineering to Design $X$}
 We now follow the general ideas laid out in Section \ref{pemc overview}, starting with the creation of the variable $\boldsymbol{X}$ to couple with $\boldsymbol{Y}$. 
As described in Section \ref{pemc overview}, we want to find input feature $\boldsymbol{X}:=\phi(\boldsymbol{Y})$ that is some low-dimensional transformation $\phi$ of $\boldsymbol{Y}$ with:
\begin{itemize}
    \item Property 1: $\boldsymbol{X}$ can be regarded as a reasonable predictor for $f_{\boldsymbol\theta_{\text{payoff}}}(\boldsymbol{Y})$. 
     \item Property 2: In contrast to that of $\boldsymbol{Y}$, the marginal distribution of $\boldsymbol{X}$ allows for efficient and parallelizable simulation. 
    \end{itemize}

To this end, we take $\boldsymbol{X}$ as the sum of two correlated Brownian increments driving \eqref{heston}, motivated by two key properties:
\begin{itemize}
    \item Predictability: The sum of Brownian increments explains away a portion of the variance of the SDE.
    \item Marginal simulation: The sum of two Brownian increments follows a correlated Gaussian distribution, which can be generated efficiently.
\end{itemize}

More precisely, when sampling $\boldsymbol{Y}$, we collect $\boldsymbol{X}$ to be a two-dimensional vector consisting of
$$W_T^S:=\sum_{j=1}^{T/\Delta t}\Delta W^S_{j\Delta t} \quad\text { and } \quad W_T^\nu:=\sum_{j=1}^{T/\Delta t}\Delta W^\nu_{j\Delta t},$$ simulated during each step of the Euler scheme with discretization scale $\Delta t$. The marginal of $\boldsymbol{X}$ is simply a two-dimensional Gaussian: $\frac{\boldsymbol{X}}{\sqrt{T}}\sim \mathcal{N}\Big(\mathbf{0},[\begin{smallmatrix} 1 & \rho  \\ \rho  & 1 \end{smallmatrix}]\Big)$.

 
We summarize a short set of practical guidelines, aimed at helping practitioners decide a cost-effective $\boldsymbol X$ with potentially good predictive power:
\begin{itemize}
        \item {SDE / path simulation:} In diffusion models, a robust default is to let $\boldsymbol{X}$ summarize the driving noise such as Brownian increments, a coarser discretization of the path or, for jump-diffusions or marked point processes, jump counts and sizes if they have Poisson marginals.

        \item {Discrete-event simulation:} Many discrete-event simulators admit a nested decomposition of the form $\boldsymbol{Y}=\Phi(\tilde{\boldsymbol{X}},Z)$, where $\tilde{\boldsymbol{X}}$ is an ``outer'' random element (signifying scenario or environment) and $Z$ is the expensive ``inner'' conditional randomness.
        In this case, PEMC can use feature $\boldsymbol X$ to reflect a low-dimensional summary of ``outer" variable $\tilde{\boldsymbol X}$ together using inexpensive simulation. Common transformations, such as taking sum, taking maxima/minima, often retain simple marginal distributions, as we illustrate with an example in Section \ref{sec:EMS}.
    \end{itemize}

 However, the above guidelines are far from exhaustive. As with most ML tasks, problem-specific feature engineering and domain knowledge can often yield better design. We illustrate with one such example, the pricing of a floating strike lookback call option under the Heston model \eqref{heston}. Here, the option payoff is given by:\begin{equation*}
f(\{S_t\}_t) = \max(S_T - \min_{0 \leq t \leq T} S_t, 0)
\end{equation*}
where $S_t$ is the asset price process and $T$ is the option maturity. An atypical choice of feature could be: $$\boldsymbol{X} = (W_T^S, \min_{0\leq t\leq T} (r-\eta/2)t+\sqrt{\eta}W_t^S).$$ 
The rationale behind the construction of $\min_{0\leq t\leq T} (r-\eta/2)t+\sqrt{\eta}W_t^S$ term is grounded in the domain knowledge of the Heston model. Specifically, since $\eta$ represents the mean-reverting level of variance process $\nu_t$, we may approximate $\nu_t \approx \eta$ (i.e., assume fixed volatility) and simplify the asset price process to a GBM: $$S_t \approx S_0e^{(r-\eta/2)t+\sqrt{\eta}W^S_t} \text{ and } \min_{0 \leq t \leq T} S_t\approx S_0e^{\min_{0 \leq t \leq T}(r-\eta/2)t+\sqrt{\eta}W^S_t}.$$ This approximation leads to the expression 
$f(\{S_t\}_t) \approx 
    S_0e^{(r-\nu/2)T+\sqrt{\nu}[\boldsymbol{X}]_1}-S_0e^{[\boldsymbol{X}]_2}$, where $([\boldsymbol{X}]_1, [\boldsymbol{X}]_2)$ are the two coordinates of $\boldsymbol{X}$, which could serve as a potentially effective approximation of $f(\{S_t\}_t)$. This corresponds to Property 1 in the construction of $\boldsymbol{X}$. For Property 2, the marginal distribution of $\boldsymbol{X}$ permits efficient and parallelizable simulation. In particular, applying Girsanov’s Theorem enables one to derive a closed-form expression for the joint density of the two dimension of $\boldsymbol{X}$
    \citep{karatzas1991brownian}. This result makes it straightforward to sample these components directly. 



\subsection{Building Prediction Model}\label{sec: pretrain}

We now focus on the pre-training phase of our prediction model. The data generation and model training process can be outlined through the following key steps:

\begin{enumerate}
    \item We first define a parameter space $\boldsymbol\Theta$ that encompasses all combinations of $\boldsymbol{\theta}:=(\boldsymbol{\theta}_{\text{model}},\boldsymbol{\theta}_{\text{simulation}},\boldsymbol{\theta}_{\text{payoff}})$ that are of practical interest. This space covers the range of parameters that would likely be encountered in real-world pricing scenarios on a daily, weekly or monthly basis, depending on the model update frequency.
    
    \item We then draw samples of $\boldsymbol{\theta}_i$ from $\boldsymbol\Theta$, using uniform sampling or any measure that ensures comprehensive coverage. For each sampled parameter set, we generate one pair\footnote{One could also generate multiple pairs of $(\boldsymbol{Y}(\boldsymbol{\theta}_i),\boldsymbol{X}(\boldsymbol{\theta}_i))$ under the same sampled $\boldsymbol{\theta}_i$.} of $\boldsymbol{Y}(\boldsymbol{\theta}_i)$ and its corresponding $\boldsymbol{X}(\boldsymbol{\theta}_i) := \phi(\boldsymbol{Y}(\boldsymbol{\theta}_i))$. It is important to note that $\boldsymbol{X}$ and $\boldsymbol{Y}$ are coupled in this generation process. 

    \item For each generated pair, we compute the payoff $f_{\boldsymbol\theta_{i,\text{payoff}}}(\boldsymbol{Y}(\boldsymbol{\theta}_i))$, which serves as a \textit{label} in our training data. The corresponding \textit{feature} vector is comprised of $(\boldsymbol{\theta}_{i,\text{model}},\boldsymbol{\theta}_{i,\text{simulation}},\boldsymbol{\theta}_{i,\text{payoff}}, \boldsymbol{X}(\boldsymbol{\theta}_i))$.

    \item This process is repeated $N_{\text{train}}$ times to generate training dataset which consists of $N_{\text{train}}$ pairs of  
    \begin{align*}
        \textit{feature}_i:=&(\boldsymbol{\theta}_{i,\text{model}},\boldsymbol{\theta}_{i,\text{simulation}},\boldsymbol{\theta}_{i,\text{payoff}}, \boldsymbol{X}(\boldsymbol{\theta}_i))\nonumber\\
        \textit{label}_i:=&f_{\boldsymbol\theta_{i,\text{payoff}}}(\boldsymbol{Y}(\boldsymbol{\theta}_i))
    \end{align*}
    for $i\in[N_{\text{train}}]$. 

    \item The ML model $g$ is then trained to minimize the MSE loss and saved:
    \begin{equation}\label{erm}
    	\min_{g}\frac{1}{N_{\text{train}}}\sum_{i\in[N_{\text{train}}]}(\text{label}_i-g(\text{feature}_i))^2.
    \end{equation}
\end{enumerate}
\begin{algorithm}
\begin{algorithmic}[1]
\caption{Prediction Model Training in PEMC}\label{sddd}
\Procedure{DataGeneration}{$N_{\text{train}}, \boldsymbol{\Theta}$}
   \State Initialize empty datasets ($\textit{features}, \textit{labels}$)

    \For{$i = 1$ \textbf{to} $N_{\text{train}}$}
        \State Sample parameters $\boldsymbol{\theta}_i\sim\Theta$ uniformly
        \State Generate $\boldsymbol{X}(\boldsymbol{\theta}_i), \boldsymbol{Y}(\boldsymbol{\theta}_i)\sim\text{ risk neutral measure}(\boldsymbol{\theta}_i)$
        \State $\textit{label}_i \gets f_{\boldsymbol\theta_{i,\text{payoff}}}(\boldsymbol{Y}(\boldsymbol{\theta}_i))$
        \State $\textit{feature}_i \gets (\boldsymbol{\theta}_{i,\text{model}}, \boldsymbol{\theta}_{i,\text{simulation}}, \boldsymbol{\theta}_{i,\text{payoff}}, \boldsymbol{X}(\boldsymbol{\theta}_i))$
        \State Store $(\textit{feature}_i, \textit{label}_i)$ to $(\textit{features}, \textit{labels})$
    \EndFor
    \State \textbf{return} datasets ($\textit{features}, \textit{labels}$)
\EndProcedure
\Statex
\Procedure{Training}{$\textit{features}, \textit{labels}$}
    \State Initialize model $g$
    \State $N_{\text{train}} \gets \text{length}(\textit{features})$
    \State Minimize: $\min_{g}\frac{1}{N_{\text{train}}}\sum_{i\in[N_{\text{train}}]}(\text{label}_i-g(\text{feature}_i))^2$
    \State \textbf{return} trained model $g$
\EndProcedure
\end{algorithmic}
\end{algorithm}

The essential steps are summarized in Algorithm \ref{sddd}. While the data generation process can be time-intensive due to the complexity of $\boldsymbol{Y}$, it is conducted offline during the pre-training phase. This approach minimizes its impact on real-time pricing applications by effectively trading memory usage for computational speed during execution. 

This pre-training setup presents an ideal scenario for ML applications. It is important to also note that PEMC does not require $g$ to take specific forms. However, the convex and differentiable nature of the loss function (i.e., squared loss), the large supply of data (i.e., from Monte Carlo simulation), and the need for expressiveness of the model family make neural networks (NNs) an ideal choice. For training NNs, various optimization techniques and architectural enhancements—such as the Adam optimizer \citep{kingma2014adam}, batch normalization \citep{ioffe2015batch}, and skip connections \citep{he2016deep}—can be employed to improve performance. In fact, recent advancements in ML technologies have made the training process remarkably accessible and efficient. The availability of open-source, user-friendly frameworks like PyTorch \citep{paszke2019pytorch}, TensorFlow \citep{abadi2016tensorflow}, and JAX \citep{jax2018github}, coupled with GPU acceleration and highly optimized C++ backends, has dramatically simplified the process of training models on large datasets. These frameworks support recent neural network architectures, such as convolutional neural networks (CNNs) \citep{lecun1998gradient}, residual networks (ResNets) \citep{he2016deep}, and transformers \citep{vaswani2017attention}, as well as various stochastic gradient descent (SGD) optimization algorithms \citep{bottou2010large}. 
In our applications, we only utilized well-established and widely recognized NN structures that are now considered common. These models can typically be implemented using just a few lines of code, as they have been incorporated into existing packages and extensively optimized.


 \subsection{Evaluation}
Once we finish the pre-training phase, the prediction model is stored and used on-the-fly in the evaluation process as follows:
\begin{enumerate}
        \item A specific parameter $\boldsymbol{\theta}:=(\boldsymbol{\theta}_{\text{model}},\boldsymbol{\theta}_{\text{simulation}},\boldsymbol{\theta}_{\text{payoff}})\in\boldsymbol\Theta$ is given, e.g., calibrated from market data in real time, and needs to be used for pricing.    
    \item Generate $n$ pairs of $(\boldsymbol{Y}_i,\boldsymbol{X}_i)_{i \in [n]}$, from $\mathbb E_{\text{risk neutral measure}(\boldsymbol{\theta})}$. Here we have suppressed $\boldsymbol{Y}$'s dependency on $\boldsymbol{\theta}$ for convenience.
    
    \item Based on the marginal distribution of $\boldsymbol{X}$, independently generate $N$ additional samples of $(\boldsymbol{\tilde X}_j)_{j\in [N]}$ from $\mathbb E_{\text{risk neutral measure}(\boldsymbol{\theta})}$, directly according to its marginal distribution. These samples are independent of the previous data $(\boldsymbol{Y}_i,\boldsymbol{X}_i)_{i \in [n]}$.
    
     \item Utilize the pre-trained model ${g}$ to evaluate the PEMC estimator:
    \begin{equation}\label{estimator}
        \hat{\mu}_{PEMC}:=\frac{1}{n}\sum_{i=1}^n(f(\boldsymbol{Y}_i)- g(\boldsymbol{X}_i))+\frac{1}{N}\sum_{j=1}^N g(\boldsymbol{\tilde X}_j)
    \end{equation}
\end{enumerate}
For notational simplicity, we have written ${g}(\boldsymbol{X})$ instead of ${g}(\boldsymbol{\theta}_{\text{model}},\boldsymbol{\theta}_{\text{simulation}},\boldsymbol{\theta}_{\text{payoff}},\boldsymbol{X})$, and $f(\boldsymbol{Y})$ instead of $f_{\boldsymbol{\theta}_{\text{payoff}}}(\boldsymbol{Y})$. A key aspect of this procedure is the relationship between $N$ and $n$. Typically, we choose $N$ to be one, or several orders of magnitude larger than $n$. The optimal choice of $N$ versus $n$ will be discussed in Section \ref{theoryresult}. The PEMC evaluation step is summarized in Algorithm \ref{alg:pemc}. 

\begin{algorithm}[H]
\begin{algorithmic}[1]
\caption{Evaluation Procedure in PEMC}
\label{alg:pemc}
\Procedure{Evaluation}{$\boldsymbol{\theta}, n, N, g$}
   \State Generate i.i.d. $(\boldsymbol{Y}_i,\boldsymbol{X}_i)_{i \in [n]}$ from $\text{risk neutral measure}(\boldsymbol{\theta})$
   \State Generate i.i.d. $(\boldsymbol{\tilde X}_j)_{j\in [N]}$ independently from its marginal in $\text{risk neutral measure}(\boldsymbol{\theta})$
   \State Compute $\hat{\mu}_{PEMC}:= \frac{1}{n}\sum_{i=1}^n(f(\boldsymbol{Y}_i)- g(\boldsymbol{X}_i))+\frac{1}{N}\sum_{j=1}^N g(\boldsymbol{\tilde X}_j)$
   \State \textbf{return} $\hat{\mu}_{PEMC}$ 
\EndProcedure
\end{algorithmic}
\end{algorithm}

\subsection{Overall Methodology}\label{sec:illustration}
We now summarize our overall PEMC methodology applied to the Asian option under the Heston model, i.e., \eqref{asian payoff}, 
 with $\boldsymbol{\theta_{\text{payoff}}}$ specifying the strike level $K$, the sampling frequency $n_D$ and the observation dates $\{t_i\}_{i\in[n_D]}$. The Heston model is described in \eqref{heston} which requires $\boldsymbol{\theta}_{\text{model}} := (r,\eta,\delta,\rho,\kappa)\in\mathbb R^5$. Suppose we use an NN as the prediction model. Then, the PEMC framework could proceed as follows:

\begin{enumerate}
    \item Define the parameter space $\boldsymbol\Theta$ that encompasses realizations of $$\boldsymbol{\theta}:=(\underbrace{r,\eta,\delta,\rho, \kappa}_{\boldsymbol{\theta}_{\text{model}}}, \underbrace{S_0, \nu_0, \Delta t, T}_{\boldsymbol{\theta}_{\text{simulation}}},\underbrace{K, n_D, \{t_i\}_{i\in[n_D]}}_{\boldsymbol{\theta}_{\text{payoff}}} )$$ which are of practical interest. 
    
    \item Uniformly sample $\boldsymbol{\theta}$ from $\boldsymbol{\Theta}$. This step is straightforward if the parameter space is a Cartesian product of intervals. For each sampled $\boldsymbol{\theta}$, generate process $\boldsymbol{Y}:=(S_t,\nu_t)_t$ using a discretization scheme with step size $\Delta t$, with the Heston model specified by $\boldsymbol{\theta}$.
    
    \item During the sampling of $\boldsymbol{Y}$, we collect $\boldsymbol{X}$ to be the sum of Brownian increment $$W_T^S:=\sum_{j=1}^{T/\Delta t}\Delta W^S_{j\Delta t} \quad\text { and } \quad W_T^\nu:=\sum_{j=1}^{T/\Delta t}\Delta W^\nu_{j\Delta t},$$ simulated during each step of the Euler scheme. This makes the marginal of $\boldsymbol{X}$ simply a two-dimensional Gaussian: $\frac{\boldsymbol{X}}{\sqrt{T}}\sim \mathcal{N}\Big(\mathbf{0},[\begin{smallmatrix} 1 & \rho  \\ \rho  & 1 \end{smallmatrix}]\Big)$.
    
    \item Save 
    \begin{align}\label{asianopt}
        \textit{label}:=&(\frac{1}{n_D}\sum_{i=1}^{n_D} S_{t_i} - K)^+ \nonumber\\
        \textit{feature}:=&({r,\eta,\delta,\rho, \kappa}, {S_0, \nu_0, \Delta t, T},{K, n_D, \{t_i\}_{i\in[n_D]}}, W_T^S, W_T^\nu).
    \end{align}
        
    \item Repeat steps 2-4 $N_{\text{train}}$ times to generate dataset $(\textit{feature}_i,\textit{label}_i)_{i\in[N_{\text{train}}]}$ of size $N_{\text{train}}$.
    
    \item Train a NN with weights $\boldsymbol{w}$ to minimize the MSE loss:
    $$\min_{{\boldsymbol{w}}}\frac{1}{N_{\text{train}}}\sum_{i\in[N_{\text{train}}]}(\text{label}_i-NN_{\boldsymbol{w}}(\text{feature}_i))^2$$ and use $\hat w$ to 
    approximate $$NN_{\boldsymbol{\hat w}}\approx\mathbb E\bigg[(\frac{1}{n_D}\sum_{i=1}^{n_D} S_{t_i} - K)^+\bigg|{r,\eta,\delta,\rho, \kappa}, {S_0, \nu_0, \Delta t, T},{K, n_D, \{t_i\}_{i\in[n_D]}}, W_T^S, W_T^\nu\bigg].$$
    \item At evaluation, given a specific $\boldsymbol{\theta}=({r,\eta,\delta,\rho, \kappa}, {S_0, \nu_0, \Delta t, T},{K, n_D, \{t_i\}_{i\in[n_D]}}, W_T^S, W_T^\nu)\in\Theta$, we generate $n$ paired samples $(\textit{label}_i, \textit{feature}_i)_{i\in[n]}$ as in \eqref{asianopt}. We also generate $N$ i.i.d. $\sqrt{T}\mathcal{N}\Big(\mathbf{0},[\begin{smallmatrix} 1 & \rho  \\ \rho  & 1 \end{smallmatrix}]\Big)$ samples and store them as $(\tilde{\textit{feature}})_{j\in[N]}$.

    \item Output $\hat{\mu}_{PEMC}:=\frac{1}{n}\sum_{i=1}^n(\textit{label}_i- NN_{\boldsymbol{\hat{w}}}(\textit{feature}_i))+\frac{1}{N}\sum_{j=1}^NNN_{\boldsymbol{\hat{w}}}(\tilde{\textit{feature}}_j)$   \end{enumerate}

In Appendix~\ref{sec:empirical_evaluation_gbm}, we present numerical results on the performance comparisons of PEMC with standard MC and CV for Asian options under GBM. In this case, an excellent CV using the Black-Scholes formula is available under the standard CV framework to benchmark against. At the same time, in Section~\ref{apps}, we present results on more realistic and technically complex applications, including several production-grade exotic options that are vastly more complex than Asian options.


\section{Analysis and Performance}\label{theoryresult}
In this section, we provide a detailed and comprehensive analysis of the PEMC estimator. All omitted proofs are delegated to Appendix \ref{app_proofs}. Up and until Section \ref{unifiedanalysis}, all properties of PEMC discussed pertain to the evaluation stage, i.e., where $\boldsymbol{\theta}$ is arbitrary but fixed. Thus, we omit the explicit dependence on $\boldsymbol{\theta}$ in the notation (e.g., writing $\boldsymbol Y$ in place of $\boldsymbol{Y}(\boldsymbol\theta)$) until in Section \ref{unifiedanalysis} where we unify the analysis. 

\subsection{Basic Bias-Variance Analysis and Confidence Interval Construction}
We begin by establishing the unbiasedness of our estimator.
\begin{theorem}[Unbiasedness]\label{unbiased}
The PEMC estimator in \eqref{estimator} is unbiased:
$$\mathbb{E}\bigg[\frac{1}{n}\sum_{i=1}^n(f(\boldsymbol{Y}_i)- g(\boldsymbol{X}_i))+\frac{1}{N}\sum_{j=1}^N g(\boldsymbol{\tilde X}_j)\bigg] = \mathbb {E}[f(\boldsymbol{Y})]$$
\end{theorem}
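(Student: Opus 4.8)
The plan is to apply linearity of expectation and exploit the two structural facts built into the sampling design of Section~\ref{pemc overview}: the coupled pairs $(\boldsymbol{X}_i,\boldsymbol{Y}_i)$ are i.i.d.\ copies of $(\boldsymbol{X},\boldsymbol{Y})$, and each $\tilde{\boldsymbol{X}}_j$ has the same marginal law as $\boldsymbol{X}$. Since the predictor $g$ is pre-trained on a separate dataset and hence fixed throughout the evaluation stage, we may treat $g$ as a deterministic measurable function; the only regularity needed is integrability, i.e.\ $\mathbb{E}[|f(\boldsymbol{Y})|]<\infty$ and $\mathbb{E}[|g(\boldsymbol{X})|]<\infty$, which is implicit (otherwise $\mu(\boldsymbol{\theta})$ itself is ill-defined).

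First I would split the expectation of the estimator in \eqref{estimator} into its two sums. By linearity,
\begin{equation*}
\mathbb{E}[PEMC]=\frac{1}{n}\sum_{i=1}^n\mathbb{E}\big[f(\boldsymbol{Y}_i)-g(\boldsymbol{X}_i)\big]+\frac{1}{N}\sum_{j=1}^N\mathbb{E}\big[g(\tilde{\boldsymbol{X}}_j)\big].
\end{equation*}
For the first sum, each pair $(\boldsymbol{X}_i,\boldsymbol{Y}_i)$ is distributed as $(\boldsymbol{X},\boldsymbol{Y})$, so $\mathbb{E}[f(\boldsymbol{Y}_i)-g(\boldsymbol{X}_i)]=\mathbb{E}[f(\boldsymbol{Y})]-\mathbb{E}[g(\boldsymbol{X})]$ for every $i$, and the average over $i$ collapses to $\mathbb{E}[f(\boldsymbol{Y})]-\mathbb{E}[g(\boldsymbol{X})]$. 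For the second sum, because $\tilde{\boldsymbol{X}}_j$ shares the marginal distribution of $\boldsymbol{X}$, we have $\mathbb{E}[g(\tilde{\boldsymbol{X}}_j)]=\mathbb{E}[g(\boldsymbol{X})]$ for every $j$, so that average collapses to $\mathbb{E}[g(\boldsymbol{X})]$.

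Adding the two contributions, the $\mathbb{E}[g(\boldsymbol{X})]$ terms cancel, leaving $\mathbb{E}[PEMC]=\mathbb{E}[f(\boldsymbol{Y})]=\mu(\boldsymbol{\theta})$, which is exactly the claim. I would emphasize that independence between $\{(\boldsymbol{X}_i,\boldsymbol{Y}_i)\}_i$ and $\{\tilde{\boldsymbol{X}}_j\}_j$ plays no role in this argument — it matters only for the variance analysis later — so the proof uses nothing beyond matched marginals and integrability. There is accordingly no substantive obstacle here: the statement is essentially a one-line telescoping cancellation once the sampling conventions are in force. The single point worth spelling out, if one wishes to be careful, is that $g$ must be viewed as fixed (equivalently, the whole computation may be carried out conditionally on the pre-training data, followed by an outer expectation), so that $g(\boldsymbol{X}_i)$ and $g(\tilde{\boldsymbol{X}}_j)$ are genuine functions of the evaluation-stage randomness alone.
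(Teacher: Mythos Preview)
Your proof is correct and follows exactly the same approach as the paper's own proof, which simply notes that $g$ is pre-trained (hence fixed) and that $\boldsymbol{X}_i$ and $\tilde{\boldsymbol{X}}_j$ share the same marginal distribution. Your write-up is in fact a more detailed and carefully spelled-out version of the paper's two-line argument, including the helpful observation that independence between the two sample groups is irrelevant for unbiasedness.
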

\begin{proof}{Proof}
    First note that $g$ is pre-trained. The proof then follows since $\boldsymbol{X}_i$ and $\boldsymbol{\tilde X}_j$ have the same marginal distribution for any $i,j$.\Halmos
\end{proof}

Given the unbiased nature of PEMC, the analysis of PEMC as an MC method primarily hinges on its variances and computational costs.
 To facilitate this analysis, we first introduce some necessary notation.
\begin{definition}\label{defsigma}
	Given a  function $f$ that takes input $\boldsymbol{Y}$ and $g$ that takes a coupled input $\boldsymbol{X}$, we denote
\begin{equation*}
	\sigma_{f-g}^2:= \mathrm{Var}(f(\boldsymbol{Y})-g(\boldsymbol{X})), \text{ } \sigma^2_f := \mathrm{Var}(f(\boldsymbol{Y})), \text{ and } \sigma^2_g := \mathrm{Var}(g(\boldsymbol{X})).
\end{equation*}

Similarly, we denote the {cost} of generating a coupled sample $f(\boldsymbol{Y})-g(\boldsymbol{X})$ by $c_{f-g}$, the cost of generating $f(\boldsymbol{Y})$ as $c_f$, and the cost of generating $g(\tilde{\boldsymbol{X}})$ as $c_g$, where $\tilde{\boldsymbol{X}}$ is a replicate with the same marginal distribution as $\boldsymbol{X}$, but generated separately and not coupled like $(\boldsymbol{X},\boldsymbol{Y})$ in Algorithm \ref{alg:pemc}. 
\end{definition}
\begin{lemma}\label{defcov}
    Using the notations from Definition  \ref{defsigma}, we have
    \begin{equation*}
       \mathrm{Cov}(f(\boldsymbol{Y}),g(\boldsymbol{X})) = \frac{\sigma^2_f+\sigma^2_g - \sigma^2_{f-g}}{2}.
    \end{equation*}
\end{lemma}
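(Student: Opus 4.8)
The plan is to invoke the standard bilinear expansion of the variance of a difference. Writing $A := f(\boldsymbol{Y})$ and $B := g(\boldsymbol{X})$, both of which we treat as fixed square-integrable random variables (the finiteness of $\sigma^2_f$ and $\sigma^2_g$ in Definition~\ref{defsigma} guarantees this, and hence $\mathrm{Cov}(A,B)$ is well-defined by Cauchy--Schwarz), I would first record the identity
\begin{equation*}
\mathrm{Var}(A - B) = \mathrm{Var}(A) + \mathrm{Var}(B) - 2\,\mathrm{Cov}(A,B),
\end{equation*}
which is immediate from expanding $\mathrm{Var}(A-B) = \mathbb{E}[(A-B)^2] - (\mathbb{E}[A-B])^2$ and collecting terms, or equivalently from the bilinearity and symmetry of covariance applied to $\mathrm{Cov}(A-B,\,A-B)$.

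Next I would simply rearrange this identity to isolate the covariance term:
\begin{equation*}
\mathrm{Cov}(A,B) = \frac{\mathrm{Var}(A) + \mathrm{Var}(B) - \mathrm{Var}(A-B)}{2}.
\end{equation*}
Substituting back $A = f(\boldsymbol{Y})$, $B = g(\boldsymbol{X})$ and using the notation $\sigma^2_f = \mathrm{Var}(f(\boldsymbol{Y}))$, $\sigma^2_g = \mathrm{Var}(g(\boldsymbol{X}))$, and $\sigma^2_{f-g} = \mathrm{Var}(f(\boldsymbol{Y}) - g(\boldsymbol{X}))$ from Definition~\ref{defsigma} yields exactly the claimed formula $\mathrm{Cov}(f(\boldsymbol{Y}),g(\boldsymbol{X})) = \tfrac{1}{2}(\sigma^2_f + \sigma^2_g - \sigma^2_{f-g})$.

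There is no real obstacle here: the statement is a one-line algebraic consequence of the definition of covariance, and the only thing worth flagging is the implicit integrability assumption, which is already subsumed in Definition~\ref{defsigma}. The lemma is stated separately mainly because the quantity $\mathrm{Cov}(f(\boldsymbol{Y}),g(\boldsymbol{X}))$ will be used repeatedly in the subsequent variance and optimal-allocation analysis, and expressing it through $\sigma^2_{f-g}$ is convenient since $\sigma^2_{f-g}$ is the quantity one directly estimates from the coupled samples $f(\boldsymbol{Y}_i) - g(\boldsymbol{X}_i)$.
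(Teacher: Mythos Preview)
Your proposal is correct; the paper actually states this lemma without proof, treating it as an immediate consequence of the variance-of-a-difference identity, which is exactly the route you take. Your remark on square-integrability is a reasonable addition, and your explanation of why the lemma is recorded separately matches its role in the later analysis.
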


We make two remarks here. First, we generally obtain $\boldsymbol{X}$ in the same process during the generation of $\boldsymbol{Y}$, and $g$ is low-cost. With these, we have $c_{f-g}\approx c_f$. Second, the term ``cost" is intentionally left unspecified because its interpretation varies depending on the context: In some context, cost refers to the sample size, while in others it represents the wall-clock time required to generate and evaluate all the samples, making the concept of cost instance-dependent. Therefore, for now, we use cost as an umbrella term to capture various meanings under different contexts. 
\begin{lemma}\label{pemcvariance}
	The variance of the PEMC estimator in \eqref{estimator} is 
	\begin{equation*}
		\mathrm{Var} (\hat{\mu}_{PEMC}) = \frac{1}{n}\sigma_{f-{g}}^2+\frac{1}{N}\sigma_{ g}^2.
	\end{equation*}
\end{lemma}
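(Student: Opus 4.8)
# Proof Proposal for Lemma \ref{pemcvariance}

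The plan is to exploit the independence structure baked into the PEMC estimator and reduce the variance computation to the two elementary building blocks $\mathrm{Var}(f(\boldsymbol{Y})-g(\boldsymbol{X}))$ and $\mathrm{Var}(g(\boldsymbol{X}))$. Write the estimator as the sum of two terms, $A := \frac{1}{n}\sum_{i=1}^n \big(f(\boldsymbol{Y}_i)-g(\boldsymbol{X}_i)\big)$ and $B := \frac{1}{N}\sum_{j=1}^N g(\boldsymbol{\tilde X}_j)$. The first key observation is that, since $g$ is pre-trained (hence a fixed, deterministic function during the evaluation stage), the collection $\{(\boldsymbol{Y}_i,\boldsymbol{X}_i)\}_{i\in[n]}$ is i.i.d. and is generated \emph{independently} of the collection $\{\boldsymbol{\tilde X}_j\}_{j\in[N]}$, as stipulated in the evaluation procedure (Algorithm \ref{alg:pemc}, line 3). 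Consequently $A$ and $B$ are independent, so $\mathrm{Var}(PEMC) = \mathrm{Var}(A) + \mathrm{Var}(B)$ with no cross term.

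Next I would compute each piece separately using the standard fact that the variance of an average of $m$ i.i.d. random variables is $1/m$ times the common variance. For $A$: the summands $f(\boldsymbol{Y}_i)-g(\boldsymbol{X}_i)$ are i.i.d.\ copies of $f(\boldsymbol{Y})-g(\boldsymbol{X})$, so $\mathrm{Var}(A) = \frac{1}{n^2}\sum_{i=1}^n \mathrm{Var}\big(f(\boldsymbol{Y}_i)-g(\boldsymbol{X}_i)\big) = \frac{1}{n}\sigma_{f-g}^2$, invoking Definition \ref{defsigma}. For $B$: the $\boldsymbol{\tilde X}_j$ are i.i.d.\ with the same marginal as $\boldsymbol{X}$, and $g$ is fixed, so $g(\boldsymbol{\tilde X}_j)$ are i.i.d.\ copies of $g(\boldsymbol{X})$; hence $\mathrm{Var}(B) = \frac{1}{N^2}\sum_{j=1}^N \mathrm{Var}\big(g(\boldsymbol{\tilde X}_j)\big) = \frac{1}{N}\sigma_g^2$. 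Adding the two gives exactly $\mathrm{Var}(PEMC) = \frac{1}{n}\sigma_{f-g}^2 + \frac{1}{N}\sigma_g^2$.

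This argument is essentially a bookkeeping exercise, so there is no serious obstacle; the only point requiring care is the justification that $A$ and $B$ are genuinely independent — which rests entirely on the sampling design (the $\boldsymbol{\tilde X}_j$ are drawn fresh, not reused from the coupled pairs) and on $g$ being frozen after pre-training, so that no randomness from the training phase leaks into the evaluation-stage variance. It is worth stating this explicitly to preempt the natural worry that $g$'s dependence on training data might introduce extra correlation; since all probabilities here are conditional on the trained $g$ (as noted at the start of Section \ref{theoryresult}), this concern does not arise.
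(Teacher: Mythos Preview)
Your proof is correct and follows exactly the same approach as the paper's own proof, which simply notes that $g$ is pre-trained and that $(\boldsymbol{Y}_i,\boldsymbol{X}_i)_{i\in[n]}$ is generated independently of $(\boldsymbol{\tilde X}_j)_{j\in[N]}$, leaving the remaining bookkeeping implicit. You have spelled out those routine details (the $A+B$ decomposition and the i.i.d.\ averaging) explicitly, but the substance is identical.
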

\begin{proof}{Proof}
    First note that $g$ is pre-trained. Moreover, in the evaluation phase, data $(\boldsymbol{Y}_i, \boldsymbol{X}_i)_{i\in[n]}$ are generated independently of $(\boldsymbol{\tilde X}_j)_{j\in[N]}$. \Halmos
\end{proof}

Having established the unbiasedness and the variance form of the PEMC estimator, we now  turn to inference. In particular, using consistent estimates of the variance, the application of the central limit theorem enables the construction of valid asymptotic confidence intervals. 

\begin{theorem}[Asymptotic Confidence Intervals]\label{thm:confidence_intervals}
In the set up of the evaluation algorithm \ref{alg:pemc}, given the prediction model $g$, $(\boldsymbol{Y}_i,\boldsymbol{X}_i)_{i\in[n]}$ and $(\boldsymbol{\tilde{X}}_j)_{j\in[N]}$, we define
\[
\hat{\sigma}^2_{f-g} = \frac{1}{n}\sum_{i=1}^n\Bigl(f(\boldsymbol{Y}_i)- g(\boldsymbol{X}_i)-
\Bigl(\frac{1}{n}\sum_{i'=1}^nf(\boldsymbol{Y}_{i'})- g(\boldsymbol{X}_{i'})
\Bigr)\Bigr)^2,
\]
and
\[
\hat{\sigma}^2_{g} = \frac{1}{N}\sum_{j=1}^N \bigl(g(\boldsymbol{\tilde{X}}_j)-\frac{1}{N}\sum_{j'=1}^Ng(\boldsymbol{\tilde{X}}_{j'})\bigr)^2,
\]
to be the respective sample variances. Let $z_{1-\alpha/2}$ denote the $(1-\alpha/2)$-quantile of the standard normal distribution. Then, the interval
$$\Bigg( \hat{\mu}_{PEMC} - z_{1-\alpha/2}\sqrt{\frac{\hat{\sigma}^2_{f-g}}{n}+\frac{\hat{\sigma}^2_{g}}{N}}, \quad \hat{\mu}_{PEMC} + z_{1-\alpha/2}\sqrt{\frac{\hat{\sigma}^2_{f-g}}{n}+\frac{\hat{\sigma}^2_{g}}{N}}\text{ }\Bigg),$$
is an asymptotically valid $1-\alpha$ confidence interval for $\mathbb E[f(\boldsymbol{Y})]$, as $n,N\to\infty$.
\end{theorem}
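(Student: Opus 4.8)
The plan is to establish a bivariate central limit theorem for the two independent sample averages appearing in $PEMC$, then combine it with consistency of the two sample variance estimators via Slutsky's theorem. Concretely, write $D_i := f(\boldsymbol{Y}_i) - g(\boldsymbol{X}_i)$ for $i \in [n]$ and $G_j := g(\boldsymbol{\tilde X}_j)$ for $j \in [N]$. Since $g$ is pre-trained (hence fixed), the $D_i$ are i.i.d.\ with mean $\mu_D := \mathbb{E}[f(\boldsymbol{Y}) - g(\boldsymbol{X})]$ and variance $\sigma^2_{f-g}$, the $G_j$ are i.i.d.\ with mean $\mu_G := \mathbb{E}[g(\boldsymbol{X})]$ and variance $\sigma^2_g$, and the two collections are mutually independent. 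By Theorem~\ref{unbiased}, $\mu_D + \mu_G = \mathbb{E}[f(\boldsymbol{Y})]$, so $PEMC - \mathbb{E}[f(\boldsymbol{Y})] = (\bar D_n - \mu_D) + (\bar G_N - \mu_G)$. I would first assume the natural second-moment (or slightly stronger, e.g.\ $2+\delta$) integrability conditions on $f(\boldsymbol{Y}) - g(\boldsymbol{X})$ and $g(\boldsymbol{X})$ so that the classical Lindeberg–L\'evy CLT applies to each average separately.

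The key steps, in order: (i) apply the univariate CLT to get $\sqrt{n}(\bar D_n - \mu_D) \Rightarrow \mathcal{N}(0, \sigma^2_{f-g})$ and $\sqrt{N}(\bar G_N - \mu_G) \Rightarrow \mathcal{N}(0, \sigma^2_g)$; (ii) because the two samples are independent, the joint vector converges to a product Gaussian, so for any deterministic sequence with $n, N \to \infty$ one obtains, after dividing by the standard error $s_{n,N} := \sqrt{\sigma^2_{f-g}/n + \sigma^2_g/N}$, the statement $(PEMC - \mathbb{E}[f(\boldsymbol{Y})])/s_{n,N} \Rightarrow \mathcal{N}(0,1)$ — this needs a short argument since it is a sum of two independent (non-identically-scaled) terms rather than a single normalized average, but it follows from characteristic functions or from the Cram\'er–Wold device applied to $(\sqrt{n}(\bar D_n - \mu_D), \sqrt{N}(\bar G_N - \mu_G))$ together with the observation that the weights $\sqrt{n}/\sqrt{n s_{n,N}^2}$-type coefficients are bounded; (iii) show $\hat\sigma^2_{f-g} \xrightarrow{p} \sigma^2_{f-g}$ and $\hat\sigma^2_g \xrightarrow{p} \sigma^2_g$ by the weak law of large numbers applied to $D_i^2$ and $G_j^2$ (the sample variances are the empirical second moments minus the squared empirical means, each of which converges by the WLLN and the continuous mapping theorem); (iv) conclude $\hat s_{n,N}^2 / s_{n,N}^2 = (\hat\sigma^2_{f-g}/n + \hat\sigma^2_g/N)/(\sigma^2_{f-g}/n + \sigma^2_g/N) \xrightarrow{p} 1$, since it is a convex-combination-type ratio of quantities each converging to $1$ in probability (formally, bound it between $\min$ and $\max$ of the two ratios $\hat\sigma^2_{f-g}/\sigma^2_{f-g}$ and $\hat\sigma^2_g/\sigma^2_g$); and (v) apply Slutsky's theorem to get $(PEMC - \mathbb{E}[f(\boldsymbol{Y})])/\hat s_{n,N} \Rightarrow \mathcal{N}(0,1)$, which immediately yields asymptotic $1-\alpha$ coverage of the stated interval via the definition of $z_{1-\alpha/2}$.

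The main obstacle is step (ii)/(iv): handling the fact that $n$ and $N$ go to infinity \emph{simultaneously} but possibly at different rates, so the two error terms are scaled by different, non-vanishing-relative factors and the "effective sample size" $s_{n,N}^{-2}$ is a genuinely two-index object. One cannot just quote a single CLT; the cleanest route is to note that conditionally the problem reduces to showing $a_{n,N} Z_n^{(1)} + b_{n,N} Z_n^{(2)} \Rightarrow \mathcal{N}(0,1)$ where $Z^{(1)}, Z^{(2)}$ are the two independent normalized partial sums (each asymptotically standard normal) and $a_{n,N}^2 + b_{n,N}^2 = 1$ with $a, b \in [0,1]$; by passing to subsequences along which $a_{n,N} \to a_\infty$ one gets a standard normal limit for every subsequential limit, hence for the whole sequence. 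The consistency arguments in (iii)–(iv) are routine WLLN-plus-continuous-mapping bookkeeping, and I would also remark, following \citep{angelopoulos2023prediction}, that an entirely analogous conclusion holds because $PEMC$ is structurally a difference of prediction-powered means; the only genuinely new ingredient relative to a single-sample CLT is the two-rate normalization just described.
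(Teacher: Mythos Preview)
Your proposal is correct and follows essentially the same approach as the paper's own proof, which simply invokes unbiasedness (Theorem~\ref{unbiased}), the variance formula (Lemma~\ref{pemcvariance}), consistency of the sample variances, the central limit theorem, and Slutsky's theorem. You actually supply more detail than the paper does---in particular your subsequence argument for the two-index normalization in step~(ii) addresses a subtlety the paper leaves implicit---but the overall route is identical.
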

\begin{proof}{Proof}
    Theorem \ref{unbiased} establishes $\hat{\mu}_{PEMC}$ is unbiased. Moreover, as $\hat{\sigma}^2_{f-g}, \hat{\sigma}^2_{g}$ are consistent estimates of ${\sigma}^2_{f-g}, {\sigma}^2_{g}$, the results follows from Lemma \ref{pemcvariance} and the central limit theorem and Slutsky's theorem.\Halmos
\end{proof}

\subsection{Variance Reduction}
Having analyzed the bias and variance of the PEMC estimator, we now turn to a fundamental practical question: Under a fixed cost budget—whether measured in terms of the number of samples, computational time, or other limited resources—when does PEMC outperform standard MC? Our aim is to understand how PEMC’s variance reduction scales with its cost, thereby identifying regimes where PEMC emerges as more efficient than standard MC. To this end, we first determine the optimal allocation of the sample sizes  $n$  (the number of expensive full simulations) and  $N$  (the number of cheap, feature-only evaluations) within the PEMC framework, given a total resource budget $B$. This investigation provides a guideline for parameter selection and reveals conditions under which PEMC’s gains are maximized.
\begin{lemma}\label{ratio}
Assuming $n,N\in\mathbb R^+$ and $c_g, c_{f-g}\in\mathbb R^+$, the optimal allocation between $n$ and $N$ for PEMC, for any positive budget $B$, satisfies the following relationship:\begin{equation}\label{optimalnandN}
	\frac{n}{N} = \frac{\sigma_{f-g}}{\sigma_g}\cdot\sqrt{\frac{c_{g}}{c_{f-g}}} \quad .
\end{equation}
\end{lemma}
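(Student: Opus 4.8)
The plan is to set up a constrained optimization problem: minimize the variance $\mathrm{Var}(PEMC) = \sigma_{f-g}^2/n + \sigma_g^2/N$ from Lemma~\ref{pemcvariance}, subject to the budget constraint that the total cost equals $B$. Using the cost model from Definition~\ref{defsigma} and the discussion preceding Lemma~\ref{ratio}, the cost constraint is $c_{f-g}\, n + c_g\, N = B$. Since we are told to treat $n, N$ as positive reals, this is a smooth two-variable problem with one equality constraint, so I would solve it by Lagrange multipliers (or equivalently by substituting $N = (B - c_{f-g} n)/c_g$ and differentiating in $n$).

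Concretely, I would form the Lagrangian $\mathcal{L}(n,N,\lambda) = \sigma_{f-g}^2/n + \sigma_g^2/N + \lambda(c_{f-g} n + c_g N - B)$, take partials, and obtain $-\sigma_{f-g}^2/n^2 + \lambda c_{f-g} = 0$ and $-\sigma_g^2/N^2 + \lambda c_g = 0$. Eliminating $\lambda$ gives $\sigma_{f-g}^2/(n^2 c_{f-g}) = \sigma_g^2/(N^2 c_g)$, which rearranges to $n^2/N^2 = (\sigma_{f-g}^2 c_g)/(\sigma_g^2 c_{f-g})$, and taking positive square roots yields exactly \eqref{optimalnandN}. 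I would then briefly note that this critical point is indeed the minimizer: the objective is strictly convex in $(n,N)$ on the positive orthant (sum of convex functions $1/n$, $1/N$), the feasible set is the intersection of an affine hyperplane with the open positive orthant, and the objective blows up to $+\infty$ as either $n\to 0^+$ or $N\to 0^+$ along the constraint, so the unique interior critical point is the global minimum.

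There is essentially no hard technical obstacle here; the main thing to be careful about is making the cost model explicit, since the excerpt deliberately leaves ``cost'' as an umbrella term. I would state up front that we adopt the additive cost model $\mathrm{cost} = c_{f-g} n + c_g N$ consistent with \eqref{cost rough}, and observe that the resulting optimal ratio is independent of the budget $B$ (it only enters through the scale of $n$ and $N$, not their ratio), which is why the statement can assert it ``for any positive budget $B$.'' A secondary minor point worth a sentence is that we are optimizing over the continuous relaxation $n, N \in \mathbb{R}^+$; in practice one rounds to integers, but the lemma as stated is purely about the relaxed problem, so no rounding analysis is needed. The whole argument is a short Lagrange-multiplier computation plus a one-line convexity remark.
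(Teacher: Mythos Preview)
Your proposal is correct and matches the paper's own proof essentially line for line: the paper also formulates the problem as minimizing $\sigma_{f-g}^2/n + \sigma_g^2/N$ subject to $n c_{f-g} + N c_g \le B$, notes joint convexity on $\mathbb{R}^+ \times \mathbb{R}^+$ and blow-up at the boundary, and then ``directly solving the Lagrangian gives us the result.'' The only cosmetic difference is that the paper states the budget as an inequality rather than an equality, but since the objective is strictly decreasing in each variable the constraint binds at the optimum, so your equality formulation is equivalent.
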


In practice, $ c_{f-g}\gg c_g$ often holds which suggests one should set $N\gg n$ according to Lemma \ref{ratio}.
While Lemma~\ref{ratio} relies on treating  $n,N\in\mathbb R^+$  as continuous—ignoring the integer constraints $n,N \in\mathbb N^+$— this continuous approximation still provides valuable guidance when $n,N\geq 1$. In fact, in all our subsequent experiments, we used $N/n=10$ which are close to the near-optimal ratio suggested by Lemma \ref{ratio} (e.g., in all the use cases in Section \ref{apps}, the optimal ratio $N/n$ is among 5-15). On the other hand, the standard CV that uses a known CV mean corresponds to the case $c_g=0$, and hence $N=\infty$, which reduces to the idealized scenario discussed in Section \ref{pemc overview}. Building on Lemma \ref{ratio}, we can provide an upper bound on the variance reduction PEMC can achieve relative to standard MC under these idealized assumptions.
\begin{theorem}\label{ratioofvariance}
	Assume $c_{f-g}=c_f$. In the same setup as Lemma \ref{ratio}, the variance ratio between PEMC under the optimal allocation and standard MC follows as 
	\begin{align}\label{ration_funda}
		\frac{\mathrm{Var}(\hat{\mu}_{PEMC})}{\mathrm{Var}(\hat{\mu}_{MC})} = \Big(\frac{\sigma_{f-g}}{\sigma_f}+\frac{\sigma_g}{\sigma_f}\sqrt{\frac{c_g}{c_f}}\Big)^2
	\end{align}
\end{theorem}

Based on the result of Theorem \ref{ratioofvariance}, a natural question then arises on how we can gauge $\frac{\sigma_{f-g}^2}{\sigma_f^2}$ or $\frac{\sigma_{g}^2}{\sigma_f^2}$ in practice, and also how these ratios tie to the quality of the predictive model $g$ in PEMC. As we shall see, the extent of variance reduction that PEMC can deliver is linked to how well $g$ is trained. To illustrate this, we first consider an ideal scenario where $g$ is trained to the optimum, i.e., the true regression function.
\begin{theorem}\label{idealg}
    Assume $c_{f-g}=c_f$, $f(\boldsymbol{Y})$ is square-integrable and $g = \mathbb{E}[f(\boldsymbol{Y}) \,|\,  \boldsymbol{X}]$. Define $\rho:=\mathrm{corr}(f, g)$ and $c:=\frac{c_g}{c_{f}}$. Then we have $\rho = \frac{\sigma_g}{\sigma_f}$ and the ratio $\frac{\mathrm{Var}(\hat{\mu}_{PEMC})}{\mathrm{Var}(\hat{\mu}_{MC})} $ in Theorem \ref{ratioofvariance} reduces to
    \begin{align}\label{ratio_number}
		 r(\rho,c):= &\Big(\sqrt{1-\rho^2}+\rho\sqrt{c}\Big)^2.
	\end{align}
\end{theorem}

Theorem~\ref{idealg} says that the variance reduction of PEMC relative to standard MC can be approximated by the function $r(\rho,c)$ in \eqref{ratio_number}, where $\rho = \sigma_g / \sigma_f$ captures the predictive power of $g(\boldsymbol{X})$ for $f(\boldsymbol{Y})$, and $c= c_g / c_f$ quantifies the relative cost of evaluating $g(\boldsymbol{X})$ versus generating full samples for $f(\boldsymbol{Y})$. With this, \eqref{ratio_number} has an intuitive interpretation: The term $\sqrt{1-\rho^2}$ captures the irreducible noise in the CV. This is in the sense that, in conventional CV where $c$ is essentially set as 0, this becomes the only term left for $r(\rho,c)$, which is precisely the well-established optimal standard deviation attained by a CV. On the other hand, the term $\rho\sqrt{c}$ is the cost-driven penalty that depends on both $c$ and $\rho$. 

In Figure \ref{fig1}, we visualize the variance reduction function $r(\rho,c)$ for $\rho,c\in(0,1)$. On the left is a contour plot of $r(\rho,c)$ where the red line traces the curve $r(\rho,c)=1$, corresponding to the ``break-even" boundary where PEMC's variance matches that of MC. Regions below this line, $\{(\rho,r)\in[0,1]^2|r(\rho,c)<1\}$, represent regimes of $\rho$ and $c$ where PEMC achieves variance reduction, i.e., where $\rho$ is sufficiently large and $c$ is sufficiently small. On the right, the graph of $r(\rho,c)$ as a function of $\rho$ when $c=0.001$ provides a clear benchmark: a correlation $\rho=0.5$ yields approximately 22.2\% variance reduction, while $\rho=0.7$ yields about 45.8\%. In our PEMC applications, we design feature $\boldsymbol{X}$ and predictive model $g$ so that, for most $\boldsymbol{\theta}\in\Theta$ during evaluation, we can safely gauge that $c$ falls between $10^{-2}$ to $10^{-3}$ and $\rho$ exceeds 0.5. In these regions, Lemma~$\ref{ratio}$ suggests choosing $N/n$ in the range of 5 to 20 for near-optimal performance. In our experiments, a ratio of $N=10n$ proved effective.
\begin{figure}
     \FIGURE
   {\includegraphics[scale=0.3]{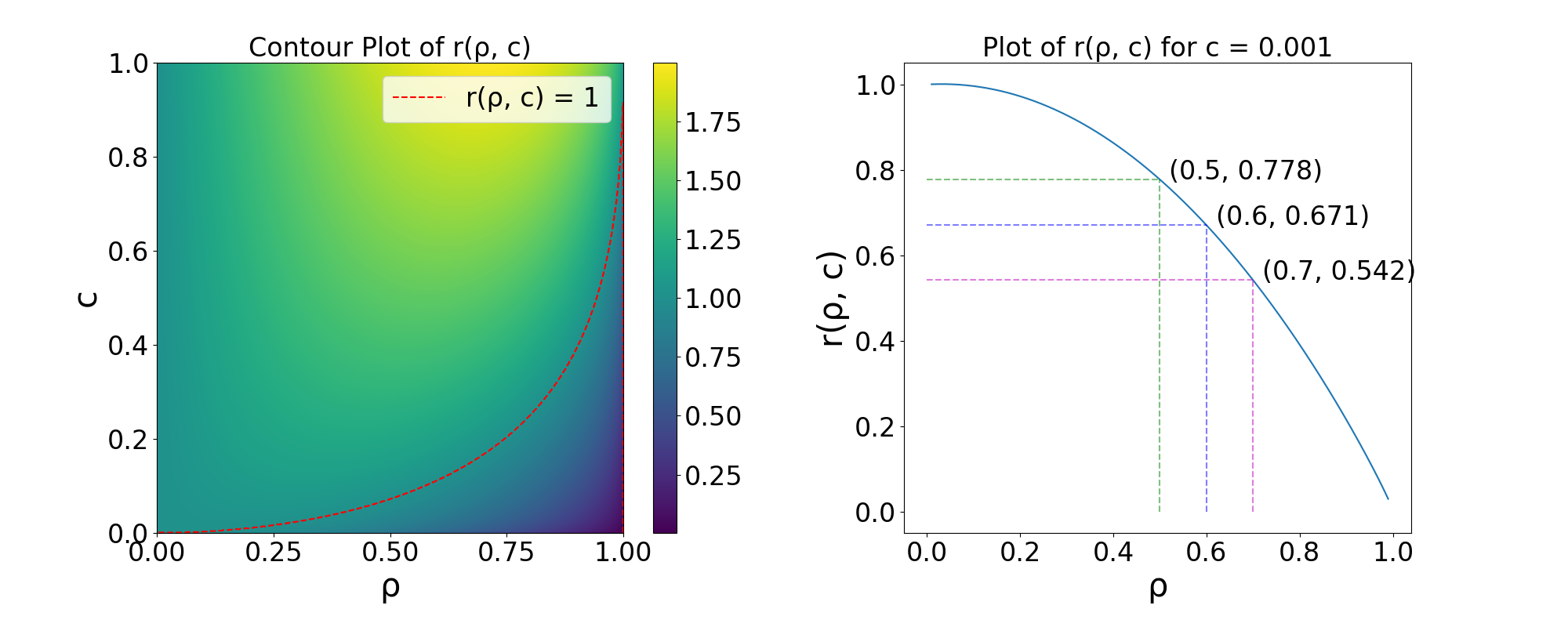}}  
   {Variance Reduction Function $r(\rho,c)$. \label{fig1}}
{Left figure: A contour map of the variance reduction ratio  $r(\rho,c)$  as a function of the correlation $\rho = \sigma_g / \sigma_f$ and relative cost  $c = c_g / c_f$ . The red curve indicates the curve  $r(\rho,c) = 1$ and the regimes where PEMC attains variance reduction over MC lie below the line.  Right figure: A graph of  $r(\rho,c)$ as a function of $\rho$ when $c = 10^{-3}$. }
\end{figure}

\subsection{Incorporating Model Training Errors in Asymptotic Performance Guarantees}\label{LTE}

The statement in Theorem~\ref{idealg} assumes that $g$ represents the true regression function $\mathbb{E}[f|\boldsymbol{X}]$. In practice, however, $g$ is obtained through a learning procedure applied to finite training samples, and thus will only approximate the true conditional expectation. Consequently, any practical implementation of PEMC must account for the approximation and estimation errors inherent in the ML training process, and this error also varies with $\boldsymbol\theta$. 
In the next lemma, we establish the typical asymptotic performance guarantee of our PEMC framework, whose proof is detailed in the Appendix \ref{proofoflemma6}.  

\begin{lemma}\label{learning_theory_summary}
     Under the regularity conditions specified in Appendix \ref{ltproofsection}, for any $\epsilon,\delta > 0$, there exists a sufficiently large sample size $N^0_{\text{train}}$ and a suitably chosen NN class from which we can select and train a predictor ${g}$ on $N_{\text{train}}\geq N^0_{\text{train}}$ samples, such that
     \begin{equation*}
         \frac{\mathrm{Var}(\hat{\mu}_{PEMC})}{\mathrm{Var}(\hat{\mu}_{MC})}  \leq r(\rho,c) +\epsilon
     \end{equation*}
     for a randomly evaluated $\theta\sim \Theta$ with probability at least $1-\delta$.
\end{lemma}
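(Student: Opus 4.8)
The plan is to translate the approximation guarantee from statistical learning theory into a bound on the variance ratio via the continuity of the function $r(\rho,c)$ and the continuity of the variance functionals $\sigma^2_f$, $\sigma^2_g$, $\sigma^2_{f-g}$ (and the cost ratio $c$) in the predictor $g$. First, I would fix $\theta$ and recall from Lemma~\ref{ratioofvariance} that, under the optimal allocation, $\mathrm{Var}(PEMC)/\mathrm{Var}(MC)$ is an explicit continuous function of the three quantities $\sigma^2_{f-g}/\sigma^2_f$, $\sigma^2_g/\sigma^2_f$, and $c = c_g/c_f$; call this map $R(\sigma^2_{f-g},\sigma^2_g,\sigma^2_f,c)$. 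When $g$ equals the true regressor $g^* := \mathbb{E}[f(\boldsymbol Y)\mid\boldsymbol X]$, this map evaluates to $r(\rho_\theta,c_\theta)$ as in Lemma~\ref{idealg}. The goal is therefore to show that the learned $\hat g$ makes $R$ exceed $r(\rho_\theta,c_\theta)$ by at most $\epsilon$.

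\textbf{Key steps.} (1) Invoke the off-the-shelf learning-theory result (stated in the Appendix): under the stated regularity conditions, for the chosen neural network class and $N_{\text{train}}$ large enough, with probability at least $1-\delta$ over the training sample one has $\mathbb{E}_\theta\big[(\hat g(\boldsymbol X) - g^*(\boldsymbol X))^2\big] \le \eta$ for a randomly drawn evaluation $\theta\sim\Theta$, where $\eta$ can be made as small as desired by increasing $N_{\text{train}}$ and enriching the class. (Equivalently, the excess $L^2$-risk $\mathbb{E}[(f-\hat g)^2] - \mathbb{E}[(f-g^*)^2] \le \eta$, using the Pythagorean identity $\mathbb{E}[(f-\hat g)^2] = \mathbb{E}[(f-g^*)^2] + \mathbb{E}[(\hat g - g^*)^2]$.) (2) Show this $L^2$-closeness propagates to the variances: write $\sigma^2_{\hat g} = \mathrm{Var}(\hat g(\boldsymbol X))$ and compare with $\sigma^2_{g^*}$; since $\|\hat g - g^*\|_{L^2}\le\sqrt\eta$, the triangle inequality in $L^2$ gives $|\sigma_{\hat g} - \sigma_{g^*}| \le \sqrt\eta$, and similarly $|\sigma_{f-\hat g} - \sigma_{f-g^*}| \le \sqrt\eta$. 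The denominator $\sigma^2_f$ is unaffected by the choice of $g$. (3) Handle the cost ratio: the cost $c_{\hat g}$ of evaluating the trained network depends only on the fixed chosen architecture (a forward pass), not on which weights were learned, so $c = c_{\hat g}/c_f$ is a fixed constant determined by the architecture; alternatively, absorb any mild variation into the regularity conditions. (4) Combine: since $\sigma_f$ is bounded below away from zero (square-integrability plus nondegeneracy, part of the regularity conditions) and the quantities $\sigma_{f-\hat g}, \sigma_{\hat g}$ lie in a compact range, $R$ is uniformly continuous on the relevant domain; hence there is a modulus $\omega$ with $R\big(\sigma^2_{f-\hat g},\sigma^2_{\hat g},\sigma^2_f,c\big) \le r(\rho_\theta,c_\theta) + \omega(\sqrt\eta)$. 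Choosing $\eta$ small enough that $\omega(\sqrt\eta)\le\epsilon$ yields $\mathrm{Var}(PEMC)/\mathrm{Var}(MC)\le r(\rho_\theta,c_\theta)+\epsilon$ on the high-probability event, which is the claim. One should also note that $r(\rho_\theta,c_\theta)$ appearing on the right is the \emph{idealized} ratio at the true regressor; if the statement intends a single fixed $r(\rho,c)$ one additionally averages/bounds over $\theta\sim\Theta$ using that $\rho_\theta$ and $c_\theta$ are themselves controlled on $\Theta$.

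\textbf{Main obstacle.} The delicate point is step (2)–(4): making the passage from ``small $L^2$-error of $\hat g$'' to ``small change in the variance ratio'' genuinely uniform, and in particular keeping $\sigma_f$ bounded away from $0$ and the map $R$ away from its singularities (it blows up as $\sigma_{f-g}\to 0$ or $\rho\to 0,1$). This requires either building nondegeneracy ($\sigma_f \ge \sigma_{\min} > 0$, and $\rho_\theta$ bounded inside a compact subinterval of $(0,1)$ uniformly over $\theta\in\Theta$) into the regularity conditions in the Appendix, or restricting attention to the event where the learned $\hat g$ is good enough that these quantities stay in a safe region. A secondary subtlety is that the learning-theory bound is ``for a randomly evaluated $\theta\sim\Theta$ with probability $1-\delta$'': one must be careful whether the high-probability statement is over the training randomness (uniformly in $\theta$, e.g.\ via a uniform convergence / capacity bound) or jointly over training and $\theta$, and phrase the conclusion consistently with the Appendix version of the learning result.
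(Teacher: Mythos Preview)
Your proposal is correct and follows essentially the same route as the paper: invoke a learning-theory bound to make $\|\hat g - g_0\|_{L^2}$ small, propagate this to the variances $\sigma_g,\sigma_{f-g}$, and conclude by continuity of the ratio formula from Lemma~\ref{ratioofvariance}. Two minor points of comparison: for your step~(2) the paper instead proves a small lemma bounding $|\sigma_f^2-\sigma_g^2-\sigma_{f-g}^2|\le 2\sqrt{\epsilon_{\text{total}}(\theta)}\sqrt{G}$ via the identity $\mathrm{Cov}(f-g,g)=\mathrm{Cov}(g_0-g,g)$ and Cauchy--Schwarz, whereas your direct $L^2$ triangle inequality on the standard deviations is arguably cleaner and equally effective; and your ``secondary subtlety'' is resolved exactly as you anticipate---the learning bound controls the $\theta$-integrated error $\epsilon_{\text{total}}=\mathbb{E}_{\theta\sim\Theta}[\epsilon_{\text{total}}(\theta)]$, and a Markov inequality over $\theta$ converts this into the pointwise bound $\epsilon_{\text{total}}(\theta)\le\epsilon$ holding with probability at least $1-\delta$. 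The nondegeneracy you flag (uniform two-sided bounds on the second moments of $f$ and $g$ over $\Theta$) is precisely what the Appendix assumes.
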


 \subsection{ Control Variate Coefficient}

In this section, we briefly discuss the connection between the PEMC framework and the traditional CV method, in regard to the linear coefficient in front of the CV typically considered in the traditional approach. In \eqref{estimator}, note that we could introduce a free parameter $a$ to create a variant:
\begin{equation}\label{estimator_a}
        \hat{\mu}_{PEMC}(a):=\frac{1}{n}\sum_{i=1}^n(f(\boldsymbol{Y}_i)- ag(\boldsymbol{X}_i))+\frac{1}{N}\sum_{j=1}^N ag(\boldsymbol{\tilde X}_j)
    \end{equation}
The introduction of $a$ is analogous to conventional CV approaches, where the parameter $a$ typically introduces only a negligible bias (when tuned using reused MC samples) but can help minimize variance. In PEMC, this viewpoint may appear somewhat redundant since $ag$ is also a valid prediction model, which suggests $a$ would be chosen implicitly during the training of the prediction model. To understand this, suppose $g$ is fixed. Then $\frac{1}{n}\sum_{i=1}^n -g(\boldsymbol{X}_i)+\frac{1}{N}\sum_{j=1}^N g(\boldsymbol{\tilde X}_j)$ can be viewed as a readily available, zero-mean CV for $\frac{1}{n}\sum_{i=1}^n f(\boldsymbol{Y}_i)$. We can then choose $a$ which minimizes the variance. The optimal $a^*$ then is:
\begin{equation*}
   a^* = \frac{\mathrm{Cov}(f(\boldsymbol{Y}),g(\boldsymbol{X}))}{(n/N+1)\mathrm{Var}(g(\boldsymbol{X}))}.
\end{equation*}
 In the ideal scenario, we have $N\gg n$ and $g(\boldsymbol{X})=\mathbb E[f(\boldsymbol{Y})|\boldsymbol{X}]$, which would imply $\mathrm{Cov}(f(\boldsymbol{Y})-g(\boldsymbol{X}),g(\boldsymbol{X}))=0$ and $\mathrm{Cov}(f(\boldsymbol{Y}),g(\boldsymbol{X})) = \mathrm{Var}(g(\boldsymbol{X}))$. This leads to $a^*=\frac{1}{1+n/N}\approx 1$, the default choice of PEMC, i.e., $a=1$. In other words, by minimizing MSE in the pre-training phase to nearly attain $g(\boldsymbol{X})=\mathbb E[f(\boldsymbol{Y})|\boldsymbol{X}]$, the tuning of the linear coefficient $a$, if we choose to incorporate in PEMC, is implicitly accounted for and set to 1.

\subsection{Unifying Analysis under Both Training and Evaluation Errors}\label{unifiedanalysis}
Finally, we bring together the preceding theoretical results for PEMC by explicitly connecting (i) ML training quality and (ii) the design choice of $\boldsymbol X$ to the break-even comparison for PEMC on evaluations. Lemma \ref{learning_theory_summary} provides an asymptotic, high-level view of the achievable variance reduction as the training sample size grows. Here, we substantiate it by providing a more operational, finite-sample assessment of PEMC's cost-variance trade-off that can be used in an analogous way as Theorem \ref{ratioofvariance}. To this end, note that in principle, once a ML model $g$ is trained, the key ingredients in Theorem \ref{ratioofvariance} - namely $\frac{\sigma_{f-g}}{\sigma_{f}}$, $\frac{\sigma_g}{\sigma_f}$ and cost ratio $\frac{c_g}{c_f}$ are directly estimatable. Consequently, setting aside the one-time training cost, one can check if PEMC will break-even with MC using Theorem \ref{ratioofvariance}. The limitations of Theorem \ref{ratioofvariance}, however, is that it does not by itself directly reveal how the quality of feature design $\boldsymbol X$ and the statistical/optimization accuracy of learning $g$ jointly affects the performance of PEMC. 

To make the above dependence explicit, we combine Theorem \ref{ratioofvariance} with Theorem \ref{idealg} to obtain the following theorem:

\begin{theorem}\label{learningquality}
    Suppose we have trained an ML model $g$ such that for a fixed $\boldsymbol{\theta}$
    \begin{equation*}
        \mathbb{E}_{\boldsymbol X(\boldsymbol{\theta})\sim\text{risk neutral measure}(\boldsymbol{\theta})}
	\biggl((g - g^*)(\boldsymbol{\theta}, \boldsymbol{X}(\boldsymbol{\theta}))\biggr)^2 \leq \epsilon
    \end{equation*}
    where $g^*(\boldsymbol{\theta}, \boldsymbol X(\boldsymbol{\theta})):= \mathbb E[f(\boldsymbol Y(\boldsymbol{\theta}))|\boldsymbol{\theta}, X(\boldsymbol{\theta})]$. Then, 
    \begin{equation}\label{idealgmore}
         \frac{\mathrm{Var}(\hat{\mu}_{PEMC})}{\mathrm{Var}(\hat{\mu}_{MC})}  \leq \Big(\sqrt{1-\rho^2+\frac{\epsilon}{\sigma_f^2}}+(\rho+\frac{\sqrt{\epsilon}}{\sigma_f})\sqrt{c}\Big)^2
     \end{equation}
where $\rho:= \frac{\sigma_{g^*}}{\sigma_f} = corr (g^*(\boldsymbol{\theta}, \boldsymbol X(\boldsymbol{\theta})), f(\boldsymbol Y(\boldsymbol{\theta})))$ and $c:=\frac{c_g}{c_f}$.
\end{theorem}

Theorem \ref{learningquality} gives a clearer picture on when and where to use PEMC, and how exactly the quality of ML model and the design $\boldsymbol X$ directly affect the break-even analysis of PEMC over MC:
\begin{enumerate}
    \item The predictive power of $\boldsymbol X$ dictates $\rho$, i.e. the correlation between $f(\boldsymbol Y)$ and $g^*(\boldsymbol X)$.
    \item The computational advantage of $\boldsymbol X$ dictates the cost ratio $c=\frac{c_g}{c_f}$, i.e., the cost of generating $\boldsymbol X$ versus $\boldsymbol Y$.
    \item The training quality of the ML model $g$ dictates $\epsilon$, i.e., the residual error w.r.t $g^*$.
\end{enumerate}

In particular, in the idealized limit of perfect training, i.e., $\epsilon=0$, \eqref{idealgmore} reduces to the optimal-regressor (i.e., $g=g^*$) case  \eqref{ratio_number} in Theorem \ref{idealg} and one can check $r(\rho,c)$ in Figure \ref{fig1} to see how candidate $\boldsymbol X$ needs to balance both $c$ and $\rho$ to reach break-even. On top of that, imperfect training of the ML model, including the choice of function class and the optimization algorithms, affects the break-even analysis through the error term $\epsilon$ in \eqref{idealgmore}. As an appeal, Theorem \ref{learningquality} is agnostic to the training approach or the magnitude of the error (relative to training size or other algorithmic parameters), as the bound \eqref{idealgmore} inherits the training error $\epsilon$ explicitly. Typically, this $\epsilon$ is of order $\text{(Complexity of model family)}/\sqrt{N_{train}}$, but as mentioned this is not needed as an assumption in Theorem \ref{learningquality}. 

The results in Theorems~\ref{ratioofvariance}, \ref{idealg} and \ref{learningquality} are for fixed $\boldsymbol{\theta}$. To obtain the performance of PEMC for random $\boldsymbol{\theta}$, one first formalize
\[
    \rho(\boldsymbol{\theta}) := corr (g^*(\boldsymbol{\theta}, \boldsymbol X(\boldsymbol{\theta})),f(\boldsymbol Y(\boldsymbol\theta))),
    \quad
    c(\boldsymbol{\theta}) := \frac{c_g(\boldsymbol{\theta})}{c_f(\boldsymbol{\theta})}. 
\]
and use statistical learning theory to obtain $\mathbb E_{\boldsymbol{\theta}\sim \Theta}[\epsilon(\boldsymbol{\theta})] \leq \epsilon$ or high probability bound on $\epsilon(\boldsymbol{\theta})$. Then, by applying Theorem \ref{learningquality}, to bound $\frac{\mathrm{Var}(\hat{\mu}_{PEMC})}{\mathrm{Var}(\hat{\mu}_{MC})}$, in principle one needs to provide bounds for $\rho(\boldsymbol{\theta})$ and $c(\boldsymbol{\theta})$ in the applied region $\Theta$. The following simple example provides a case where the impact of training and model quality can be explicitly illustrated.

\subsubsection{Illustrative Theoretical Example.}
Let us consider a setup where $\theta\sim\mathcal N(0,\sigma^2_{\theta})$, $X|\theta\sim\mathcal N(\theta,\sigma_X^2)$ and $Y=\theta+X+Z$, with $Z$  independent of everything else. The generation of $Z$ is time-consuming and its distribution is only known up to its first two moments, i.e., $\mathbb E[Z] = 0$ and $Var(Z) = \mathbb E[Z^2] = \sigma^2_Z$. For simplicity, let us also assume $c:=\frac{1}{c_Z}$, i.e., we have a unit cost for generating $X$ and the cost of generating $Z$ is $c_z$ times it. Suppose we generate many pairs of $\{(\theta_i, Y_i, X_i)\}_{i\in[N_{train}]}$, and we use linear regression as our ML modeling class to regress $Y$ on $(\theta,X)$. Assume we are fitting a linear model without intercept, i.e., $g(\theta, X)= \beta_1\theta+\beta_2X$. Given the training data, the fitted coefficient $\hat{\beta}_2$ is random but satisfies
\[
    \mathbb E[\hat{\beta}_2]= 1,
    \quad
    Var(\hat{\beta}_2) = \frac{\sigma_Z^2}{(N_{train}-3)\sigma_X^2},
\]
and at evaluation stage, we have from Theorem \ref{ratioofvariance} that
\begin{equation}\label{goodtheoryexample}
\frac{\operatorname{Var}(\mathrm{\hat{\mu}_{PEMC}})}{\operatorname{Var}(\mathrm{\hat{\mu}_{MC}})}
=
\left(
\frac{\sqrt{(1-\hat{\beta}_2)^2\sigma_X^2+\sigma_Z^2}}{\sqrt{\sigma_X^2+\sigma_Z^2}}
+
\frac{|\hat{\beta}_2|\,\sigma_X}{\sqrt{\sigma_X^2+\sigma_Z^2}}\;\frac{1}{\sqrt{c_z}}
\right)^2,
\end{equation}
at the realized $\theta$. This serves as an illustrative example that concisely captures PEMC’s theoretical advantages and summarizes our theoretical results:
\begin{enumerate}
    \item The choice of $X$ determines the predictive power $\rho=\frac{\sigma_X}{\sqrt{\sigma_X^2+\sigma_Z^2}}$ and computational advantage $c=\frac{1}{c_Z}<1$. 
    \item When training reaches the theoretical optimum, i.e., $\hat{\beta}_2=1$, the optimal variance reduction is reached for the given choice of $X$: $(\sqrt{1-\rho^2}+\rho\sqrt{c})^2$.
    \item The training error for $\hat{\beta}_2$ has standard deviation $\frac{\sigma_Z}{\sqrt{N_{train}-3}\sigma_X}$.
\end{enumerate}
Therefore, in the regime where $\frac{\sigma_X^2}{\sigma_Z^2}$, $c_Z$ and $N_{train}$ (i.e., the predictive power of $X$, the cost of generating $Z$ and the supply of training data) are sufficiently large, the training data will, with high probability $1-O(\frac{1}{\sqrt{N_{train}}})$, yield a PEMC that has variance reduction over MC for all possible ${\theta}$ in the evaluation stages, i.e., \eqref{goodtheoryexample} has value less than 1.

\subsubsection{Further Discussion on the Role/Cost of Training}
When discussing PEMC performance, we only consider evaluation-stage efficiency, because PEMC is designed to reduce latency and variance per evaluation query. Accordingly, we treat the ML predictor $g$ as a fixed, pre-trained asset and regard the training cost $C_{\text{train}}$ as a sunk, offline expense. This is much like the up-front, problem-specific cost (e.g., deriving the mean in closed-form) one must pay before deploying classical control variates, which is not charged against each simulation run.

Under this viewpoint, the cost $C_{\text{train}}$ is incurred offline, ahead of time and likely on different hardware (e.g., GPU cluster) and does not affect the latency or the per-query cost. If, however, the user’s cost objective is end-to-end and would like to include a one-time model development cost, then $C_{\text{train}}$ can in principle be explicitly folded into the comparison, but this requires a user- and application-specific accounting: one must specify a cost exchange ``rate'' between online and offline compute (which could differ not only in hardware or timing but also in monetary price or business implications) and the deployment horizon, i.e., how many queries PEMC will ultimately serve in evaluation.  For example, if one trains PEMC once and evaluate online indefinitely, then the offline term is amortized away and the end-to-end objective reduces to exactly the evaluation-stage analysis we develop. 

More fundamentally, folding in a one-time training cost is not how PEMC is designed. In practice, a more natural and realistic way to deploy PEMC is online — the predictor is trained and periodically refreshed as new simulation data arrive, reusing the samples generated during evaluation to sharpen $g$ over time while making efficient use of essentially all available data:

\begin{itemize}
    \item \textbf{Train-then-evaluate with lagged updates:} At time $t$, evaluate using a model $g_t$ trained on data available up to time $t-1$; after the evaluation is completed, the newly generated evaluation data can be added to the training pool to produce $g_{t+1}$. This preserves the required independence for each evaluation while allowing streaming evaluation data to be incorporated going forward.
    \item \textbf{Cross-validation (for training quality, not unbiasedness):} Standard ML practices such as cross-validation/holdout validation, early stopping, and regularization can be used to tune architectures/hyperparameters and curb overfitting; after selecting the model class, one can retrain on the full accumulated dataset. This practice can be embedded directly into the construction of $g_t$, and will not affect unbiasedness.

    \item \textbf{Optional sample splitting:} If one insists on updating $g_t$ using the same batch of data that is also used for evaluation at time $t$, then a simple holdout split can be used to preserve independence.
\end{itemize}

\section{Applications}\label{apps}
We apply PEMC to three real-world examples: Pricing two  exotic options, namely variance swaps under stochastic local volatility models (Section \ref{sec:var swap}) and swaptions under the Heath-Jarrow-Morton model (Section \ref{sec:swaption}), and an ambulance diversion policy evaluation problem (Section \ref{sec:EMS}). In Appendix \ref{xvastuff}, we make several additional observations on implementation and evaluation metrics.

\subsection{Variance Swaps in Stochastic Local Volatility Models}\label{sec:var swap}

Variance swaps are financial derivatives that enable investors to trade future realized volatility against current implied volatility. 
Unlike traditional options, which derive their value from the underlying asset's price, variance swaps are based on the variance of the asset's returns over a specified period \citep{BayerDermanGourlayScott1999}. This unique structure allows for pure exposure to volatility, making variance swaps valuable tools for risk management and speculative strategies.  

In this subsection, we focus on using PEMC to price variance swaps under a stochastic local volatility (SLV) model \citep{ren2007calibrating}. Closed-form formulas exist for variance swaps in stochastic volatility models such as Heston.  
However, there are no explicit expressions under SLV \citep{tataru2010stochastic}, even though this model provides greater flexibility through a data-intensive, non-parametric specification of the volatility surface \citep{gatheral2006volatility}. SLV models emerged as a hybrid approach that combines the market-implied local volatility surface with stochastic volatility dynamics \citep{guyon2014nonlinear}, providing practitioners with greater calibration flexibility and more accurate price reproduction across strike-maturity ranges \citep{ren2007calibrating}. Unlike the Heston model's parsimonious 5 parameter set, SLV requires handling a full volatility surface discretized on a dense 2D grid, effectively making the parameter space $\boldsymbol{\Theta}$ for PEMC high-dimensional. To efficiently process this grid-structured volatility data, we adopt a Convolutional Neural Network (CNN) architecture \citep{lecun1998gradient} for PEMC, effectively feeding the local vol surface as an image into predictive modeling. The CNNs are particularly well-suited for this task as they naturally exploit the spatial relationships in the volatility surface. 

Under the SLV model, we adopt the following SDEs for simulating the asset price $\{S_t\}_t$:
\begin{align}\label{localvolsde}
    \mathrm{d} S_t &= r S_t \mathrm{d} t + \sigma(S_t,t) e^{\nu_t} S_t \mathrm{d} W_t^S, \\
    \mathrm{d} \nu_t & = \kappa (\eta_t - \nu_t) \mathrm{d} t + \delta \mathrm{d} W_t^{\nu},
\end{align}
with $\langle \mathrm{d} W_t^S, \mathrm{d} W_t^{\nu}\rangle = \rho dt$ and  $\eta_t := -\frac{\delta^2}{2\kappa}(1+e^{-2\kappa t})$ with  $\kappa \in \mathbb{R}^+$, and $\delta \in \mathbb{R}$. Here $\sigma(\cdot,\cdot):\mathbb R\times\mathbb R^+\to \mathbb R^+$ is 2D function representing the (interpolated) local volatility surface and $\exp(\nu_t)$ is a stochastic multiplier with $\exp(\nu_0)=1$ and $\mathbb{E}[e^{2\nu_t}]=1$. In practice, local volatility surfaces are calibrated to and stored as discrete two-dimensional grids indexed by asset prices (spot) and time. During simulation, these discrete values are interpolated as needed to obtain volatility at arbitrary price-time points \citep{coleman2001interpolation}. For our PEMC implementation, we treat this discrete grid as part of our input parameter $\boldsymbol{\theta}_{\text{model}}$, reflecting how the market-calibrated surfaces would be used in practice. While local volatility models \citep{dupire1994pricing, derman1994riding} and their calibration to market data constitute an extensive research area in their own right, our focus here is on the PEMC implementation. Thus, we assume a calibrated local volatility has been given for evaluation, regardless of the method used to obtain it. Following practical conventions, we store $\sigma(\cdot,\cdot)$ on a $|\mathcal{S}|\times |\mathcal{T}|$ grid, where $\mathcal{S}$ contains $|\mathcal{S}|$ equally-spaced price points in $[S_{\text{surface}}^{\min}, S_{\text{surface}}^{\max}]$ and $\mathcal{T}$ contains $|\mathcal{T}|$ equally-spaced time points in $[t_{\text{surface}}^{\min}, t_{\text{surface}}^{\max}]$. At each point on the grid, we store the value of local volatility according to 
\cite{carmona2007hjm}:
\begin{equation}\label{eqn:local-vol}
\sigma_{base}^2(x,t) = \frac{\sum_{i=0}^2 p_i\tau_ie^{-x^2/(2t\tau_i^2)-t\tau^2_i/8}}{\sum_{i=0}^2(p_i/\tau_i)e^{-x^2/(2t\tau_i^2)-t\tau_i^2/8}}, \quad \text{with}~~~p_0:=1-p_1-p_2,~~x:=\log(S_t/S_0),
\end{equation}
for $p_0=0.3, p_1=0.5, p_2 =0.2, \tau_0=0.4, \tau_1=0.3,\tau_2=0.6$ as in Figure 2 in \cite{carmona2007hjm}. While this analytical form \eqref{eqn:local-vol} is used in our data-generating process, it is important to note that PEMC treats the surface as any market-calibrated volatility surface - accessing it only through its discrete grid values. When sampling $\boldsymbol{\theta}\sim\boldsymbol\Theta$ to produce $\sigma$ grid, we add a $\mathcal N(0,\xi^2)$ noise independently to all the $|\mathcal{S}|\times |\mathcal{T}|$ points in the grid, on top of their baseline value $\sigma_{base}$ \eqref{eqn:local-vol}. A path is then generated by Euler's scheme:
\begin{align*}
\nu_{t+\Delta t} & \leftarrow \nu_t + \kappa(\eta_t - \nu_t)\Delta t + \delta \Delta W^\nu_t, \\
S_{t+\Delta t} & \leftarrow S_t \exp\Big( \big(r - \tfrac{1}{2} \tilde{\nu}_{t+\Delta t}^2\big)\Delta t + \tilde{\nu}_{t+\Delta t}\Delta W^S_t  \Big), \quad \text{where}~~\tilde{\nu}_{t+\Delta t} = \widehat{\sigma}(S_{t}, t)e^{\nu_{t+\Delta t}},
\end{align*}
with $(\Delta W^S_t, \Delta W^\nu_t)\stackrel{\text{i.i.d}}{\sim}\sqrt{\Delta t}\cdot \mathcal{N}(\mathbf{0},[\begin{smallmatrix} 1 & \rho \\ \rho & 1 \end{smallmatrix}])$, and $\widehat{\sigma}(S_{t}, t)$ obtained by interpolations of the grid $\sigma$. The input $(\boldsymbol{\theta},\boldsymbol{X})$ can be represented as 
\begin{equation*}
    \text{feature}_i=\{ \underbrace{\{\sigma^2(s,t)\}_{s\in\mathcal S,t\in\mathcal T}}_{\text{surface info }}, \underbrace{r,\delta, \kappa, \rho, \mu}_{\boldsymbol{\theta}_{\text{model}}}, \underbrace{S_{0},\nu_{0}}_{\boldsymbol{\theta}_{\text{simulation}}}, \underbrace{K}_{\boldsymbol{\theta}_{\text{payoff}}}, \underbrace{(W^S_T,W^\nu_T)}_{\mathbf{X}(\boldsymbol{\theta})}\}.
\end{equation*}
The parameter space $\boldsymbol\Theta$, as well as the evaluation $\boldsymbol{\theta}$, is summarized in Table \ref{tab:param-slv-variance-swap}.
\begin{table}[H]
\centering
\caption{Parameter Setup in SLV Model}\label{tab:param-slv-variance-swap}
\scriptsize
\resizebox{0.95\textwidth}{!}{%
\begin{tabular}{c|c|c|c|c|c|c|c}
\hline
mode  & $\xi$ & $S_0$ & $r$ & $\kappa$ & $\lambda$ & $\rho$ & $T$, $\Delta t$ \\ \hline
$N_{\text{train}}=3,000,000$  & $0.02$ &  $[50,150]$ & $0.02$ & $[1.5,4.5]$ & $[0.1,1.0]$ & $[-0.9,-0.2]$ & \multirow{2}{*}{$1$, $1/252$}\\
evaluation & $0$ & $100$ & $0.02$ & $3.0$ & $0.5$ & $-0.5$   \\
\hline
\end{tabular}
}
\end{table}

To handle the 2D grid of high-dimensional volatility surface data, we design a two-branch NN architecture, which is illustrated in Figure \ref{fig:SLV}. The first branch processes the discretized volatility surface $\sigma^2(x,t)$ using a CNN architecture inspired by VGG \citep{simonyan2014very}, which has become a standard choice for image processing tasks and is readily available in modern deep learning packages like PyTorch \citep{paszke2019pytorch}. This branch then consists of two 2D convolutional layers interspersed with ReLU activations, followed by a MaxPool2d operation for dimensionality reduction. The surface features are then flattened through a fully connected layer to produce an embedding. The second branch handles the remaining model parameters in $(\boldsymbol{\theta}, \boldsymbol X)$ through a series of fully connected layers with dropout regularization, batch normalization, and ReLU activations, ultimately producing another embedding. Finally, the two separate embeddings are then fed into a ``Synthesizer" module, which combines the information through additional fully connected layers with dropout and ReLU activations to produce the final prediction. This architecture choice is motivated by the proven effectiveness of CNNs in handling grid-structured data \citep{lecun1998gradient, he2016deep}, and particularly the VGG architecture's success in extracting hierarchical features while maintaining relative simplicity \citep{simonyan2014very}. The hyper-parameter setup for the NN architecture is summarized in Table \ref{tab:nn_params} of Appendix \ref{Experiment_tab_graph}.

\begin{figure}[h]
    \centering
    \caption{Neural Network architecture for the SLV Model.}
    \includegraphics[width=0.8\textwidth]{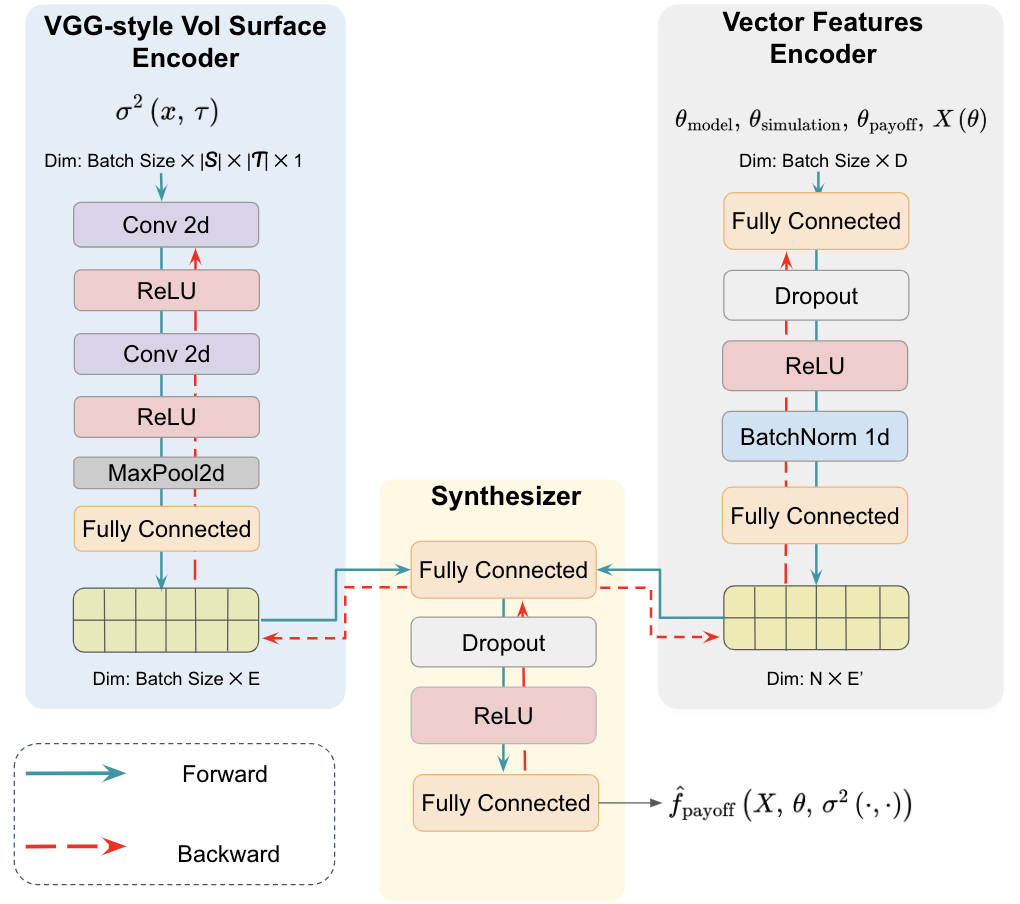}
    \label{fig:SLV}
\end{figure}

Finally, we evaluate PEMC's performance in the SLV setting across sample sizes $n\in\{1000,2000,4000,6000,8000,10000,20000\}$ with $N=10n$, benchmarking against a ground truth computed from $5\times 10^{7}$ MC samples. The results, based on 200 independent experiments and shown in Table~\ref{tab:variance-sawp-result-slv}, reveal that PEMC's effectiveness persists even in this more complex setting. Despite the added complexity of handling high-dimensional volatility surfaces, PEMC achieves a 27-43\% reduction in RMSE compared to standard MC.
\begin{table}[!h]
\centering
\caption{RMSE from 200 Experiments for Pricing Variance Swap under the SLV Model}\label{tab:variance-sawp-result-slv}
\small
\begin{tabular}{l|ccccccc}
\hline
 Method &  $n=1000$ & $n=2000$ & $n=4000$ & $n=6000$ & $n=8000$ & $n=10000$ & $n=20000$ \\ \hline
 Standard MC & 0.0206 &	0.0145&	0.0101&	0.0075&	0.0064&	0.0065&	0.0047 \\
 PEMC  & 0.0130&	0.0088&	0.0061&	0.0055	&0.0040&	0.0040&	0.0027\\
 \hline
\end{tabular}
\end{table}

\subsection{Swaptions in HJM Models}\label{sec:swaption}
Interest rate derivatives play a crucial role in financial markets, with swaptions being particularly important instruments for managing interest rate risk \citep{brigo2006interest}. A swaption gives its holder the right to enter into an interest rate swap at a future date, providing flexibility in hedging future interest rate exposures \citep{hull2018options}. The pricing of these instruments, however, presents significant computational challenges due to the high-dimensional nature of interest rate models.
In this subsection, we demonstrate PEMC's application to swaption pricing under the Heath-Jarrow-Morton (HJM) framework \citep{heath1992bond}.\footnote{For completeness, we provide a short review of the HJM model in Appendix \ref{hjmappendix}.}

Just as with the local volatility surface case, in practice $\sigma(t,T)$ cannot be predefined parametrically and must be calibrated from market data of caps, floors, and swaptions, yielding a discrete grid of values. However, for demonstration purposes, we employ a classical exponential decay specification as our baseline model inspired from \cite{glasserman2013monte}:
\begin{equation}\label{eqn:hjm-vol}
\sigma_{base}(t,T) = \sigma_0\exp(-\alpha_\sigma(T-t))
\end{equation}
where $\sigma_0$ and $\alpha_\sigma$ parameterize the baseline
volatility structure. Similarly, for a baseline initial forward curve, we use:
\begin{equation}\label{eqn:hjm-forward}
f_{base}(0,T) = f_0 + c_f(1-\exp(-\alpha_f T)).
\end{equation}
where $f_0$, $c_f$, and $\alpha_f$ parameterize the baseline initial forward curve. While this analytical form serves as our data-generating process, PEMC accesses it only through its discrete grid values with added noise. This approach mirrors our treatment of the local volatility surface in the previous section, where we used the parametric form in \cite{carmona2007hjm} solely as a realistic baseline for generating training data. 

Indeed, while HJM and these analytical forms \eqref{eqn:hjm-vol}-\eqref{eqn:hjm-forward} are formulated in continuous time, in practice we need to implement numerical discretization. Following the scheme in \cite{glasserman2013monte}, we discretize the
simulation-time axis over $[0,t'_0]$ and the maturity axis over
$[0,T^*]$. Specifically, let
$\mathcal T_t=\{t_0,t_1,\ldots,t_{N_T}\}$ denote the simulation-time
grid and $\mathcal T=\{T_0,T_1,\ldots,T_{N_M}\}$ denote the maturity
grid. We use the same spacing $\Delta t$ on both axes, although the two
grids generally have different lengths.
For each training instance, we first sample
$(\sigma_0,\alpha_\sigma,f_0,c_f,\alpha_f)$ and then construct the
grids
$\{\sigma(t_k,T_j)\}_{t_k\leq T_j}$ and
$\{f(0,T_j)\}_{T_j\in\mathcal T}$
from \eqref{eqn:hjm-vol} and \eqref{eqn:hjm-forward}, respectively.
We add independent Gaussian perturbations $\mathcal N\left(
0,\left(\frac{\sigma_0}{2(T_j+5)}\right)^2
\right)$ and $\mathcal N\left(
0,\left(\frac{1}{100(T_j+5)}\right)^2
\right)$
to the corresponding baseline values.
The forward-rate paths are then generated according to
\[
f(t_k,T_j)
=
f(t_{k-1},T_j)
+
\mu(t_{k-1},T_j)(t_k-t_{k-1})
+
\sigma(t_{k-1},T_j)\Delta W_k,
\]
where
\[
\Delta W_k:=W(t_k)-W(t_{k-1})
\sim\mathcal N(0,t_k-t_{k-1})
\]
is the scalar Brownian increment of the one-factor HJM model, and
$\mu(t_{k-1},T_j)$ is the discretized drift determined by
\eqref{noarb}. The same increment $\Delta W_k$ drives all forward
maturities $T_j$, with maturity-dependent loading
$\sigma(t_{k-1},T_j)$. See \cite{glasserman2013monte} for the complete
derivation of this standard discretization.

Across training instances, we sample
\((\sigma_0,\alpha_\sigma,f_0,c_f,\alpha_f)\) uniformly from the
ranges reported in Table~\ref{tab:param-hjm-swaption}.
The parameters \((\sigma_0,\alpha_\sigma)\) specify the baseline
volatility structure in \eqref{eqn:hjm-vol}, while
\((f_0,c_f,\alpha_f)\) specify the baseline initial forward curve in
\eqref{eqn:hjm-forward}. Evaluating these specifications on the simulation-time and maturity
grids and adding the perturbations described above yields the
discretized volatility structure and initial forward curve used in the
HJM simulation. For each generated initial forward curve, the contractual fixed rate
\(R\) is computed deterministically rather than sampled independently.
Let $T^*:=t'_{n_p}$ denote the terminal date of the underlying swap, and let
$B(0,T)=\exp\left(-\int_0^T f(0,u)\,du\right)$ denote the initial discount
curve implied by $f(0,\cdot)$ (see Appendix \ref{hjmappendix}). We set
\[
R
=
\left(
\frac{B(0,T^*)}{B(0,t'_0)}
\right)^{-1/(T^*-t'_0)}
-1
=
\exp\left(
\frac{1}{T^*-t'_0}
\int_{t'_0}^{T^*}f(0,u)\,du
\right)-1.
\]
Thus, \(R\) varies across pricing instances but is held fixed across
all simulated HJM paths within a given instance. The contract parameters \(C\), \(t'_0\), \(\Delta t'\), and \(n_p\),
as well as the simulation parameters \(\Delta t\) and \(T^*\), are fixed
at the values reported in Table~\ref{tab:param-hjm-swaption}. For
evaluation, we use one randomly sampled volatility structure and
initial forward curve, shared as the ground-truth.

\begin{table}[h]
    \centering
    \caption{Parameter Setup in the HJM Swaption Experiment}
    \label{tab:param-hjm-swaption}
    \resizebox{1\textwidth}{!}{%
    \begin{tabular}{l|ccccc|ccccccc}
        \hline
        \textbf{Mode}
        & $\sigma_0$
        & $\alpha_\sigma$
        & $f_0$
        & $c_f$
        & $\alpha_f$
        & $R$
        & $C$
        & $t'_0$
        & $\Delta t'$
        & $n_p$
        & $\Delta t$
        & $T^*$ \\
        \hline
        $N_{\text{train}}=3e6$
        & $[0.01, 0.03]$
        & $[0.001, 0.9]$
        & $[0.01, 0.03]$
        & $[0.01, 0.05]$
        & $[0.001, 0.9]$
        & $(\frac{B(0,T^*)}{B(0,t'_0)})^{-1/(T^*-t'_0)}-1$
        & $100$
        & $5$
        & $1$
        & $20$
        & $1/52$
        & $25$ \\
        \hline
    \end{tabular}%
    }
\end{table}

For the $i$-th simulated path, let
$\Delta W_\ell^{(i)}$ denote the Brownian increment over the
$\ell$th simulation interval. To construct the low-dimensional PEMC
feature, we partition the increments over $[0,t'_0]$ into three
consecutive blocks $I_1,I_2,I_3$ and define
\[
\boldsymbol X_i
=
\left(
\sum_{\ell\in I_1}\Delta W_\ell^{(i)},
\sum_{\ell\in I_2}\Delta W_\ell^{(i)},
\sum_{\ell\in I_3}\Delta W_\ell^{(i)}
\right).
\]
This three-block representation retains coarse temporal information
about the Brownian path while remaining inexpensive to simulate.
The three blocks are a feature-engineering choice and do not represent
three HJM factors.

As before, the input to the HJM prediction model is
\[
\text{feature}_i
=
\left\{
\underbrace{
\{\sigma(t_k,T_j)\}_{
    t_k\in\mathcal T_t,\,
    T_j\in\mathcal T,\,
    t_k\leq T_j
},
\quad
\{f(0,T_j)\}_{T_j\in\mathcal T}
}_{\boldsymbol{\theta}_{\text{model}}},
\underbrace{
T^*,\Delta t
}_{\boldsymbol{\theta}_{\text{simulation}}},
\underbrace{
C,R,t'_0,\Delta t'
}_{\boldsymbol{\theta}_{\text{payoff}}},
\underbrace{
\boldsymbol X_i
}_{\text{Brownian features}}
\right\}.
\]
The volatility structure and initial forward curve together constitute
the HJM model input $\boldsymbol{\theta}_{\text{model}}$.

To handle both the two-dimensional grid of volatility structure $\sigma(t,T)$ and the one dimensional grid of initial forward curve $f(0,T)$, we design a three-branch NN architecture, illustrated in Figure \ref{fig:swaption_nn}. The first branch, labeled ``2D Function Encoder", processes the 2D volatility structure grid using a CNN architecture with two 2D convolutional layers, each followed by batch normalization. The branch concludes with an average pooling operation and produces an embedding. The second branch processes the initial forward curve $f(0,T)$ grid through a ``1D Function Encoder" utilizing 1D convolutional layers - a natural choice for sequential data \citep{lecun1998gradient} - followed by batch normalization and average pooling to produce another embedding. The third branch processes the nine-dimensional vector input consisting of the simulation inputs $(T^*,\Delta t)$, the contract inputs $(C,R,t'_0,\Delta t')$, and the three Brownian features $\boldsymbol X_i$ through fully connected layers with batch normalization.

Finally, these three separate embeddings are then fed into a ``Synthesizer" module that combines the information through multiple fully connected layers with batch normalization, ultimately producing the final prediction. This architecture effectively leverages both the spatial structure of the volatility surface through 2D CNNs \citep{simonyan2014very}, the sequential structure of the forward curve through 1D CNNs \citep{oord2016wavenet}, and the simulation, contract, and Brownian-vector inputs through standard deep learning techniques \citep{he2016deep}. The hyper-parameter setup for the NN architecture is detailed in Table \ref{tab:hyper-param-swaption} of Appendix \ref{Experiment_tab_graph}.

\begin{figure}[htbp!]
    \centering
     \caption{Neural network architecture for modeling Swaption payoff.}
    \includegraphics[width=0.8\textwidth]{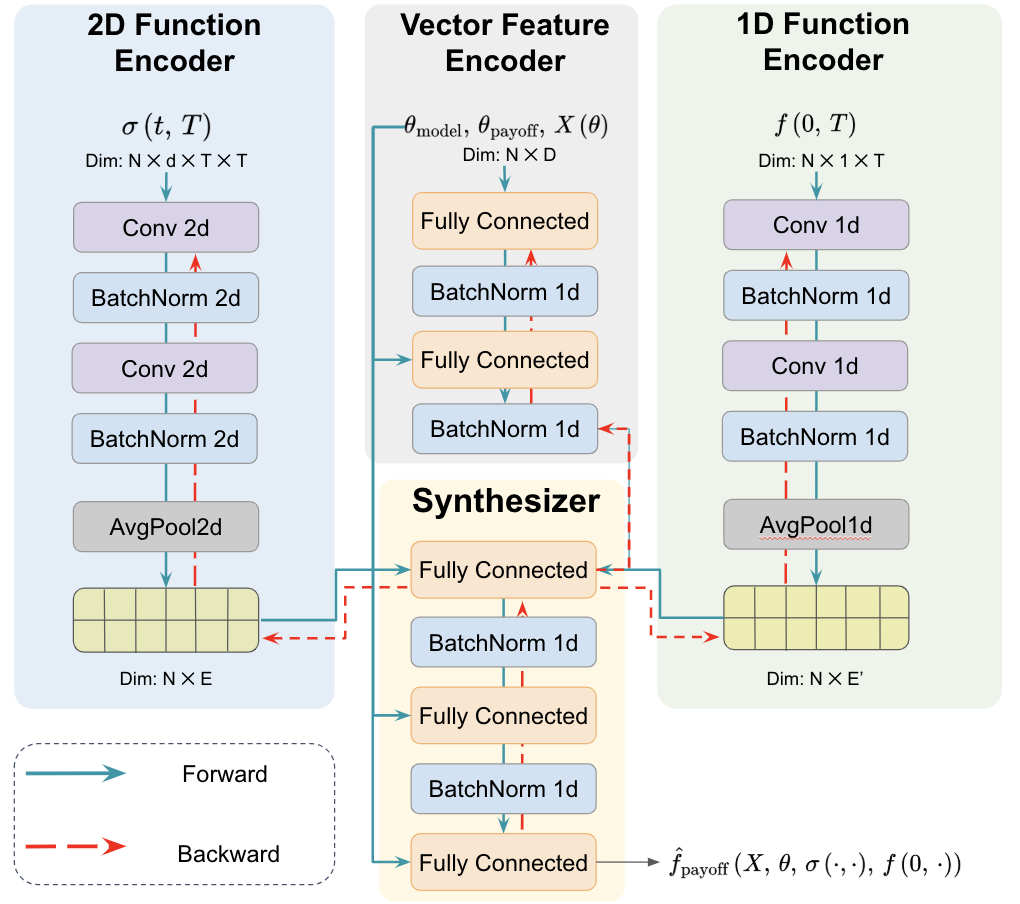}
    \label{fig:swaption_nn}
\end{figure}

Finally, following the same evaluation methodology, we assess PEMC's performance in the HJM swaption pricing setting across sample sizes $n\in\{1000,3000, 5000,7000,9000,11000\}$ with $N=10n$, using $5\times 10^{7}$ MC samples to establish the ground truth value. As displayed in Table \ref{tab_hjm_mse_comparison} and the associated boxplots in Figure \ref{fig:swaption_results}, the outcomes from 200 independent experiments confirm that PEMC remains highly effective within the context of interest rate derivatives. Despite the added complexity of managing both volatility structures and forward rate curves, PEMC achieves a 45-50\% reduction in RMSE to standard MC methods. This performance aligns with the variance reduction levels observed in our earlier examples. 
\begin{table}[h]
    \centering
    \caption{Root Mean Squared Error from 200 Experiments}
    \begin{tabular}{lcccccc}
        \hline
        \textbf{Method} & $n=1000$ & $n=3000$ & $n=5000$ & $n=7000$ & $n=9000$ & $n=11000$ \\
        \hline
        Standard MC & 0.0096 & 0.0062 & 0.0048 & 0.0039 & 0.0035 & 0.0029 \\
        PEMC & 0.0055 & 0.0028 & 0.0024 & 0.0019 & 0.0018 & 0.0015 \\
        \hline
    \end{tabular}
    \label{tab_hjm_mse_comparison}
\end{table}
\begin{figure}[h]
    \centering
    \caption{Boxplots for HJM Experiments.}\label{fig:swaption_results}
    \includegraphics[width=0.8\textwidth]{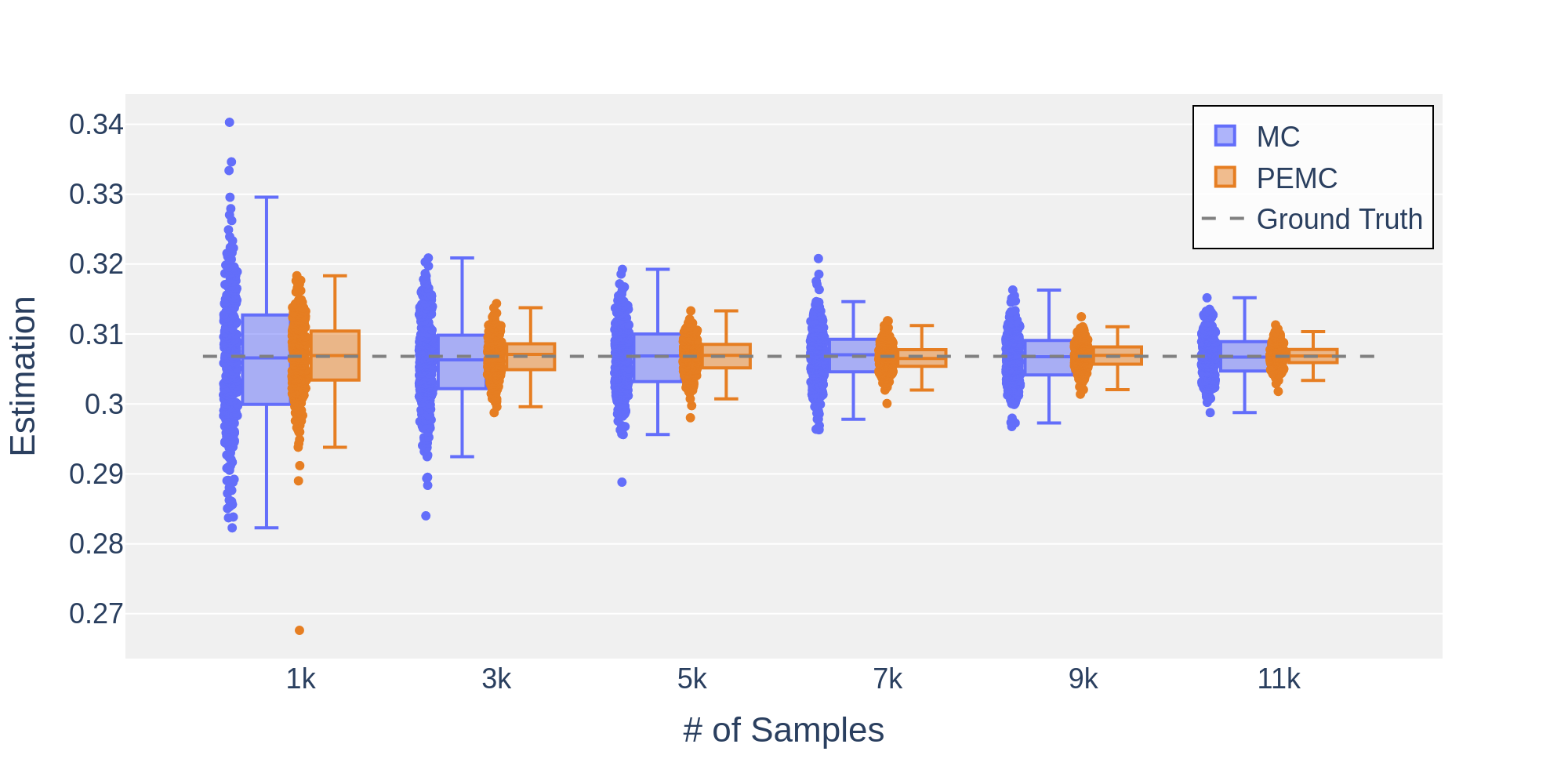}
\end{figure}


\subsection{Ambulance Diversion Policy Evaluation}\label{sec:EMS}
During the early COVID-19 surge, New York City's emergency medical service (EMS) system faced a drastic spatial shift in emergency calls, pushing certain hospitals to crisis-level overload \citep{henrylamwsc_ambul,dolan2022hospital}. In response, a load-balancing rule was devised to divert non-critical ambulance patients away from the closest hospital if that facility nears capacity, balancing travel time against queue length through an optimization-based assignment approach \citep{dolan2022hospital}. A stochastic simulation based on historical EMS data \citep{henrylamwsc_ambul} was used to evaluate how load-balancing reduces peak occupancy and hospital congestion—showcasing how simulation can improve patient outcomes in EMS networks. 

The simulation workflow is a nested simulation system. We focus here on a representative subproblem—the two-hospital ambulance diversion—to illustrate PEMC's advantage. The discrete-event simulation models two emergency departments, ED-1 and ED-2, each with physicians, patients, and arrival streams whose rates $\lambda_{h,d}$ vary by hour $h$ and day $d$. In particular: 1) Patient Arrivals:
We simulate patient inflow to each hospital ($\text{A}$ or $\text{B}$) according to a nonhomogeneous Poisson process with hourly rates $\lambda_{h,d}$. Specifically,
$$\text{Arrivals}(t) \sim \text{NHPP}\bigl(\lambda(t)\bigr),$$
where $\lambda(t)$ is determined by a piecewise schedule over each hour $h$ of day $d$. Many of these arrivals (e.g., 25\%) come by ambulance, with the remainder as walk-ins. 2) Triage Levels and Service Times: 
Each arriving patient is assigned a triage level $\ell \in \{1,2,3,4,5\}$ via a (possibly crisis-adjusted) multinomial distribution:
$$\Pr[L = \ell] = p_{\ell}\text{ or } p_{\ell}^\text{(crisis)}.$$
Service times are exponential with rate $\mu_{\ell}$, depending on the triage level $\ell$. More critical levels (e.g., $\ell=1$) have faster service rates $\mu_1$. 3) Priority Queueing and Threshold Diversion:
$\text{Doctors}$ are modeled as resources, each available over a shift $[\text{start}, \text{end}]$. Patients join a priority queue ordered by triage, preempting lower-priority service. A threshold $\tau$ controls ambulance diversion: if the queue $Q$ in one hospital exceeds $\tau$, new ambulance arrivals divert to the other hospital, incurring additional travel time $\Delta$.
Mathematically,
$$\text{divert}(\tau) = \begin{cases}
1, &\text{if } Q > \tau,\\
0, &\text{otherwise}.
\end{cases}$$
4) Mortality Function: 
Each patient $i$ has a time-varying risk of death. The code uses two complementary representations. The first is direct mortality probability:
$$
\text{mortality\_func}(x,d,a,t,B,\nu)
=
A_t
+
\frac{K_t-A_t}{
\Bigl[1 + (3t)\exp\bigl(-(B+5-t)x + (-2 + 2.5a)t\bigr)\Bigr]^{1/(\nu+0.25t)}
},
$$
where $x$ is the wait time, $d$ indicates diversion, $a$ is whether the patient obtained a doctor, $t$ is triage level, and $(B,\nu)$ control shape and shift. Constants $\{A_t, K_t\}$ reflect baseline and max risk by triage. The second is inverse mortality: $\text{inverse\_mortality\_func}(u,a,t,B,\nu)$ solves for the time at which a random uniform $u$ indicates death. Thus each patient obtains a $\text{death\_time}$ (a form of Weibull-like distribution).
A patient dies if its wait time $W$ exceeds this random $\text{death\_time}$.

Finally, we simulate day-by-day, in 4-hour shifts, updating doctors' availability and arrivals. Diversion is triggered once a queue surpasses $\tau$. We aim to minimize the expected total mortality over $d$ days:
$$
\mathbb{E}\Bigl[\sum_{\text{patients } i}
\mathbf{1}\bigl\{\text{wait}_i > \text{death\_time}_i\bigr\}\Bigr],$$
subject to feasible travel times and doctor schedules.

To validate the proposed load-balancing rule, we develop a discrete-event simulation reflecting real-time dynamics of New York City's EMS system. We optimize $\tau$ to minimize total mortality, measured by the fraction of patients whose wait time surpasses their Weibull survival limit. Experiments show that, particularly in crisis mode, adaptive thresholding significantly reduces overloading and mortality compared to the naive ``always-nearest-hospital" policy. A threshold $\tau$ controls ambulance diversion: if the queue $Q$ in one hospital exceeds $\tau$, new ambulance arrivals divert to the other hospital, incurring additional travel time $\Delta$ and so on. The evaluation needed is how weekly mortality counts depends on the diversion threshold $\tau$. By varying $\tau$ in the simulation, we obtain mortality counts (or rates) as a function of policy choices, enabling downstream decisions. For PEMC predictor we employ a random forest estimator, an ensemble model well-suited for healthcare \citep{Chang2016Triage,pirracchio2015mortality}. We gather a training size $10^5$, and sample $\boldsymbol{\theta}$ from a $\boldsymbol{\Theta}$ calibrated from EMS data. The $\boldsymbol{X}$ are readily simulated components for the nested MC—drawing on Weibull, Exponential, Gamma, and Poisson distributions with $\dim (\boldsymbol{X})=12$ -and is also parallelizable. The results are summarized Figure \ref{fig:mortality_ems} and Table \ref{tab:example_ems}. Further details are left in Appendix \ref{amb_div}.

\begin{figure}[h]
    \centering
    \begin{minipage}{0.45\textwidth}
        \centering
        \caption{Diversion policy evaluation}\label{fig:mortality_ems}
        \includegraphics[width=\linewidth]{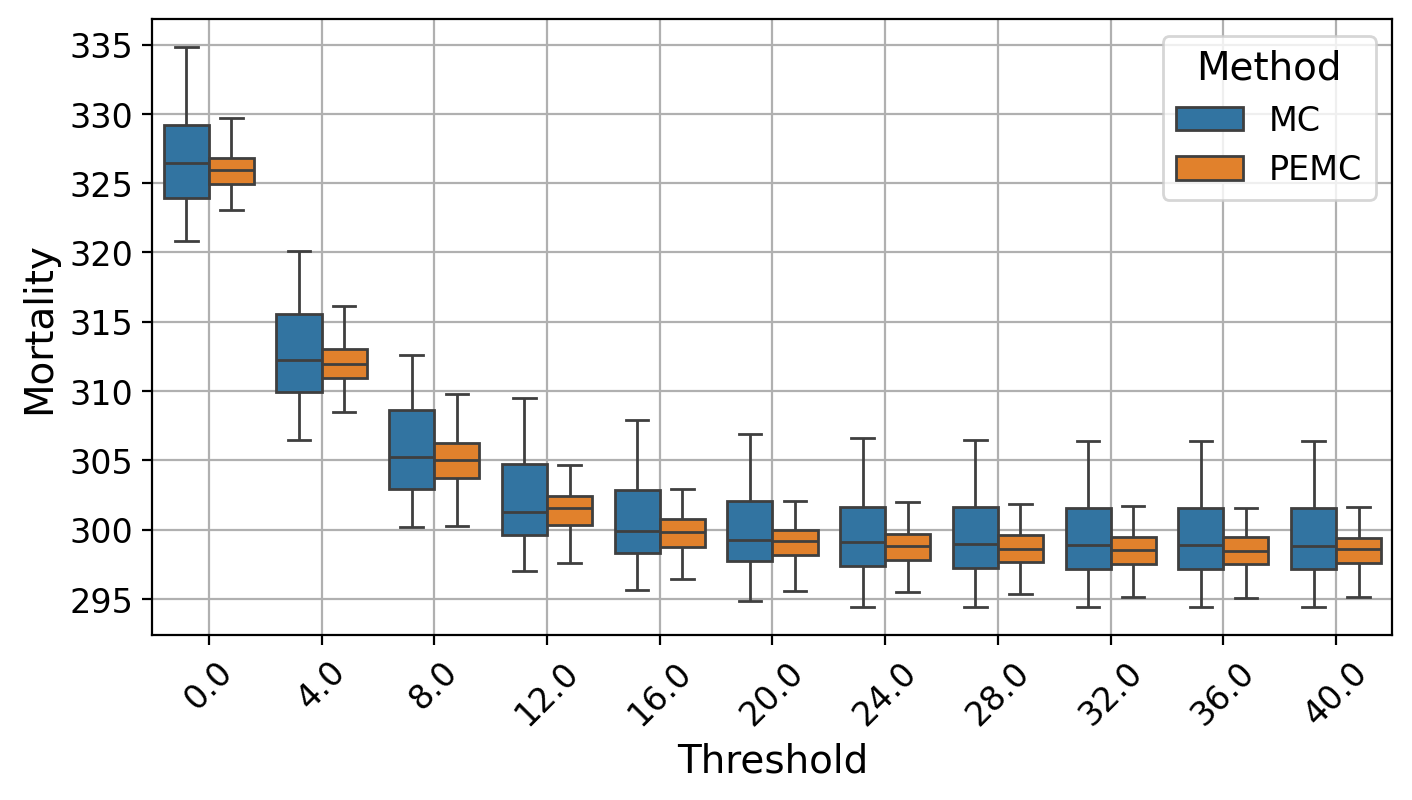}
    \end{minipage}
    \hfill
    \begin{minipage}{0.45\textwidth}
        \centering
        \captionof{table}{MSE and MAE of mortality evaluation.}
        \label{tab:example_ems}
        \vspace{2pt}
        \begin{tabular}{lcccc}
            \hline
            \textbf{Threshold} & \multicolumn{2}{c}{\textbf{MSE}} & \multicolumn{2}{c}{\textbf{MAE}} \\
             & MC & PEMC & MC & PEMC \\
            \hline
            $\tau=0$  & 10.974 & 4.142 & 2.765 & 1.587 \\
            $\tau=20$ & 9.535  & 3.266 & 2.497 & 1.416 \\
            $\tau=40$ & 9.174  & 3.122 & 2.428 & 1.376 \\
            \hline
        \end{tabular}
    \end{minipage}
\end{figure}

\section{Conclusion and Discussion}\label{conclude}

In this paper, we introduced PEMC, a general framework that uses ML–based predictors as flexible CVs. By integrating inexpensive, highly parallelizable simulations as features, PEMC preserves the unbiasedness and error quantification of classical MC while delivering substantial variance and runtime reductions in settings where traditional variance reduction techniques are infeasible. A key idea to realize this gain is a scheme-wide view on the total cost-aware variance reduction for the CV method, as opposed to per-replication variance reduction in the traditional view. This allows us to relax the traditional requirement of CV mean knowledge, thereby opening up the possibility of incorporating a wide range of ML models as valid CV candidates that help substantially reduce variance. We demonstrate PEMC's promising capability in an array of examples in production-grade option pricing and EMS systems.


 Our study paves the way for several promising future research directions. First, meta-learning approaches could enable PEMC to automatically learn effective CVs across related tasks, reducing manual tuning burdens \citep{finn2017model}. Second, recent advances in generative modeling—such as consistency models and diffusion-based methods \citep{ho2020denoising,song2021scorebased}—suggest the possibility of generating advanced proposal distributions or learned variates, though integration into finance requires new theoretical developments. Third, robust regression techniques from causal inference and distribution-shift literature could enhance PEMC’s stability under model mis-specification or nonstationary data \citep{li2020robust,bodnar2022use,ellickson2023estimating,kang2007demystifying,hernan2020causal,angelopoulos2023prediction,quionero2009dataset,bickel2009discriminative,sugiyama2009covariate}. 
 Fourth, though in this paper we only consider PEMC as an enhancement to standard MC, it is a flexible framework that potentially applies to more efficient MC techniques such as Quasi-Monte Carlo methods \citep{caflisch1998monte, aistleitner2014uniform} and their variants 
 to achieve an even higher level of variance reduction.

\arxivonly{Beyond direct expectation estimation, a more speculative extension is to use PEMC as an inner expected-cost estimator within outer stochastic-control or game algorithms. Candidate testbeds include debt-management models with bankruptcy risk and Hamilton--Jacobi formulations \citep{yilun2018optimalopenloopdebt,yilun2018vanishinghjm}, finite-state stochastic Stackelberg games \citep{yilun2020selffeedbackstackelberg}, and bank-intervention models with impulse or semi-Markov capital injections \citep{yilun2020banksalvagecontrol,yilun2023capitalinjection}. Establishing convergence and computational gains for such outer algorithms is beyond the scope of this paper.}



\bibliographystyle{informs2014}
\bibliography{references}

\newpage

\begin{APPENDIX}{}

\section{ Proofs}\label{app_proofs}


\subsection{Proof of Lemma \ref{ratio}}

\begin{proof}{Proof}
By relaxing the constraints to $n,N\in\mathbb R^+$, the optimization problem
\begin{align*}
\underset{n> 0, N> 0}{\text{min}} \quad & \frac{1}{n}\sigma_{f-{g}}^2+\frac{1}{N}\sigma_{ g}^2. \\
\text{s.t.} \quad & nc_{f-g} + Nc_g \leq B,
\end{align*}
is jointly convex in $n,N\in\mathbb R^+$. The objective value also approaches infinity as $N\to0$ or $n\to0$. Consequently, the strict convexity ensures the existence of a global minimizer in the interior and directly solving the Lagrangian gives us the result.\Halmos
\end{proof}

\subsection{Proof of Theorem \ref{ratioofvariance}}
\begin{proof}{Proof}
   Based on the results of Lemma \ref{ratio}, we can deduce that this ratio again does not depend on the budget $B$, so we omit its dependence in the statement. Moreover, we can conveniently choose a budget of the form $B=n_0c_{f-g}+n_0\frac{\sigma_{g}}{\sigma_{f-g}}\cdot\sqrt{\frac{c_{f-g}}{c_{g}}}c_g$ which gives an allocation of $n=n_0$ and $N=n_0\frac{\sigma_{g}}{\sigma_{f-g}}\cdot\sqrt{\frac{c_{f-g}}{c_{g}}}$ for PEMC in accordance with Lemma \ref{ratio}, while comparing with $B/c_f$ samples for standard MC. The rest follows from Lemma \ref{pemcvariance} and the assumption $c_f = c_{f-g}$, i.e., $ \frac{\sigma_{f-g}^2}{\sigma_f^2}\bigg(1+\frac{\sigma_g}{\sigma_{f-g}}\cdot\sqrt{\frac{c_{g}}{c_{f}}}\bigg) + \frac{\sigma_{g}^2}{\sigma_f^2}\bigg(\frac{\sigma_{f-g}}{\sigma_{g}}\cdot\sqrt{\frac{c_{g}}{c_{f}}}+\frac{c_{g}}{c_{f}}\bigg) = \frac{\Big(\sigma_{f-g}+\sigma_g\sqrt{\frac{c_g}{c_f}}\Big)^2}{\sigma^2_f}$. \Halmos
\end{proof} 

\subsection{Proof of Theorem \ref{idealg}}
\begin{proof}{Proof}
    When $g = \mathbb{E}[f(\boldsymbol{Y}) \,|\,  \boldsymbol{X}]$ is the true regressor, i.e., $$g = \argmin_{h \text{ is $\mathcal F_X$ -measurable}}\mathbb E[(f(\boldsymbol{Y})-h(\boldsymbol{X}))^2],$$ and $f$ is square-integrable and $\sigma(X)$ is the sigma field or filtration generated by random variable $X$. It follows from the orthogonal projection theorem in Hilbert spaces that $\mathbb E[(f-g) h] =0$ for all square-integrable $\mathcal F_X$ -measurable $h$. Plugging in $h = g$, we obtain $\mathrm{Cov}(f-g, g)=0$. Consequently we have $\rho:=\mathrm{corr}(f, g)=\frac{\sigma_g}{\sigma_f}$ and $\sigma_f^2 = \sigma_{f-g}^2+\sigma^2_g$, which further implies $\sigma^2_g = \rho^2\sigma_f^2$ and $\sigma_{f-g}^2 = (1-\rho^2)\sigma^2_f$. The rest follows from $c_{f-g} = c_f$.\Halmos
\end{proof}

\subsection{Proof of Lemma \ref{learning_theory_summary}}\label{ltproofsection}
In this part, we go through the pipeline for a result in the form of Lemma \ref{learning_theory_summary} in Section \ref{LTE}. We begin by formally introducing the key concepts associated with the learning framework under consideration.

\begin{definition}
	Let $g_0$ denote the true regression function, defined by
	\begin{align*}
		g_0(\boldsymbol{\theta},\boldsymbol{X}(\boldsymbol{\theta}))
        :=& \mathbb{E}_{\text{risk neutral measure}(\boldsymbol{\theta})}\bigg[f_{\boldsymbol{\theta}_{\text{payoff}}}(\boldsymbol{Y}(\boldsymbol{\theta})) \,\bigg|\, (\boldsymbol{\theta}, \boldsymbol{X}(\boldsymbol{\theta}))\bigg]\\
		=& \argmin_{g \text{ measurable}} \mathbb{E}_{\boldsymbol{\theta}\sim\Theta, (\boldsymbol{X}(\boldsymbol{\theta}), \boldsymbol{Y}(\boldsymbol{\theta}))\sim\text{risk neutral measure}(\boldsymbol{\theta})}
		\biggl(f_{\boldsymbol{\theta}_{\text{payoff}}}(\boldsymbol{Y}(\boldsymbol{\theta})) - g(\boldsymbol{\theta}, \boldsymbol{X}(\boldsymbol{\theta}))\biggr)^2.
	\end{align*}
Note that equality exploits the well-known property of the squared loss minimizer: it is the conditional expectation of the target variable given the input features. The notable part is that, this formulation inherently incorporates the sampling over $\boldsymbol\theta \sim \boldsymbol\Theta$. This is because, in PEMC we generate training data by first drawing $\boldsymbol\theta$ from $\boldsymbol\Theta$ and then sampling $\boldsymbol{X}(\boldsymbol{\theta}), \boldsymbol{Y}(\boldsymbol{\theta})$ from the corresponding risk-neutral measure indexed by $\boldsymbol{\theta}$. However, since the result holds pointwise for each fixed $\boldsymbol\theta \in \boldsymbol\Theta$, it also holds in the aggregate setting where $\boldsymbol\theta$ is (uniformly) random. For simplicity, we do not delve into technical measurability considerations here.

Next, we introduce the hypothesis class and define the best-in-class predictor. The hypothesis class $\mathcal{G}$ plays a pivotal role in the learning framework, encapsulating the set of candidate functions from which the predictor $g$ is selected.

\begin{definition}

Let $\mathcal{G}$ be the hypothesis class induced by the NN model family. Define $g^*$ as the best-in-class function satisfying
	\begin{align*}
		g^*(\boldsymbol{\theta}, \boldsymbol{X}(\boldsymbol{\theta})):=& \argmin_{g \in \mathcal{G}} \mathbb{E}_{\boldsymbol{\theta}\sim\Theta, (\boldsymbol{X}(\boldsymbol{\theta}), \boldsymbol{Y}(\boldsymbol{\theta}))\sim\text{risk neutral measure}(\boldsymbol{\theta})}
		\biggl(f_{\boldsymbol{\theta}_{\text{payoff}}}(\boldsymbol{Y}(\boldsymbol{\theta})) - g(\boldsymbol{\theta}, \boldsymbol{X}(\boldsymbol{\theta}))\biggr)^2\\
		=& \argmin_{g \in \mathcal{G}} \mathbb{E}_{\boldsymbol{\theta}\sim\Theta, \boldsymbol{X}(\boldsymbol{\theta})\sim\text{risk neutral measure}(\boldsymbol{\theta})}
		\biggl((g_0 - g)(\boldsymbol{\theta}, \boldsymbol{X}(\boldsymbol{\theta}))\biggr)^2.
	\end{align*}
\end{definition}
where the second line follows again from the definition of $g_0$. Considering a model $g$ trained with $N_{\text{train}}$ samples and is held fixed during the evaluation of the expectation, as is standard when discussing generalization error, the approximation error is defined as
\begin{align*}
	\epsilon_{a}^{\mathcal{G}} := \mathbb{E}_{\boldsymbol{\theta}\sim\Theta, \boldsymbol{X}(\boldsymbol{\theta})\sim\text{risk neutral measure}(\boldsymbol{\theta})}
	\biggl((g_0 - g^*)(\boldsymbol{\theta}, \boldsymbol{X}(\boldsymbol{\theta}))\biggr)^2,
\end{align*}
and the statistical error from $g$ obtained from training on $N_{\text{train}}$ samples:
\begin{align*}
	\epsilon_{e}^{N_{\text{train}}} := \mathbb{E}_{ \boldsymbol{\theta}\sim\Theta, \boldsymbol{X}(\boldsymbol{\theta})\sim\text{risk neutral measure}(\boldsymbol{\theta})}
	\biggl((g^* - g)(\boldsymbol{\theta}, \boldsymbol{X}(\boldsymbol{\theta}))\biggr)^2.
\end{align*}
Finally, define the total error as
\begin{align*}
	\epsilon_{\text{total}} := \mathbb{E}_{\boldsymbol{\theta}\sim\Theta, \boldsymbol{X}(\boldsymbol{\theta})\sim\text{risk neutral measure}(\boldsymbol{\theta})}
	\biggl((g_0 - g)(\boldsymbol{\theta}, \boldsymbol{X}(\boldsymbol{\theta}))\biggr)^2.
\end{align*}
\end{definition}
It follows that
\begin{align}\label{bound}
	\epsilon_{\text{total}} \leq 2\epsilon_a^{\mathcal G} + 2\epsilon_e^{N_{\text{train}}}.
\end{align}
 The interaction between $\epsilon_{a}^{\mathcal{G}}$ and $\epsilon_{e}^{N_{\text{train}}}$ essentially captures the bias-variance tradeoff. By choosing more expressive model classes (e.g., richer neural network architectures) and increasing the training set size, we can jointly reduce $\epsilon_a^{\mathcal{G}}$ and $\epsilon_e^{N_{\text{train}}}$. Thus, under reasonable conditions, $\epsilon_{\text{total}}$ can be made arbitrarily small if both $\epsilon_a^{\mathcal{G}}$ and $\epsilon_e^{N_{\text{train}}}$. In the following sections, we analyze $\epsilon_a^{\mathcal{G}}$ and $\epsilon_e^{N_{\text{train}}}$ respectively.
\subsubsection{Approximation Error $\epsilon_a^{\mathcal{G}}$}\label{approximation_error}
Note that this best-in-class function $g^*$ depends on the distribution of $\boldsymbol{\theta}\sim\Theta$, meaning the notion of optimality is distribution-dependent on $\boldsymbol{\theta}$. This is also why we choose a distribution $\boldsymbol{\theta}\sim\boldsymbol{\Theta}$ that sufficiently covers the space. In this paper, $\mathcal{G}$ is induced by our choice of NN architecture and training procedure. The complexity of $\mathcal{G}$ significantly impacts both approximation and statistical errors. NNs are renowned for their flexibility and expressiveness, serving as universal approximators. According to the Universal Approximation Theorem \citep{cybenko1989approximation,hornik1991approximation}, NNs with a single hidden layer containing a sufficient number of neurons can approximate any continuous function on compact subsets of $\mathbb{R}^n$ to arbitrary accuracy. Modern extensions of this theorem provide more nuanced insights into how network depth and architecture influence approximation capabilities \citep{yarotsky2017error,lu2017expressive,shen2021neural, cybenko1989approximation,hornik1991approximation}. Typical universal universal approximation theorems deals with compact input space in $\mathbb R^n$ and point-wise convergence. In our context, we do not restrict $\boldsymbol{X}$ to live in a compact space and we only need $\mathscr L^2$ convergence, so the theorem from \cite{Pinkus1999} or \cite{LeshnoLinPinkusSchocken1993} is sufficient, which gives us
\begin{lemma}\label{uat}
    Suppose $g_0: \boldsymbol{\Theta}\times \mathbb R^{dim (\boldsymbol{X})}\to\mathbb R$ is in $\mathscr L^2(\mathbb P)$ where $\mathbb P$ is the probability measure that governs the joint distribution of $\boldsymbol{\theta}\sim\Theta, (\boldsymbol{X}(\boldsymbol{\theta}), \boldsymbol{Y}(\boldsymbol{\theta}))\sim\text{risk neutral measure}(\boldsymbol{\theta})$. Then, for any $\epsilon>0$, we can find a class of NN $\mathcal G$ such that 
    \begin{equation*}
        \epsilon_{a}^{\mathcal{G}} \leq \epsilon.
    \end{equation*}
\end{lemma}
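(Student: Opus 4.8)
The plan is to deduce Lemma~\ref{uat} from the classical universal approximation theorem in its $\mathscr L^2$ form, reducing the approximation of the possibly unbounded, non-compactly-supported target $g_0$ to uniform approximation of a well-behaved function on a compact set; the reduction is powered entirely by the fact that $\mathbb P$ is a probability — hence finite — measure. The first observation is that $g^*$ is by definition the $L^2(\mathbb P)$ best approximation to $g_0$ from $\mathcal G$, so that $\epsilon_a^{\mathcal G}=\|g_0-g^*\|_{L^2(\mathbb P)}^2\le\|g_0-g\|_{L^2(\mathbb P)}^2$ for every $g\in\mathcal G$. Hence it suffices to build, for the prescribed $\epsilon$, a neural-network class $\mathcal G$ containing a single function $g$ with $\|g_0-g\|_{L^2(\mathbb P)}^2\le\epsilon$.

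I would then proceed in three reductions. First, \emph{truncation}: set $g_0^M:=(-M)\vee(g_0\wedge M)$; since $(g_0-g_0^M)^2\le g_0^2\in L^1(\mathbb P)$ and $g_0^M\to g_0$ pointwise, dominated convergence lets me fix $M$ with $\|g_0-g_0^M\|_{L^2(\mathbb P)}$ as small as desired. Second, \emph{regularization and localization}: since $\mathbb P$ is a Radon measure on a metric space, Lusin's theorem — combined with Tietze extension to keep the sup norm at most $M$, and multiplication by a continuous cutoff supported on a large ball whose complement carries little $\mathbb P$-mass — yields $h\in C_c$ with $\|h\|_\infty\le M$, $\operatorname{supp} h\subseteq K$ for a compact $K$, and $\|g_0^M-h\|_{L^2(\mathbb P)}$ arbitrarily small. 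Third, \emph{classical UAT}: by \cite{LeshnoLinPinkusSchocken1993} (equivalently the survey \cite{Pinkus1999}), for any non-polynomial continuous activation the single-hidden-layer networks are dense in $C(K')$ in the supremum norm for every compact $K'\supseteq K$; I take $\mathcal G$ to be such a family, with width large enough to contain a $g$ with $\sup_{K'}|h-g|\le\delta$.

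The one genuinely non-routine point — and the step I expect to be the main obstacle — is converting uniform control on $K'$ into $L^2(\mathbb P)$ control on the whole (non-compact) feature space: writing $\|h-g\|_{L^2(\mathbb P)}^2=\int_{K'}|h-g|^2\,d\mathbb P+\int_{(K')^c}|g|^2\,d\mathbb P$ (using $h\equiv 0$ off $K$), the first term is at most $\delta^2$, but the classical theorem says nothing about $|g|$ off $K'$, where the network could be arbitrarily large. I would neutralize this by appending a clipping layer: since $\operatorname{clip}_{2M}(x):=\mathrm{ReLU}(x+2M)-\mathrm{ReLU}(x-2M)-2M$ is itself a two-node network, $\tilde g:=\operatorname{clip}_{2M}\circ g$ still lies in a (slightly enlarged) neural-network class, it coincides with $g$ on $K'$ as soon as $\delta<M$, and $|\tilde g|\le 2M$ everywhere, whence $\int_{(K')^c}|\tilde g|^2\,d\mathbb P\le 4M^2\,\mathbb P((K')^c)$. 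Enlarging $K'$ until $\mathbb P((K')^c)$ is small, choosing $\delta$ small, and chaining the triangle inequality over the three reductions gives $\|g_0-\tilde g\|_{L^2(\mathbb P)}^2\le\epsilon$, hence $\epsilon_a^{\mathcal G}\le\epsilon$. (For a bounded activation such as a sigmoid the clipping step is unnecessary, since then $|g|$ is automatically bounded and one only needs $K'$ large; the clipping device is what makes the argument robust to the ReLU-type activations actually used in the paper.)
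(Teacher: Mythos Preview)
Your argument is correct and considerably more detailed than the paper's own proof, which consists of a single line: ``See \cite{Pinkus1999} or \cite{LeshnoLinPinkusSchocken1993}.'' The paper simply defers to the literature for the $\mathscr L^2(\mathbb P)$ density of neural networks, remarking in the surrounding text that the compact-domain UAT suffices because ``we only need $\mathscr L^2$ convergence'' --- but it leaves the passage from uniform-on-compacta approximation to $L^2$ of a finite measure on a non-compact domain entirely unstated. You actually spell that passage out via truncation, Lusin/Tietze localization, and compact-set UAT, and your clipping-layer device to tame a ReLU network outside the compact set is a genuine addition: the cited references deliver $L^p(\mu)$ density most cleanly for \emph{bounded} activations, and for ReLU one needs precisely an argument of the kind you supply. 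So your proof is not a different route so much as a fully fleshed-out version of what the paper treats as a black-box citation; nothing in it is wrong, and the clipping observation is a useful robustness remark the paper does not make.
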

\begin{proof}{Proof}
    See \cite{Pinkus1999} or \cite{LeshnoLinPinkusSchocken1993}. \Halmos
\end{proof}

\subsubsection{Statistical Error $\epsilon_{e}^{N_{\text{train}}} $}\label{statistical_error}

To quantify the capacity of $\mathcal{G}$, one could employ common complexity measures such as the Vapnik-Chervonenkis (VC) dimension \citep{vapnik1998statistical} or Rademacher complexity \citep{bartlett2002rademacher}. These measures provide bounds on the generalization error by capturing the richness of the hypothesis class. Modern studies have further refined our understanding of NN complexity. For instance, norm-based capacity controls \citep{bartlett2017spectrally}, PAC-Bayesian bounds \citep{dziugaite2017computing}, and local Rademacher complexities \citep{bartlett2005local} offer more nuanced insights. In particular, most neural network architectures are known to have finite complexity measures, such as finite VC dimension or other capacity metrics, when considered as a hypothesis class with a fixed number of parameters \citep{AnthonyBartlett1999}.

Consequently, when empirical risk minimization (ERM) is performed over these networks with controlled complexity, standard statistical learning theory guarantees apply, yielding generalization bounds that typically decrease on the order of $O(1/\sqrt{N_{\text{train}}})$ where $N_{\text{train}}$ is the sample size \citep{bartlett2017spectrally, Neyshabur2018, Arora2018} and the constant in the rate depends on the complexity measure. Our PEMC prediction model can be considered as a ERM estimator. Thus, these results ensure that, given sufficiently large training sets and appropriate capacity constraints (e.g., weight regularization or architectural choices), NNs can achieve low statistical error $\epsilon_{e}^{N_{\text{train}}}$.

Finally, there is an optimization error arising from the discrepancy between the empirical risk minimizer within $\mathcal{G}$ and the final trained model $g$. This error accounts for situations where the optimization algorithm does not perfectly identify the empirical risk minimizer. However, we do not consider this optimization error, effectively assuming the presence of an ideal “oracle” for optimization. Consequently, theoretical analyses typically treat our $g$ as readily available empirical minimizer.
\begin{lemma}\label{staterr}
    Suppose $g$ is the empirical risk minimizer of Algorithm \ref{sddd} and the neural network class $\mathcal G$ has a finite VC-dimension or Rademacher complexity. Then, for any $\epsilon>0$, there exists $N_{\text{train}}$ such that 
    \begin{equation*}
        \epsilon_{e}^{N_{\text{train}}} \leq \epsilon.
    \end{equation*}
\end{lemma}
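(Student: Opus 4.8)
The plan is a textbook empirical-risk-minimization (ERM) argument in three moves: uniform concentration of the empirical squared-loss over $\mathcal{G}$, the resulting control on the excess population risk of the trained $g$ over the best-in-class $g^*$, and finally a conversion of that excess risk into the $L^2$ quantity $\epsilon_e^{N_{\text{train}}}=\|g^*-g\|_{L^2(\mathbb{P})}^2$ that the lemma actually asks us to bound. First I would fix notation, writing $R(h):=\mathbb{E}\big[(f_{\boldsymbol{\theta}_{\text{payoff}}}(\boldsymbol{Y})-h(\boldsymbol{\theta},\boldsymbol{X}))^2\big]$ for the population risk and $\hat{R}(h):=\frac{1}{N_{\text{train}}}\sum_{i\in[N_{\text{train}}]}(\text{label}_i-h(\text{feature}_i))^2$ for the empirical risk minimized in \eqref{erm}, so that $g=\argmin_{h\in\mathcal{G}}\hat{R}(h)$ and $g^*=\argmin_{h\in\mathcal{G}}R(h)$ (existence of the minimizers being part of the Appendix regularity conditions, e.g.\ compactly-parametrized networks with bounded weights). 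The usual two-line ERM comparison then gives, deterministically, $0\le R(g)-R(g^*)\le 2\sup_{h\in\mathcal{G}}|R(h)-\hat{R}(h)|$.

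Next comes the concentration step. Under the Appendix assumptions --- boundedness (or suitable tail control) of $f_{\boldsymbol{\theta}_{\text{payoff}}}(\boldsymbol{Y})$ together with a uniform sup-norm bound on the members of $\mathcal{G}$ over the compact ranges of $(\boldsymbol{\theta},\boldsymbol{X})$ --- the induced squared-loss class $\{(\boldsymbol{\theta},\boldsymbol{X},\boldsymbol{Y})\mapsto(f-h)^2:h\in\mathcal{G}\}$ is the composition of $\mathcal{G}$ with a Lipschitz map on a bounded domain, so by the Ledoux--Talagrand contraction inequality it inherits the finite Rademacher complexity (equivalently a finite pseudo-/VC-type dimension in the real-valued sense) hypothesized for $\mathcal{G}$. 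A standard symmetrization/uniform-convergence bound then yields, with probability at least $1-\delta$, $\sup_{h\in\mathcal{G}}|R(h)-\hat{R}(h)|\le C\big(\mathrm{Rad}_{N_{\text{train}}}(\mathcal{G})+\sqrt{\log(1/\delta)/N_{\text{train}}}\big)$, which vanishes as $N_{\text{train}}\to\infty$; in particular it is eventually below any prescribed threshold.

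The delicate move --- and the one I expect to be the real obstacle --- is the conversion step. Because $g_0=\mathbb{E}[f\mid\boldsymbol{\theta},\boldsymbol{X}]$, the residual $f-g_0$ is $L^2$-orthogonal to every function of $(\boldsymbol{\theta},\boldsymbol{X})$, whence $R(h)=\mathbb{E}[(f-g_0)^2]+\|g_0-h\|_{L^2}^2$ and therefore $R(g)-R(g^*)=\|g_0-g\|_{L^2}^2-\|g_0-g^*\|_{L^2}^2$. If $\mathcal{G}$ were convex the projection inequality would immediately give $\epsilon_e^{N_{\text{train}}}=\|g^*-g\|_{L^2}^2\le R(g)-R(g^*)$, and we would be done; but neural-network classes are not convex, so turning a small \emph{excess risk} into a small \emph{distance} $\|g-g^*\|_{L^2}$ genuinely requires extra structure. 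I would close this gap by invoking the mild quadratic-growth (error-bound) regularity condition $R(h)-R(g^*)\ge \mu\,\|h-g^*\|_{L^2}^2$ for $h\in\mathcal{G}$, to be listed among the Appendix assumptions --- alternatively one may simply accept the weaker conclusion $\epsilon_e^{N_{\text{train}}}\lesssim \epsilon_a^{\mathcal{G}}+\sup_{h\in\mathcal{G}}|R(h)-\hat{R}(h)|$, which is harmless downstream because it only inflates the constant in front of $\epsilon_a^{\mathcal{G}}$ in \eqref{bound}, and $\epsilon_a^{\mathcal{G}}$ is itself controllable via Lemma~\ref{uat}.

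Either way, combining the conversion with the concentration bound and choosing $N_{\text{train}}$ large enough (shrinking $\delta$ if necessary) forces $\epsilon_e^{N_{\text{train}}}\le\epsilon$, which completes the proof. In summary, the only non-routine point is the passage from ``vanishing excess risk'' to ``$g$ close to $g^*$ in $L^2$''; everything else is off-the-shelf uniform-convergence theory applied to an ERM over a finite-complexity class, exactly as the placeholder nature of the statement intends.
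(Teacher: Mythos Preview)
Your approach is the fleshed-out version of what the paper only gestures at: the paper does not give a formal proof of this lemma but simply cites standard generalization-bound references (VC dimension, Rademacher complexity, $O(1/\sqrt{N_{\text{train}}})$ rates) and asserts that these ``ensure \ldots\ neural networks can achieve low statistical error $\epsilon_e^{N_{\text{train}}}$.'' Your three-move ERM argument --- the comparison $R(g)-R(g^*)\le 2\sup_{h\in\mathcal{G}}|R(h)-\hat R(h)|$, uniform concentration via contraction, and then a conversion step --- is exactly the content those citations are meant to stand in for.

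More importantly, you have caught a point the paper glosses over entirely: standard generalization bounds control the \emph{excess risk} $R(g)-R(g^*)=\|g_0-g\|_{L^2}^2-\|g_0-g^*\|_{L^2}^2$, not the quantity $\epsilon_e^{N_{\text{train}}}=\|g-g^*\|_{L^2}^2$ that the lemma actually defines. For a non-convex class $\mathcal{G}$ with a non-unique (or poorly isolated) best-in-class minimizer, the latter need not vanish even as the former does, so the lemma as literally stated requires something like your quadratic-growth condition or an identifiability assumption on $g^*$. Your alternative --- accepting $\epsilon_e^{N_{\text{train}}}\lesssim\epsilon_a^{\mathcal{G}}+\text{(excess risk)}$ via the triangle inequality through $g_0$ --- is both correct and, as you say, harmless for the downstream use in Lemma~\ref{learning_theory_summary}, since there one simultaneously drives $\epsilon_a^{\mathcal{G}}\to 0$ by enlarging $\mathcal{G}$. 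The paper explicitly labels this a ``placeholder'' result; your version is the more honest one.
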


\subsubsection{Technical Proof of Lemma \ref{learning_theory_summary}}\label{proofoflemma6}

Lemma \ref{uat} and Lemma \ref{staterr} allow us to control $\epsilon_{\text{total}}$ using \eqref{bound}, which allows us to prove Lemma \ref{learning_theory_summary} with the help of the following technical definition and lemma.
\begin{definition}
    Define $\epsilon_{\text{total}}(\boldsymbol{\theta}):=\mathbb E_{\boldsymbol{X}(\boldsymbol{\theta})\sim \text{risk neutral measure}(\boldsymbol{\theta})}\big(g(\boldsymbol{X}(\boldsymbol{\theta}))-g_0(\boldsymbol{X}(\boldsymbol{\theta}))\big)^2$.
\end{definition}
As a result, we have $\mathbb E_{\boldsymbol{\theta}\sim\boldsymbol{\Theta}}\epsilon_{\text{total}}(\boldsymbol{\theta}) = \epsilon_{\text{total}}$, which allows us to establish the following lemma. 
\begin{lemma}\label{diffvar}
	Let $G$ be a second moment bound for $g$, i.e.,
	\begin{align*}
		\mathbb{E}_{\boldsymbol{X}(\boldsymbol{\theta})\sim \text{risk neutral measure}(\boldsymbol{\theta})} g^2(\boldsymbol{X}(\boldsymbol{\theta})) \leq G.
	\end{align*}
	uniformly for all $\boldsymbol{\theta}\in\boldsymbol{\Theta}$. Then, for each $\boldsymbol{\theta}\in\boldsymbol{\Theta}$,
	\begin{align*}
		|\sigma_f^2 - \sigma_g^2 - \sigma_{f-g}^2| \leq 2\sqrt{\epsilon_{\text{total}}(\boldsymbol{\theta})}\sqrt{G}.
	\end{align*}
\end{lemma}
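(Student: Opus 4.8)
The plan is to reduce the left-hand side to (twice) a single covariance, exploit the orthogonality property of the true regressor $g_0$, and then close with Cauchy--Schwarz. Throughout I work at a fixed $\boldsymbol{\theta}\in\boldsymbol{\Theta}$, writing $f$ for $f_{\boldsymbol{\theta}_{\text{payoff}}}(\boldsymbol{Y}(\boldsymbol{\theta}))$, $g$ for $g(\boldsymbol{\theta},\boldsymbol{X}(\boldsymbol{\theta}))$, and $g_0$ for $g_0(\boldsymbol{\theta},\boldsymbol{X}(\boldsymbol{\theta}))$, with all expectations and (co)variances taken under the risk-neutral measure for that $\boldsymbol{\theta}$.

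First, expand $\sigma_{f-g}^2 = \sigma_f^2 + \sigma_g^2 - 2\,\mathrm{Cov}(f,g)$, which gives the exact identity
\[
\sigma_f^2 - \sigma_g^2 - \sigma_{f-g}^2 \;=\; 2\bigl(\mathrm{Cov}(f,g) - \sigma_g^2\bigr) \;=\; 2\,\mathrm{Cov}(f-g,\,g).
\]
So it suffices to show $|\mathrm{Cov}(f-g,g)|\le \sqrt{\epsilon_{\text{total}}(\boldsymbol{\theta})}\,\sqrt{G}$. Next, use that $g_0 = \mathbb{E}[f\mid\boldsymbol{\theta},\boldsymbol{X}]$ is the $\mathscr{L}^2$-projection of $f$ onto functions of $(\boldsymbol{\theta},\boldsymbol{X})$ (pointwise in $\boldsymbol{\theta}$ this is just $\mathbb{E}[f\mid\boldsymbol{X}(\boldsymbol{\theta})]$), exactly as invoked in the proof of Lemma~\ref{idealg}: $\mathbb{E}[(f-g_0)h(\boldsymbol{X})]=0$ for every square-integrable $h$. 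Taking $h=g$ and $h\equiv 1$ yields $\mathrm{Cov}(f-g_0,\,g)=0$. Hence
\[
\mathrm{Cov}(f-g,\,g) \;=\; \mathrm{Cov}(f-g_0,\,g) + \mathrm{Cov}(g_0-g,\,g) \;=\; \mathrm{Cov}(g_0-g,\,g).
\]

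Finally, apply Cauchy--Schwarz to this covariance:
\[
|\mathrm{Cov}(g_0-g,\,g)| \;\le\; \sqrt{\mathrm{Var}(g_0-g)}\,\sqrt{\mathrm{Var}(g)} \;\le\; \sqrt{\mathbb{E}[(g_0-g)^2]}\,\sqrt{\mathbb{E}[g^2]} \;\le\; \sqrt{\epsilon_{\text{total}}(\boldsymbol{\theta})}\,\sqrt{G},
\]
where the middle step uses $\mathrm{Var}(Z)\le\mathbb{E}[Z^2]$, and the last step uses the definition $\epsilon_{\text{total}}(\boldsymbol{\theta}) = \mathbb{E}_{\boldsymbol{X}(\boldsymbol{\theta})}[(g-g_0)^2]$ together with the hypothesized uniform second-moment bound $\mathbb{E}_{\boldsymbol{X}(\boldsymbol{\theta})}[g^2]\le G$. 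Combining with the identity above gives $|\sigma_f^2-\sigma_g^2-\sigma_{f-g}^2| = 2|\mathrm{Cov}(f-g,g)| \le 2\sqrt{\epsilon_{\text{total}}(\boldsymbol{\theta})}\sqrt{G}$, as claimed.

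The argument is essentially routine; the only point requiring a little care is the bookkeeping around covariance versus raw second moment (making sure the centering constants drop out when invoking the orthogonality relation, which is why one also plugs in $h\equiv 1$), and noting that the orthogonality property of $g_0$ holds pointwise in $\boldsymbol{\theta}$ so that the whole estimate is valid for each fixed $\boldsymbol{\theta}$ rather than only in aggregate. Square-integrability of $f$ (hence of $g_0$) is needed to justify the projection step; if $g$ is only assumed to have finite second moment via $G$, all terms appearing are well defined.
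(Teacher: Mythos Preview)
Your proof is correct and follows essentially the same route as the paper: reduce $|\sigma_f^2-\sigma_g^2-\sigma_{f-g}^2|$ to $2|\mathrm{Cov}(f-g,g)|$ via the variance decomposition, use the defining property of $g_0$ to rewrite this as $2|\mathrm{Cov}(g_0-g,g)|$, and finish with Cauchy--Schwarz together with the bounds $\mathbb{E}[(g_0-g)^2]=\epsilon_{\text{total}}(\boldsymbol{\theta})$ and $\mathbb{E}[g^2]\le G$. Your write-up is in fact a bit more explicit than the paper's (e.g., the $h\equiv 1$ remark to handle centering), but the argument is the same.
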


\begin{proof}{Proof}
    Fix any $\boldsymbol{\theta}\in\boldsymbol{\Theta}$. From the variance decomposition, we have
\begin{align*}
	\mathrm{Var}(f) = \mathrm{Var}(g) + \mathrm{Var}(f-g) + 2\,\mathrm{Cov}(f-g,g),
\end{align*}
which implies
\begin{align*}
	|\sigma_f^2 - \sigma_g^2 - \sigma_{f-g}^2| \leq 2|\mathrm{Cov}(f-g,g)|.
\end{align*}
Conditioning on $\boldsymbol{X}$ and utilizing the properties of $g_0$, we find that $\mathrm{Cov}(f-g,g) = \mathrm{Cov}(g_0 - g,g)$. Applying the Cauchy–Schwarz inequality yields the desired inequality.\Halmos
\end{proof}

We can now prove Lemma \ref{learning_theory_summary}.
\begin{proof}{Proof of Lemma \ref{learning_theory_summary}}
    Besides the assumptions in Lemma \ref{uat}, Lemma \ref{staterr} and Lemma \ref{diffvar}, we further assume
    \begin{equation}\label{upandlowerboundsecond}
        g\leq \mathbb{E}_{\boldsymbol{X}(\boldsymbol{\theta})\sim \text{risk neutral measure}(\boldsymbol{\theta})} g^2(\boldsymbol{X}(\boldsymbol{\theta})), \mathbb{E}_{\boldsymbol{Y}(\boldsymbol{\theta})\sim \text{risk neutral measure}(\boldsymbol{\theta})} f^2(\boldsymbol{Y}(\boldsymbol{\theta})) \leq G,
    \end{equation}
    for some $0<g<G<\infty$, uniformly for all $\theta\in\Theta$. Then, based on Lemma \ref{uat}, \ref{staterr} and Markov inequality, we can find $\mathcal G$ and $N_{\text{train}}$ such that, with probability at least $1-\delta$, the randomly sampled $\boldsymbol{\theta}\sim\boldsymbol{\Theta}$ satisfies
\begin{equation*}
    \epsilon_{\text{total}}(\boldsymbol{\theta})\leq \epsilon,
\end{equation*}
which, together with Lemma \ref{diffvar} gives
\begin{equation*}
        |\sigma_f^2 - \sigma_g^2 - \sigma_{f-g}^2| \leq O(\epsilon), 
    \end{equation*}
with the constant in $O$ not dependent on $\boldsymbol{\theta}$.
Then, for at least $1-\delta$ fraction of $\boldsymbol{\theta}\in\boldsymbol{\Theta}$, as one shrinks $\epsilon\to0$, $\epsilon_{\text{total}}(\boldsymbol{\theta})\leq \epsilon$ and $|\sigma_f^2 - \sigma_g^2 - \sigma_{f-g}^2| \leq O(\epsilon)$ implies $\sigma_g^2\to\sigma_{g_0}^2$ and $\sigma_{f-g_0}^2\to\sigma_{f-g}^2$ uniformly for all such $\boldsymbol{\theta}$. Since $\sigma_f^2 = \sigma_{g_0}^2 + \sigma_{f-g_0}^2$, this further implies $\frac{\sigma_g}{\sigma_f}\to\frac{\sigma_{g_0}}{\sigma_f}$ and $\frac{\sigma_{f-g}}{\sigma_f}\to\frac{\sigma_{f-g_0}}{\sigma_f}$, given the boundedness of \eqref{upandlowerboundsecond}. Thus, $\frac{\mathrm{Var}(PEMC)}{\mathrm{Var}(MC)}$ in \eqref{ration_funda} $\to$ $r(\rho,c)$ in \eqref{ratio_number}, or equivalently, with probability at least $1-\delta$, the randomly sampled $\boldsymbol{\theta}\sim\boldsymbol{\Theta}$ satisfies
\begin{equation*}
    \frac{\mathrm{Var}(\hat{\mu}_{PEMC})}{\mathrm{Var}(\hat{\mu}_{MC})}=r(\rho,c) + O(\epsilon),
\end{equation*}
where the constant in $O$ does not depend on $\boldsymbol{\theta}$.
This concludes the proof. \Halmos
\end{proof}

\subsubsection{Technical Proof of Theorem \ref{learningquality}}
When $\boldsymbol{\theta}$ is fixed and we assume
\begin{equation*}
        \mathbb{E}_{{X}(\boldsymbol{\theta})\sim\text{risk neutral measure}(\boldsymbol{\theta})}
	\biggl((g - g^*)(\boldsymbol{\theta}, {X}(\boldsymbol{\theta}))\biggr)^2 \leq \epsilon,
\end{equation*}
$\rho=corr(f,g^*) = \frac{\sigma_{g^*}}{\sigma_f}$ as shown in the proof of Theorem \ref{idealg}. Using the same techniques, we still have 
$\mathbb E(f-g^*)h = 0$ for all $\mathcal F_X$-measurable $h$. Plug in $h=g^*-g$, we have $\mathbb E(f-g^*)(g^*-g) = 0$ and consequently
\begin{equation*}
    \sigma^2_{f-g} = Var(f-g^*+g^*-g) = Var(f-g^*)+Var(g^*-g) \leq \sigma^2_{f-g^*} + \epsilon.
\end{equation*}

So we have $\frac{\sigma_{f-g}}{\sigma_f}\leq \sqrt{1-\rho^2+\frac{\epsilon}{\sigma^2_f}}$. On the other hand, we also have 
\begin{equation*}
    \sigma_g = \sqrt{Var(g^*+g-g^*)} \leq \sqrt{Var(g^*)} + \sqrt{Var(g-g^*)} \leq \rho\sigma_f+\sqrt{\epsilon}.
\end{equation*}
The rest of the proof follows from plugging Theorem \ref{ratioofvariance}.

\section{Numerical Results on Asian Option Pricing}\label{sec:empirical_evaluation_gbm}
We conduct an additional experiment on the arithmetic average Asian call option under the GBM model. In this example, there is a well-known CV constructed using the geometric average Asian option, whose closed-form mean can be obtained using the Black-Scholes formula. This provides us a benchmark to compare the performance of PEMC with the established CV method. However, it is important to note that this example is very special in that an effective CV with known mean is available. In all our real use cases in Section \ref{apps}, no effective CV has been known which more clearly motivates PEMC.



\subsection{Experimental Setup}

We implement the PEMC procedure detailed in Section~\ref{sec:illustration}, utilizing a NN as the predictive model. The asset price follows the GBM model:
\begin{align}\label{gbm}
dS_t =& r S_t dt + \sigma S_t dW_t^S.
\end{align}
The payoff calculation $P_A(\{S_t\}_t) = (\frac{1}{n_D}\sum_{i=1}^{n_D} S_{t_i} - K, 0)^+$ with $n_D=252$, where simulations are carried out using daily time increments (i.e., $\Delta t=1$). The parameter space $\boldsymbol{\theta}:=(r,S_0,\sigma,K)\in\boldsymbol{\Theta}$ is: $r \in [0.01, 0.03]$, $S_0 \in [80, 120]$, $\sigma \in [0.05, 0.25]$, and $K \in [90, 110]$. For our feature $\boldsymbol{X}$, we experiment with the whole sum of Brownian increments $W_T^S:=\sum_{j=1}^{252}\Delta W^S_{j}$ with $\dim(\boldsymbol{X})=1$. We also tried a more granular representation where we partition 252 Brownian increments evenly into 14 parts, each summing 18 consecutive increments: $[\boldsymbol X]_i = \sum_{j=18(i-1)+1}^{18i}\Delta W^S_j$ for $i\in [14]$, yielding $\dim(\boldsymbol{X})=14$. The NN is trained on a dataset comprising $N_{\text{train}} = 1.28 \times 10^6$ samples, generated by uniformly sampling $\boldsymbol{\theta}\sim\boldsymbol{\Theta}$.

The NN architecture consists of three components: a $\boldsymbol{\theta}$ network branch, a $\boldsymbol{X}$ network branch, and a combined network. This architecture incorporates several modern deep learning practices, including Batch Normalization \citep{ioffe2015batch}, Rectified Linear Unit (ReLU) activations \citep{nair2010rectified}, and skip connections \citep{he2016deep} to enhance training stability and model performance. The $\boldsymbol{\theta}$ network branch processes the 4-dimensional input $\boldsymbol{\theta}$ using two fully connected layers with Batch Normalization and ReLU activation functions, outputting a 10-dimensional feature vector for subsequent processing. The $\boldsymbol{X}$ network branch uses two fully connected layers with width $\max(32, 2\dim(\boldsymbol{X}))$ neurons. Finally, the combined network integrates outputs from both branches through concatenation and employs skip connections featured in ResNet architectures to preserve information flow. The concatenated features pass through two additional fully connected layers with Batch Normalization and ReLU activation to produce the final prediction for PEMC. The network is implemented using PyTorch \citep{paszke2019pytorch} and trained using the Adam optimizer \citep{kingma2014adam} with a learning rate of $1 \times 10^{-3}$, incorporating dropout layers \citep{srivastava2014dropout} with a rate of 0.5 after each hidden layer to prevent overfitting. 
\subsection{Performance Comparisons}
For comparison, we implement PEMC, MC and the classical CV method with geometric Asian option $$P_G(\{S_t\}_t) = ((\prod_{i=1}^{n_D}S_{t_i})^{\frac{1}{{n_D}}} - K, 0)^+,$$ for the evaluation at $\boldsymbol{\theta}=(r,S_0,\sigma,K)=(0.02,100,0.2,100)$. The classical CV estimator using the geometric Asian option as CV (which we will now simply call ``geometric CV estimator" for convenience) is given by $$\text{CV} = \frac{1}{n}\sum_{i=1}^n \left( P_A(\{S_t^{(i)}\}_t) -  P_G(\{S_t^{(i)}\}_t)  \right)+P_G^{\text{exact}}(\boldsymbol{\theta}),$$ where $P_G^{\text{exact}}(\boldsymbol{\theta})$ is the closed-form price of geometric Asian option (with correction regarding $n_D$) \citep{glasserman2013monte}. We first evaluate the mean of $2\times 10^{9}$ MC samples as the ground truth value $A_0$. Then, we evaluate $n$ sample average of the MC, PEMC ($N=10n$) and the geometric CV estimator for $n=1000,4000,9000$ respectively, to record 3 different estimator for $A_0$. After that, we repeat the experiments 300 times to get 300 estimators for each $n\in\{1000,4000,9000\}$ and each method in $\{\text{MC, PEMC, Geometric CV}\}$, to compare against $A_0$. The performance of the estimators is summarized in Table \ref{tab_mse_comparison} and Figure \ref{fig2}.

\begin{table}[h]
    \centering
    \caption{Root Mean Squared Error from 300 Experiments}
    \begin{tabular}{lccc}
        \hline
        \textbf{Method} & $n=1000$ & $n=4000$ & $n=9000$ \\
        \hline
        Standard MC & 0.2376 & 0.1173 & 0.0854 \\
        PEMC ($\dim \boldsymbol X =1$) & 0.1509 & 0.0809 & 0.0481 \\
        PEMC ($\dim \boldsymbol X =14$) & 0.0781 & 0.0397 & 0.0261 \\
        Geometric CV & 0.0099 & 0.0051 & 0.0036 \\
        \hline
    \end{tabular}
    \label{tab_mse_comparison}
\end{table}

\begin{figure}
     \FIGURE
   {\includegraphics[scale=0.4]{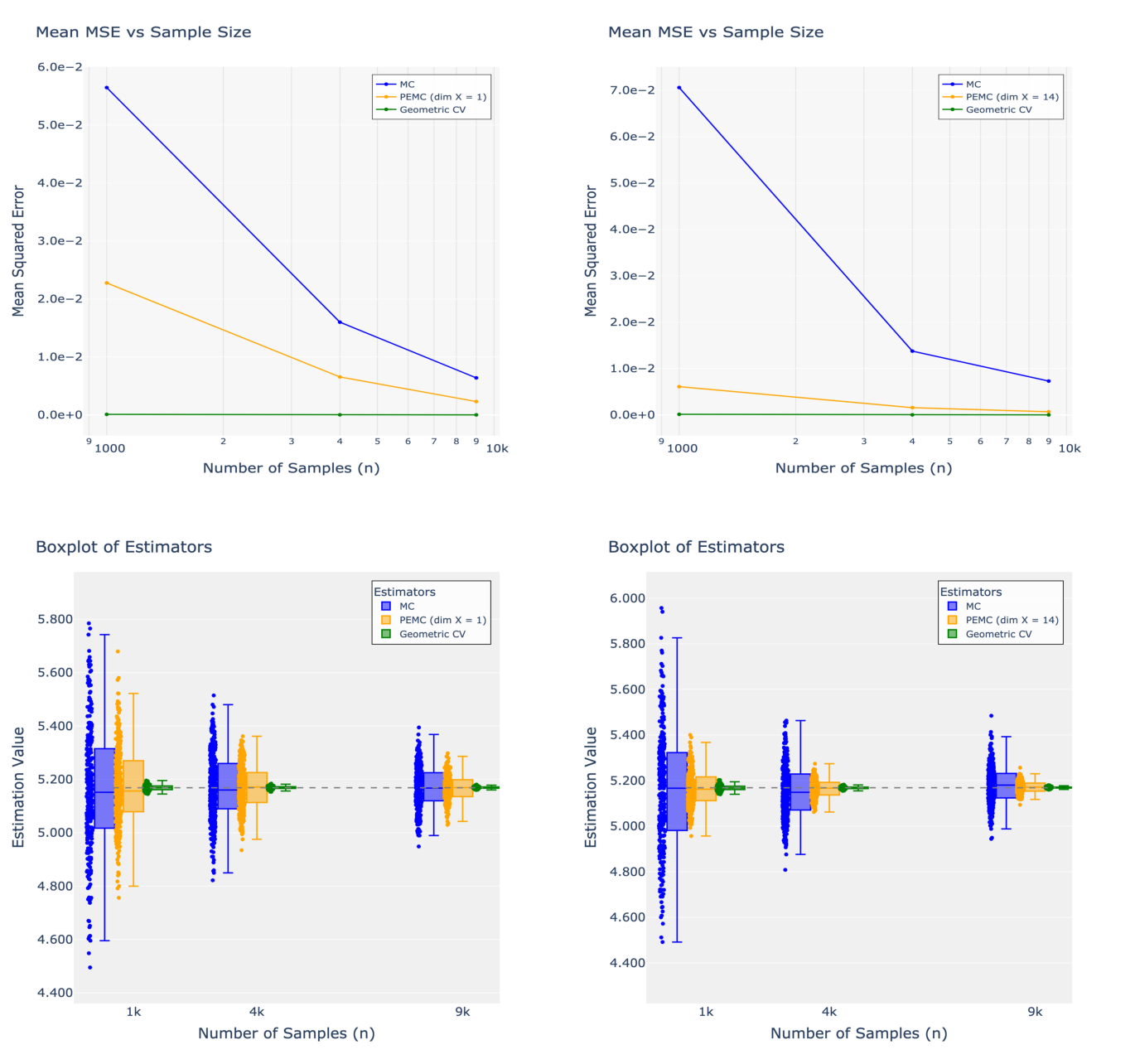}}  
   {Performance of Estimators for Asian Options. \label{fig2}}
{Comparison of estimator across MC, PEMC, and Geometric CV based on 300 experiments. Top: MSE plot as a function of $n$ for (left) PEMC with $\dim \boldsymbol X=1$ and (right) PEMC with $\dim \boldsymbol X=14$, both compared against MC and Geometric CV. Bottom: Corresponding boxplots of the 300 estimates. }
\end{figure}

Given that all estimators are unbiased, our analysis focuses exclusively on variance-related metrics. The experimental results demonstrate the superior performance of the PEMC framework compared to the standard MC estimator. In particular, in all sample sizes, the PEMC framework delivers visible variance reduction over MC: PEMC's basic variant with $\dim \boldsymbol X=1$ achieves a 30-40\% reduction in root mean squared error (RMSE) relative to MC, while the more sophisticated variant with $\dim \boldsymbol X=14$ attains an impressive 65-70\% reduction in RMSE over MC. However, the geometric CV estimator emerges as the most efficient estimator by far, with RMSE an order of magnitude smaller than MC.


\subsection{Using PEMC as a ``Boost" to Known CV}
Lastly, we demonstrate how PEMC can be used to improve existing CV. This supports the creative use of PEMC for further efficiency enhancement even for problems that already possess some variance reduction approaches. In particular, when a CV with known mean is available, such as in this current example, PEMC can be built upon it to potentially achieve even greater variance reduction. To see how, in Algorithm \ref{sddd}, we change the label from $P_A(\{S_t\}_t)$ (MC baseline) to $P_A(\{S_t\}_t)-P_G(\{S_t\}_t)+P_G^{\text{exact}}(\boldsymbol{\theta})$ (Geometric CV baseline), and let the model $g$ learn

\begin{samepage}
\begin{align}\label{boost_train}
    g(\boldsymbol{\theta}, \boldsymbol{X}(\boldsymbol{\theta}))\approx&\mathbb E_{\text{risk neutral measure}(\boldsymbol{\theta})}\bigg[P_A(\{S_t\}_t)-P_G(\{S_t\}_t)+P_G^{\text{exact}}(\boldsymbol{\theta})\bigg|(\boldsymbol{\theta}, \boldsymbol{X}(\boldsymbol{\theta}))\bigg]\nonumber\\
    =&\mathbb E_{\text{risk neutral measure}(\boldsymbol{\theta})}\bigg[P_A(\{S_t\}_t)-P_G(\{S_t\}_t)\bigg|(\boldsymbol{\theta}, \boldsymbol{X}(\boldsymbol{\theta}))\bigg]+P_G^{\text{exact}}(\boldsymbol{\theta}).
\end{align}
\end{samepage}

This formulation suggests we can train $g$ on the difference of $P_A(\{S_t\}_t)-P_G(\{S_t\}_t)$ as labels and then add back the closed-form $P_G^{\text{exact}}(\boldsymbol{\theta})$ to $g$. We implemented this approach, which we call ``Boost PEMC," using the basic PEMC variant where $\boldsymbol{X} = W_T^S:=\sum_{j=1}^{252}\Delta W^S_{j}$ with $\dim \boldsymbol{X}=1$. As shown in Table \ref{tab_mse_comparison_boost} and Figure \ref{figboost}, this Boost PEMC indeed achieves variance reduction over the sophisticated Geometric CV estimator, delivering a 35-40\% reduction in RMSE, remarkably consistent with the relative performance gain we observed when applying PEMC to the MC baseline in Figure \ref{fig2}.
\begin{table}[h]
    \centering
    \caption{Root Mean Squared Error from 300 Experiments}
    \begin{tabular}{lccc}
        \hline
        \textbf{Method} & $n=1000$ & $n=4000$ & $n=9000$ \\
        \hline
        
        Boost PEMC ($\dim \boldsymbol X =1$) & 0.0065 & 0.0031 & 0.0021 \\
       
        Geometric CV & 0.0099 & 0.0051 & 0.0036 \\
        \hline
    \end{tabular}
    \label{tab_mse_comparison_boost}
\end{table}
\begin{figure}
     \FIGURE
   {\includegraphics[scale=0.4]{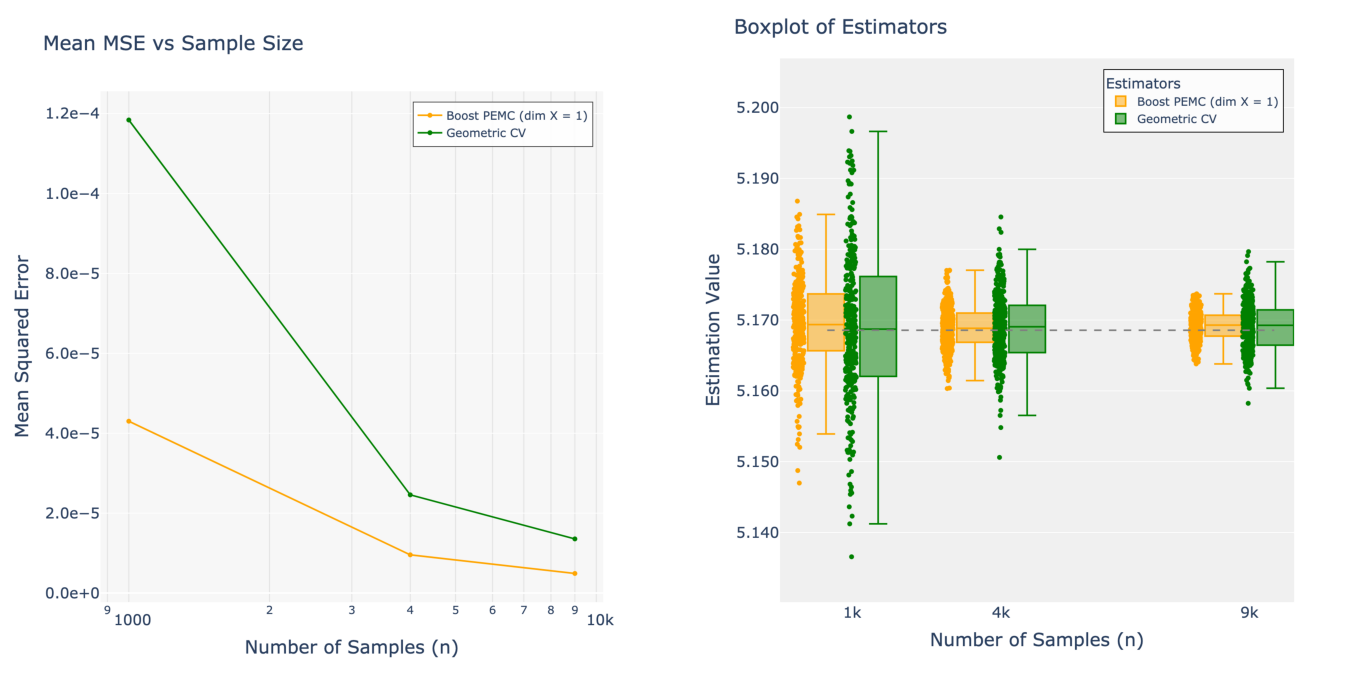}}  
   {Performance of CV-based Estimators for Asian Options. \label{figboost}}
{Comparison of estimator performance for Boost PEMC and Geometric CV based on 300 experiments. Left: MSE plot as a function of $n$. Right: Corresponding boxplots of 300 estimates. }
\end{figure}


\section{Additional Details on the Ambulance Diversion Example}\label{amb_div}
We provide more details on the ambulance diversion policy evaluation example in Section \ref{sec:EMS}, where we optimize the diversion threshold $\tau$ to minimize mortality. Here we detail the simulation's distributional assumptions. Patient arrivals are modeled by a nonhomogeneous Poisson process, $\lambda(t)$, calibrated to historical data from the pandemic's peak. Upon arrival, each patient is assigned a triage level $\ell \in \{1,\dots, 5\}$ via a multinomial distribution that shifts according to COVID surges. Service times in the emergency department follow exponential distributions with rate $\mu_\ell$. At each event, if the queue length at a chosen hospital exceeds a threshold, the ambulance is diverted to an alternative facility, trading off increased travel against avoiding overcrowding. Specifically, our discrete-event simulation (spanning a one-week horizon) envisions two hospitals (A and B) under nonhomogeneous Poisson arrivals—with ``normal" or ``crisis" modes dictating time-varying rates—and assigns each patient a triage level from ${1,2,3,4,5}$. Service rates $(\mu_{\ell}^A,\mu_{\ell}^B)$ differ by hospital and triage $\ell$, while death times follow a Weibull distribution parameterized by $(\alpha,\beta)$.

Let $\boldsymbol{\lambda} = (\lambda_{h,d})$ denote the matrix of arrival rates across hours and days. Patients arrive with triage levels $L\in\{1,\dots,5\}$ (where 1 is most critical), drawn from multinomial probabilities $\{p_\ell, p_\ell^\text{crisis}\}$. Within each ED, service times for triage $\ell$ follow exponential distributions $\exp(\mu_{\ell})$, and mortality follows a logistic model: $\mathbb{P}(\text{death} \mid L=\ell, W=w) = \frac{1}{1 + \exp(-(\beta_{0,\ell} + \beta_{1,\ell}w))} $. Notably, 25\% of arrivals come by ambulance. If an ED's queue length $Q$ exceeds a threshold $\tau$, any new ambulances divert to the other ED, incurring extra travel time. In a data-adaptive version, $\tau$ is updated each shift using predicted arrival rates. All model inputs—arrival rates, triage probabilities, and service rates—come from empirical data \cite{Wrede2020}, while the logistic mortality parameters $\beta_{0,\ell}$, $\beta_{1,\ell}$ are tuned (with minor per-patient perturbations) to capture heterogeneous risk profiles.

In this example, we construct the feature of \[
\begin{aligned}
(\textcolor{blue}{\tau},\ &\ 
\textcolor{blue}{\texttt{hosp2\_doctor\_shift\_counts}},\ 
\textcolor{blue}{\texttt{crisis\_factor}},\ 
\textcolor{purple}{\texttt{hopsital1\_max\_patience}}, \\
&\ \textcolor{purple}{\texttt{hopsital2\_max\_patience}},\ 
\textcolor{purple}{\texttt{total\_number\_of\_patients}},\ 
\textcolor{purple}{\texttt{total\_service\_time}}, \\
&\ \textcolor{purple}{\texttt{max\_patience\_time}},\ 
\textcolor{purple}{\texttt{total\_death\_time}}, \\
&\ \textcolor{purple}{\texttt{total\_number\_of\_patients\_without\_life\_threatening\_symptoms}}, \\
&\ \textcolor{purple}{\texttt{triage1\_count}},\ 
\textcolor{purple}{\texttt{triage2\_count}},\ 
\textcolor{purple}{\texttt{triage3\_count}},\ 
\textcolor{purple}{\texttt{triage4\_count}},\ 
\textcolor{purple}{\texttt{triage5\_count}}).
\end{aligned}
\]

The \textcolor{blue}{blue} features are $\boldsymbol{\theta}$ while the \textcolor{purple}{purple} ones are $\boldsymbol{X}$. The simulation starts by first sampling the number of patients, which follows a Poisson distribution with parameter of $\lambda T$, denoted by $N_p$. \texttt{hopsital1\_max\_patience} and \texttt{hopsital2\_max\_patience} can be simulated with max of weilbull distribution, and \texttt{max\_patience} can be computed by taking the max of these quantities (which has closed-form density function, i.e., maximum of Weibull's). \texttt{total\_service\_time} can be simulated by first sampling service time for $N_p$ times with $\exp(\lambda_{\text{service}})$ and summing up (which has closed-form density function, i.e., sum of exponential is Gamma ). \texttt{total\_death\_time} can be simulated by taking first sampling individual using inversion sampling from $F^{-1}(u)$, where $F^{-1}:$

\[
\begin{aligned}
F^{-1}_{t, B, \nu}(u) = 
\begin{cases} 
    0, & \text{if } u \leq \text{mortality\_func}(0, t, B, \nu) \\[10pt]
    \infty, & \text{if } u \geq K[t] \\[10pt]
    -\frac{\ln \left( \left( \frac{K[t] - A[t]}{u - A[t]} \right)^{(\nu + 0.25t)} - 1 \right)}{3t (B + 5 - t)}
    + \frac{(-2 + 2.5a)t}{(B + 5 - t)}, & \text{otherwise}.
\end{cases}
\end{aligned}
\]

\[
K = \{1: 1.0, 2: 0.9, 3: 0.05, 4: 0.02, 5: 0.01\}
\]
\[
A = \{1: 0.6, 2: 0.1, 3: 0.0, 4: 0.0, 5: 0.0\}
\],

\[
\text{mortality\_func}(x, a, t, B, \nu) =
A[t] + \frac{K[t] - A[t]}{\left( 1 + (3t) \cdot \exp \left( -(B + 5 - t)x + (-2+2.5a) t \right) \right)^{\frac{1}{\nu + 0.25t}}}
\]
where $B \sim \text{Unif}(2.5, 3.5)$, $\nu \sim \text{Unif}(1.5, 2.5)$, and the keys of $K$ and $A$ are the triage levels. Triage count of each level can be simulated by firstly sampling from multi-nominal distribution with probabilities from triage 1 to 5 as $\{0.1098, 0.2761, 0.4596, 0.1297, 0.0248\}$, and then counting the number of each category.

The quantity we are estimating is the mortality. The PEMC predictor is a random forest. The amount of training data is $10^5$. During the pre-training stage, we collect the training data by the sampling procedure in Table \ref{tab:training_data}.

\begin{table}[h]
    \centering
     \caption{Training Data Sampling Distributions}
    \label{tab:training_data}
    \begin{tabular}{|l|c|}
        \hline
        \textbf{Variable} & \textbf{Distribution} \\ 
        \hline
        $\tau$ & $\text{Unif}\{0, 50\}$ \\
        \texttt{hosp2\_doctor\_shift\_counts\_1} & $\text{Unif}\{1, 3\}$ \\
        \texttt{hosp2\_doctor\_shift\_counts\_2} & $\text{Unif}\{1, 3\}$ \\
        \texttt{hosp2\_doctor\_shift\_counts\_3} & $\text{Unif}\{2, 6\}$ \\
        \texttt{hosp2\_doctor\_shift\_counts\_4} & $\text{Unif}\{1, 4\}$ \\
        \texttt{hosp2\_doctor\_shift\_counts\_5} & $\text{Unif}\{2, 5\}$ \\
        \texttt{hosp2\_doctor\_shift\_counts\_6} & $\text{Unif}\{1, 3\}$ \\
        \texttt{crisis} & $\text{Unif}[1, 2]$ \\
        \hline
    \end{tabular}
\end{table}

During the evaluating stage, we fix \texttt{ciris} $= 1.25$, \texttt{hosp2\_doctor\_shift\_counts} $= [2, 2, 4, 2, 4, 1]$ and a set of thresholds in $\{1,...,40\}$.

\begin{table}[ht]
\centering
\caption{\small{Error analysis for mortality estimation with varied thresholds.}}
\label{tab:example}
\scalebox{1.0}{%
\begin{tabular}{c c c c c}
\toprule
\textbf{Threshold} & \multicolumn{2}{c}{\textbf{MSE} $\downarrow$} & \multicolumn{2}{c}{\textbf{MAE} $\downarrow$} \\
 & MC & PEMC & MC & PEMC \\
\midrule
0  & 10.974 & 4.142 & 2.765 & 1.587 \\
4  & 10.749 & 3.899 & 2.751 & 1.500 \\
8  & 10.882 & 4.080 & 2.786 & 1.554 \\
12 & 10.243 & 3.609 & 2.638 & 1.466 \\
16 & 9.777 & 3.326 & 2.555 & 1.422 \\
20 & 9.535 & 3.266 & 2.497 & 1.416 \\
24 & 9.291 & 3.181 & 2.456 & 1.394 \\
28 & 9.236 & 3.192 & 2.445 & 1.403 \\
32 & 9.212 & 3.145 & 2.434 & 1.392 \\
36 & 9.175 & 3.120 & 2.427 & 1.382 \\
40 & 9.174 & 3.122 & 2.428 & 1.376 \\
\bottomrule
\end{tabular}
}
\end{table}

\section{Further Implementation Details of PEMC}\label{xvastuff}
We discuss several further implementation details that are important for successfully applying PEMC in practice.

\subsection{Parameter Space $\Theta$}
When determining the parameter space $\boldsymbol{\Theta}$, a key consideration is how frequently the model will be updated. To ensure the performance of PEMC, it is desirable that the training data encompass all practical scenarios the model is expected to encounter within the designated update period. For example, if one is calibrating a Heston model \eqref{heston} for certain SPY ETF, on a given trading day, the calibration yields a specific set of parameters: $(S_0, \nu_0, r, \eta,\delta,\rho, \kappa)=(520,20\%,4.5\%,0.04, 0.3,-0.7,0.2)$ and the user of PEMC chooses to update the prediction model in PEMC once a month, then, based on historical data and market conditions, financial practitioners could make reasonable guesses or confidence intervals about the bounds within which these parameters are likely to fluctuate over the coming month. 

This process is conceptually similar to the selection of uncertainty sets in robust optimization \citep{ben2009robust} or distributionally robust optimization \citep{esfahani2018data}, where the goal is to include realizations of parameters with high likelihood. However, unlike robust optimization approaches, which often favor data-dependent uncertainty sets, our focus here is also guided by the expertise of financial engineers. The training parameter space is designed to reflect realistic and practical scenarios derived from domain knowledge, rather than being strictly driven by statistical guarantees. In applications, we construct reasonable $\boldsymbol\Theta$ for practical models, such as forward curves in HJM modeling \citep{glasserman2013monte} or local stochastic volatility 2D grids \citep{gatheral2006volatility}, among others.

\subsection{Eliminating Data Storage Overheads
}
In typical ML workflows, training datasets—often costly, scarce, and carefully curated—are stored and reused extensively. By contrast, within the PEMC framework, the nature of data generation during the prediction model training in Algorithm \ref{sddd}, actually does not require extensive data storage. This difference stems from two main considerations. First, the volume of training data required to achieve a well-performing prediction model can be large. Storing all of it would be both expensive and unnecessary. In PEMC, the training data can be produced directly via MC simulation, ensuring an effectively unlimited supply. Second, this flexibility allows for a more efficient workflow. Data can be generated “on the fly” and processed in streaming fashion. For example, to train on a large number of $N_{\text{train}}$ samples, one could iteratively produce small batches, train the model on these batches, and discard them after training. In this manner, we have successfully trained models in the applications using data on the order of $10^7$--$10^8$.

\subsection{Evaluation Metrics}
While training the NN estimator $g$ using MSE loss is common, it is not always clear how to interpret the resulting MSE score. Unlike some well-established benchmarks (e.g., classification accuracy), there is no canonical threshold or known “good” MSE value for a given problem. This ambiguity makes it challenging to determine when the network is sufficiently trained. To address this, we can exploit the fact that $g$ is meant to represent the conditional expectation $g=\mathbb{E}[f\mid\text{input}]$. If the network approximates this expectation well, then the sample average of $g(\boldsymbol{X})$ should be close to the sample average of $f(\boldsymbol{Y})$ over a given dataset. One practical diagnostic is to compute the Mean Absolute Relative Error (MARE) between these two averages. If $\mathbb E[g(\boldsymbol{X})] \approx \mathbb E[f(\boldsymbol{Y})]$, it provides a tangible indication that $g$ is capturing the underlying expectation. Our empirical experience suggests this criterion is very effective in practice (a 5-1\% MARE typically indicates exceptional PEMC),  complementing common techniques like early stopping in ML workflows. More importantly, as the we have shown in the theory, if $g$ closely approximates the conditional expectation, the variance reduction in PEMC is guaranteed—even if marginally—relative to standard MC.

\section{Additional Details on Hyper-Parameters for Variance Swaps}\label{Experiment_tab_graph} Tables \ref{tab:nn_params} and \ref{tab:hyper-param-swaption} detail the hyper-parameter setups for the NNs presented in Section \ref{sec:var swap}. 

\begin{table}[h]
    \centering
    \caption{Neural Network Architecture Parameters}
    \begin{tabular}{l|l}
        \hline
        \textbf{CNN Branch} & \textbf{Feed-forward Branch} \\
        \hline
        \begin{tabular}[t]{@{}l@{}}
            kernel size: 3, stride: 1, padding: 1 \\
            max pooling (kernel: 2, stride: 2, padding: 0)
        \end{tabular} & 
        \begin{tabular}[t]{@{}l@{}}
            hidden dim: 512 \\
            output dim: 128
        \end{tabular} \\
        \hline
    \end{tabular}
    \label{tab:nn_params}
\end{table}

\begin{table}[h]
\centering
\caption{Hyper-parameter setup for the neural network}
\label{tab:hyper-param-swaption}
\renewcommand{\arraystretch}{1.2} 
\resizebox{0.9\textwidth}{!}{
\begin{tabular}{p{5cm}|p{5cm}|p{4cm}} 
\hline
\textbf{2D function branch parameters} & 
\textbf{1D function branch parameters} &
\textbf{Vector feature branch parameters} \\ \hline
\begin{tabular}{@{}l@{}} 
Kernel size: (1, 3) \\ 
Stride: (1, 3) \\ 
Padding: 0 \\ 
AvgPool2d kernel size: (2, 2) \\ 
AvgPool2d stride: (2, 2) \\ 
AvgPool2d padding: 0 
\end{tabular} & 
\begin{tabular}{@{}l@{}} 
Kernel size: 10 \\ 
Stride: 3 \\ 
Padding: 0 \\ 
AvgPool1d kernel size: 2 \\ 
AvgPool1d stride: 2 \\ 
AvgPool1d padding: 0 
\end{tabular} & 
\begin{tabular}{@{}l@{}} 
Input dim: 9 \\
Hidden dim: 512 \\ 
Output dim: 128 
\end{tabular} \\ \hline
\multicolumn{3}{c}{\textbf{Feed-forward Synthesizer Parameters}} \\ \hline
\multicolumn{3}{c}{
\begin{tabular}{@{}l@{}} 
Hidden dim: 128 \\ 
Output dim: 1 
\end{tabular}
}
\\ \hline
\end{tabular}
}
\end{table}

\section{Review of HJM }\label{hjmappendix}
The HJM model directly describes the evolution of the entire forward rate curve, offering greater flexibility than traditional short-rate models. For illustration purposes, we focus on a one-factor specification with exponential volatility structure \citep{glasserman2013monte}, though the framework readily extends to multi-factor cases. 

Consider the problem of pricing a swaption. This is a contract granting its holder the right, but not the obligation, to enter into an interest rate swap at a future date. In a standard interest rate swap, one party agrees to pay a fixed rate while receiving a floating rate, and the other party does the opposite. Consider a swap with \( n_p \) fixed payment periods, each of length \(\Delta t'\), starting at time \( t'_0 \) and ending at time \( t'_{n_p} = t'_0 + \sum_{l=1}^{n_p} \Delta t' \). The value of this swap at time \( t'_0 \) is:
\[
V_{t'_0} = C \left( R \sum_{l=1}^{n_p} B(t'_0, t'_l) \Delta t' + B(t'_0, t'_{n_p}) - 1 \right),
\]
where \( C \) is the notional amount (contract size), \( R \) is the contractual fixed rate, or equivalently the swaption strike, and \( B(t'_0, t'_l) \) is the discount factor from \( t'_0 \) to \( t'_l \). A swaption provides the holder with the option to enter into this swap at \( t'_0 \).
The payoff of the swaption is simply
$$\max(0, V_{t'_0}),$$
and its expectation under the risk-neutral measure gives the price of swaptions. To specify the risk neutral measure, one needs the forward rate process needed to price the bond. The time $t$ price of a zero-coupon bond \( B(t,T) \) maturing at time \( T \) is given by the negative exponential of the cumulative forward rate \( f(t,u) \) as:
$B(t,T) = \exp\left(-\int_t^T f(t,u)\,du\right)$, or equivalently $\frac{\partial \log B(t,T)}{\partial T} = -f(t,T).$ The HJM framework \citep{glasserman2013monte} models the dynamics of forward rate curve directly:
\[
df(t,T) = \mu(t,T) \, dt + \sigma(t,T)^\top dW(t),
\]
where \(\mu(t,T)\) is the drift, \(\sigma(t,T)\) is the volatility function of the forward rate, and \( W(t) \) is a Brownian motion. In contrast to short-rate models (e.g.,  \cite{vasicek1977equilibrium} or \cite{cox1985theory}), which only model the dynamics of the short-term interest rate, the HJM model directly models the dynamics of the entire term structure of interest rates \citep{heath1992bond}. The HJM model is widely used in practice because of its flexibility in modeling interest rate derivatives like swaptions and its ability to incorporate complex volatility structures \citep{brigo2006interest}. However, the model's generality also leads to the need for sophisticated numerical methods for simulation \citep{glasserman2013monte}. A key property of the HJM model is the no-arbitrage condition \citep{glasserman2013monte}, which specifies the drift completely by the volatility:
\begin{equation}\label{noarb}
\mu(t,T) = \sigma(t,T)^\top \int_t^T \sigma(t,u) \, du.
\end{equation}
Thus, in the HJM framework, the model is fully specified by defining the initial forward rate curve $f(0,T)$ and the structure of the volatility $\sigma(t,T)$. In our experiment we used a simple one factor HJM for illustration.

\end{APPENDIX}

\end{document}